\newcommand{\bl}[1]{\textcolor{blue}{#1}}
\newcommand{\red}[1]{\textcolor{red}{#1}}
\definecolor{mypurple}{rgb}{.4,.0,.5}
\def\y{{\bf y}}
\def\x{{\bf x}}
\def\x{{\mathbf x}}
\def\u{{\bf u}}
\def\x{{\bf x}}
\def\y{{\bf y}}
\def\z{{\bf z}}
\def\q{{\bf q}}
\def\m{{\bf m}}
\def\b{{\bf b}}
\def\c{{\bf c}}
\def\d{{\bf d}}
\def\h{{\bf h}}
\def\cH{{\mathcal H}}
\def\be{\begin{equation}}
\def\ee{\end{equation}}
\def\ba{\left[\begin{array}}
\def\ea{\end{array}\right]}
\def\u{{\bf u}}
\def\x{{\bf x}}
\def\y{{\bf y}}
\def\z{{\bf z}}
\def\q{{\bf q}}
\def\b{{\bf b}}
\def\c{{\bf c}}
\def\d{{\bf d}}
\def\p{{\bf p}}
\def\1{{\bf 1}}
\def\G{{\bf G}}
\def\0{{\bf 0}}
\def\erf{\mbox{erf}}
\def\erfc{\mbox{erfc}}
\def\mR{{\mathbb R}}
\def\mN{{\mathbb N}}
\def\mE{{\mathbb E}}
\def\mS{{\mathbb S}}
\def\mB{{\mathbb B}}
\def\mP{{\mathbb P}}
\def\lp{\left (}
\def\rp{\right )}
\def\y{{\bf y}}
\def\x{{\bf x}}
\def\x{{\mathbf x}}
\def\u{{\bf u}}
\def\x{{\bf x}}
\def\y{{\bf y}}
\def\z{{\bf z}}
\def\q{{\bf q}}
\def\b{{\bf b}}
\def\c{{\bf c}}
\def\d{{\bf d}}
\def\h{{\bf h}}
\def\cH{{\cal H}}
\def\be{\begin{equation}}
\def\ee{\end{equation}}
\def\ba{\left[\begin{array}}
\def\ea{\end{array}\right]}
\def\u{{\bf u}}
\def\x{{\bf x}}
\def\y{{\bf y}}
\def\z{{\bf z}}
\def\q{{\bf q}}
\def\b{{\bf b}}
\def\c{{\bf c}}
\def\d{{\bf d}}
\def\p{{\bf p}}
\def\({\left (}
\def\){\right )}
\def\1{{\bf 1}}
\def\m{{\bf m}}
\def\q{{\bf q}}
\def\G{{\bf G}}
\def\0{{\bf 0}}
\def\cX{{\mathcal X}}
\def\cY{{\mathcal Y}}
\definecolor{darkgreen}{rgb}{0, 0.4,0}
\definecolor{purplebrown}{rgb}{0.5,0.1,0.6}
\definecolor{ultclupcol}{rgb}{0.1,0.5,0.5}
\definecolor{mytrycolor}{rgb}{0.5,0.7,0.2}
\definecolor{ultclupcola}{rgb}{.5,0,.5}
\definecolor{shadebrown}{rgb}{0.1,0.1,0.9}
\definecolor{lightblue}{rgb}{0.2,0,1}
\newtcbox{\xmybox}{on line,
arc=7pt,
before upper={\rule[-3pt]{0pt}{10pt}},boxrule=0pt,
boxsep=0pt,left=6pt,right=6pt,top=0pt,bottom=0pt,enhanced, coltext=blue, colback=white!10!yellow}
\newtcbox{\xmyboxa}{on line,
arc=7pt,
before upper={\rule[-3pt]{0pt}{10pt}},boxrule=0pt,
boxsep=0pt,left=6pt,right=6pt,top=0pt,bottom=0pt,enhanced, colback=white!10!yellow}
\newtcbox{\xmyboxb}{on line,
arc=7pt,
before upper={\rule[-3pt]{0pt}{10pt}},boxrule=1pt,colframe=darkgreen!100!blue,
boxsep=0pt,left=6pt,right=6pt,top=0pt,bottom=0pt,enhanced, colback=white!10!yellow}
\newtcbox{\xmyboxc}{on line,
arc=7pt,
before upper={\rule[-3pt]{0pt}{10pt}},boxrule=.7pt,colframe=blue!100!blue,
boxsep=0pt,left=6pt,right=6pt,top=0pt,bottom=0pt,enhanced, coltext=blue, colback=white!10!yellow}
\newtcbox{\xmytboxa}{on line,
arc=7pt,
before upper={\rule[-3pt]{0pt}{10pt}},boxrule=.0pt,colframe=pink!50!yellow,
boxsep=0pt,left=6pt,right=6pt,top=0pt,bottom=0pt,enhanced, coltext=white, colback=blue!40!red}
\newtcbox{\xmytboxb}{on line,
arc=7pt,
before upper={\rule[-3pt]{0pt}{10pt}},boxrule=.0pt,colframe=pink!50!yellow,
boxsep=0pt,left=6pt,right=6pt,top=0pt,bottom=0pt,enhanced, coltext=white, colback=white!40!green}
\newcommand\subsubsubsection{\@startsection{paragraph}{4}{\z@}{-2.5ex\@plus -1ex \@minus -.25ex}{1.25ex \@plus .25ex}{\normalfont\normalsize\bfseries}}
\newcommand\subsubsubsubsection{\@startsection{subparagraph}{5}{\z@}{-2.5ex\@plus -1ex \@minus -.25ex}{1.25ex \@plus .25ex}{\normalfont\normalsize\bfseries}}
\newtheorem{theorem}{Theorem}
\newtheorem{corollary}{Corollary}
\newtheorem{conjecture}{Conjecture}
\begin{document}

\begin{singlespace}

\title {Parametric RDT approach to computational gap of \emph{symmetric} binary perceptron 
}
\author{
\textsc{Mihailo Stojnic
\footnote{e-mail: {\tt flatoyer@gmail.com}}
}}
\date{}
\maketitle

\centerline{{\bf Abstract}} \vspace*{0.1in}

We study potential presence of statistical-computational gaps (SCG) in symmetric binary perceptrons (SBP) via a parametric utilization of  \emph{fully lifted random duality theory} (fl-RDT) \cite{Stojnicflrdt23}.  A structural change from decreasingly to arbitrarily ordered $\c$-sequence (a key fl-RDT parametric component) is observed  on the second lifting level and associated with \emph{satisfiability} ($\alpha_c$) -- \emph{algorithmic} ($\alpha_a$) constraints density threshold change thereby suggesting a potential existence of a nonzero computational gap $SCG=\alpha_c-\alpha_a$. The second level estimate is shown to match the theoretical $\alpha_c$ whereas the $r\rightarrow \infty$ level one is proposed to correspond to $\alpha_a$. For example, for the canonical SBP ($\kappa=1$ margin) we obtain $\alpha_c\approx 1.8159$ on the second and  $\alpha_a\approx 1.6021$ (with converging tendency towards $\sim 1.59$ range) on the seventh level. Our propositions remarkably well concur with recent literature: (\textbf{\emph{i}}) 
in \cite{Bald20} local entropy replica approach predicts $\alpha_{LE}\approx 1.58$ as the onset of clustering defragmentaion  (presumed driving force behind locally improving algorithms failures); \textbf{\emph{(ii)}} in $\alpha\rightarrow 0$ regime we obtain on the third lifting level $\kappa\approx 1.2385\sqrt{\frac{\alpha_a}{-\log\lp \alpha_a \rp}}$ which qualitatively matches overlap gap property (OGP) based predictions of \cite{GamKizPerXu22} and identically matches local entropy based predictions of \cite{BarbAKZ23}; \textbf{\emph{(iii)}} $\c$-sequence ordering change phenomenology mirrors the one observed in asymmetric binary perceptron (ABP) in \cite{Stojnicalgbp25} and the negative Hopfield model in  \cite{Stojniccluphop25}; and \textbf{\emph{(iv)}} as in \cite{Stojnicalgbp25,Stojniccluphop25}, we here design a CLuP based algorithm whose practical performance closely matches proposed theoretical predictions.

\vspace*{0.25in} \noindent {\bf Index Terms: Symmetric binary perceptrons; fl-RDT; Statistical-computational gaps}.

\end{singlespace}

\section{Introduction}
\label{sec:back}

Tremendous AI progress witnessed over the last two decades is in large part powered by machine learning (ML) and neural networks (NN) theoretical and algorithmic developments of several preceding decades. Classical perceptrons (\emph{binary} (BP) or \emph{spherical} (SP)) played a key role in these developments either as integral architectural NN components or as simple prototype models used to conceptually emulate more complex structures. As their sufficient simplicity and satisfactory generality  usually provide key advantages reagrding analytical tractability and faithful emulation of artificial reasoning, it is a no surprise that to this day they remain among the most studied  AI concepts.

While many preceptrons' features have been thoroughly considered, storage/classifying capacity is likely the most well known one. It represents critical data density $\alpha_c$ below/above which perceptron succeeds/fails to operate as a storing memory. Relevance of capacity and its analytical characterizations was recognized in the earliest pattern recognition days \cite{Wendel62,Winder,Cover65}. These works effectively established a strong foundation for understanding the importance of analytical considerations that followed in the ensuing decades. Many of them would go on to interconnect various seemingly distant fields ranging from logic, cognitive thinking, and psychology to optimization, information theory, and statistical physics. As a consequence, it is a no stretch to say that modern AI is unimaginable without a strong and scientifically diverse theoretical support.

\subsection{Theoretic vs algorithmic achievability }
\label{secanaldiff}

A large body of very influential work
\cite{Gar88,GarDer88,SchTir02,SchTir03,Tal05,Talbook11a,Talbook11b,StojnicGardGen13,StojnicGardSphNeg13} followed into the footsteps of the initial SP considerations \cite{Wendel62,Winder,Cover65}. Overall understanding deepened and focus often shifted to features and architectures  beyond capacities and simple perceptrons. The so-called  \emph{positive} spherical perceptrons (PSP),   became particularly well  understood. Analytical studies \cite{SchTir02,SchTir03,StojnicGardGen13} exploited  presence of strong deterministic duality/convexity and rigorously proved replica predictions \cite{Gar88,GarDer88} thereby significantly superseding pioneering works \cite{Wendel62,Winder,Cover65}. On the other hand, absence  of such features was typically perceived as  analytically unsurpassable obstacle on the path towards repeating success of \cite{SchTir02,SchTir03,StojnicGardGen13}. The negative spherical perceptron (NSP) distinguished itself as a prime example in that regard. A seemingly tiny change from positive to negative threshold disallows convexity advantage, makes analytical considerations much harder \cite{StojnicGardSphNeg13,FPSUZ17,FraHwaUrb19,FraPar16,FraSclUrb19,FraSclUrb20,AlaSel20,BMPZ23}, and all but ensures that more sophisticated approaches are needed \cite{Stojnicsflgscompyx23,Stojnicnflgscompyx23,Stojnicflrdt23,Stojnicnegsphflrdt23}.

Convexity and the strong deterministic duality are similarly absent in BPs as well. Simple replica symmetric (RS) predictions \cite{Gar88,GarDer88,StojnicDiscPercp13} do not hold for \emph{asymmetric} BP (ABP) and, more involved,  replica symmetry breaking (RSB) ones from \cite{KraMez89} are needed \cite{DingSun19,NakSun23,BoltNakSunXu22,Huang24,Stojnicbinperflrdt23}. On the other hand, when it comes to  \emph{symmetric} BP (SBP) of interest here things are a bit more interesting. While RS predictions again do not hold, remarkably favorable underlying combinatorics allows for fairly elegant analytical characterizations \cite{AbbLiSly21b,PerkXu21,AbbLiSly21a,AubPerZde19,GamKizPerXu22}.

The capacity notion mentioned above (and present in a majority of discussed papers) relates to theoretical achievability limits. Its value $\alpha_c$ is often called \emph{satisfiability threshold} and it represents the maximal data (constraints) density that one can achieve if unlimited computational resources are available. Whether one can approach those limits in a  computationally efficient manner is a different question. Thinking along these lines naturally motivates the introduction of \emph{algorithmic threshold}, $\alpha_a$, as a critical (maximal) constraints density achievable via computationally efficient methods (in other words, $\alpha_a$ is the maximal density for which perceptron's weights can be determined efficiently). While it is clear that  $\alpha_a\leq \alpha_c$ always holds, it is not obvious at all whether  $\alpha_a< \alpha_c$ in which case one says that there is a \emph{statistical-computational} gap (SCG). The gap size, $SCG=\alpha_c- \alpha_a$, in a way measures utilization of the perceptron's predicated power.

Determining the size of SCG is an extraordinary challenge. In classical \emph{worst-case} complexity theory ABP and SBP (or its discrepancy minimization equivalent) are NP problems \cite{Ama91} which likely implies existence of a gap and $\alpha_a< \alpha_c$. However,  as a \emph{worst case} concept,  NP-ness rarely properly reflects on \emph{typical} algorithmic solvability (without a doubt SCGs are way more reflective in that regard). Moreover, efficient ABP algorithms actually do exist in a large portion of $\alpha< \alpha_c$ range  \cite{BrZech06,BaldassiBBZ07,Hubara16,KimRoc98}. For example, while ABP's $\alpha_c\approx 0.8331$, the best available algorithms \cite{BaldassiBBZ07,Bald15,BMPZ23} suggest  $\alpha_{a}\approx 0.75 - 0.77$. With all pointers towards the existence of a gap, it should be noted that it is way more favorable than the one NP theory predicts (examples of other problems (including planted ones) with similar algorithmic implications can be found in, e.g., \cite{MMZ05,GamarSud14,GamarSud17,GamarSud17a,AchlioptasR06,AchlioptasCR11,GamMZ22,BarbierKMMZ18,KMSSZ12a}). For SBPs things are a bit different. In the most well studied $\alpha\rightarrow 0$ regime the best known algorithms \cite{BanSpen20} are in the range $\kappa\sim\sqrt{\alpha}$ which is way below the corresponding $\alpha_c$ scaling.

\subsection{Prior work and contextualization of our contributions}
\label{sec:examples}

Despite extensive studies across a variety of scientific fields over the last two decades computational gaps are still a mystery. Strong progress has been made on many particular problems but a generic resolution remains unreachable. We here briefly review two approaches that gained a lot of attention in recent years. They are focused on typical/atypical solutions clustering and allowed for many excellent results that are of independent value as well. The first approach relates to \emph{Overlap gap property} (OGP) \cite{Gamar21,GamarSud14,GamarSud17,GamarSud17a,AchlioptasCR11,HMMZ08,MMZ05} and the second one to \emph{Local entropy} (LE) \cite{Bald15,Bald16,Bald20,BarbAKZ23,Stojnicabple25}.

The OGP approach \cite{Gamar21,GamarSud14,GamarSud17,GamarSud17a,AchlioptasCR11,HMMZ08,MMZ05} proposes a connection between  algorithmic efficiency and gaps in the spectrum of attainable solutions (near-solutions) overlaps. It generically postulates that absence of such gaps implies existence of efficient algorithms and views $\alpha_a$ as the maximal $\alpha$ which ensures OGP's absence. For example, for SBP the OGP's presence extends well below $\alpha_c$ \cite{GamKizPerXu22,Bald20} (for analogous discrepancy minimization results see \cite{GamKizPerXu23}). Provided that  OGP is indeed algorithmically relevant, this would strongly suggest that SCG indeed exists. However, a couple of things should be kept in mind regarding OGP: (\textbf{\emph{i}}) the generic OGP hardness implications are disproved via the shortest path counterexample \cite{LiSch24} (earlier simple algebraic disproving examples were viewed as exceptions); and (\textbf{\emph{ii}})  \cite{LiSch24} does not disprove OGP's relevance for other problems or specific algorithms. Probably the best hard example discovered so far where OGP concepts might be in full power is the famous $2$-spin Ising Sherrington-Kirkpatrick (SK) model \cite{SheKir72}. As Monatanari showed in \cite{Montanari19},  a widely believed absence of OGP directly implies polynomial solvability of this model (for corresponding $p$-spin considerations see, e.g.,  \cite{AlaouiMS22,AlaouiMS21}; for earlier spherical SK models  related results see, e.g.,  \cite{Subag17,Subag17a,Subag21,Subag24} and for closely connected martingale based reversals of Parisi functional, see, e.g. \cite{JCM25}; for corresponding NSP discussions see, e.g., \cite{AlaSel20,AMZ24}; and for importance of more sophisticated OGPs see, e.g., \cite{Kiz23,HuangS22}). While at present the OGP's role in generic algorithmic hardness remains undetermined, its presence disallows efficient implementations for many specific algorithmic classes  \cite{GamKizPerXu22}. Moreover,  for many problems \cite{RahVir17,GamarSud14,GamarSud17,GamAW24,Wein22},  practical algorithms exist in $\alpha$ ranges where OGP is absent.

Differently from OGP, \cite{Huang13,Huang14} connect  algorithmic hardness clustering  relevance to entropies of \emph{typical} solutions. A completely \emph{frozen} isolation of typical solutions is predicated (and proven for SBP in \cite{PerkXu21,AbbLiSly21a,AbbLiSly21b}).  \cite{Bald15,Bald16,Bald20}  propose a \emph{local entropy} (LE) approach to study  \emph{atypical} well-connected clusters. Even if predominant typical solutions are disconnected  (and likely unreachable via local searches) \cite{Huang13,Huang14,PerkXu21,AbbLiSly21b}, one may still have rare (atypical) well-connected clusters. Those rare clusters are then predicated as precisely those found by the efficient algorithms (for an SBP's sampling type of justification seemingly along these lines, see \cite{ElAlGam24}). Provided that such a pictorial portrayal is correct, a direct correlation between the existence of SCGs and properties of rare clusters is then very likely. Moreover, \cite{Bald15,Bald16,Bald20} also speculate that LE features (negativity, monotonicity, and breakdown) might be reflections of rare clusters' structures impact on algorithmic hardness. Such a phenomenology is further supported by results of \cite{AbbLiSly21a} where SBPs maximal diameter clusters are shown to exist with sufficiently small $\alpha$. Moreover, \cite{AbbLiSly21a} also showed that similar clusters  (albeit of linear diameter)  exist for any $\alpha<\alpha_c$ (modulo additional technical assumptions, \cite{AbbLiSly21a}'s SBP results do translate to ABP as well). Reconnecting LE back to OGP, one notes that the small $\alpha$ SBP LE results (both rigorous contiguous and 1RSB ones) are shown in \cite{BarbAKZ23} to scaling-wise match \cite{GamKizPerXu22}'s OGP predictions  (modulo a log term, they also correspond to the best known algorithmic once achieved in  \cite{BanSpen20}). Such a nice OGP -- LE correspondence seems fairly reassuring and might  present steps in the right direction towards demystifying presumable connection between these phenomena and SCGs.

\subsubsection{Our contributions}
\label{sec:cont}

We focus on a different approach and study potential presence of SCG  via a parametric utilization of  \emph{fully lifted random duality theory} (fl-RDT) \cite{Stojnicflrdt23}. A key fl-RDT parametric component -- so-called $\c$-sequence -- is observed to exhibit a structural ordering change after the second lifting level with a perfect (natural/physical) decreasing order abruptly disappearing. This is then connected to a change from $\alpha_c$ to $\alpha_a$ and a potential existence of a nonzero computational gap. The second level estimate turns out to identically match the theoretical $\alpha_c$ whereas the corresponding $r\rightarrow \infty$ level one is predicated to correspond to $\alpha_a$. While such a proposition extends to any margin $\kappa$,  canonical $\kappa=1$ margin (typically singled out in prior literature as a convenient benchmark) allows to make numerically concrete observations. Namely, in $\kappa=1$ scenario, our proposition produces  $\alpha_c\approx 1.8159$ on the second lifting level  which matches the theoretical \emph{satisfiability threshold}  (the matching actually extends to any $\kappa$). Also, on the seventh level we obtain $\alpha_a\approx 1.6021$ and observe a convergence tendency with estimate for $r\rightarrow \infty$ level $\alpha_a\sim 1.59-1.60$. This, on the other hand,  closely approaches   local entropy replica prediction $\alpha_{LE}\approx 1.58$ for clustering defragmentation postulated to be directly related to \emph{algorithmic threshold}  \cite{Bald20}. Moreover, in $\alpha\rightarrow 0$ regime we find on the third lifting level $\kappa\approx 1.2385\sqrt{\frac{\alpha_a}{-\log\lp \alpha_a \rp}}$ which scaling-wise matches OGP based predictions of \cite{GamKizPerXu22} and identically (constant-wise) matches local entropy based predictions of \cite{BarbAKZ23}. In other words, we further reconfirm the above mentioned OGP-LE connection between \cite{GamKizPerXu22} and \cite{BarbAKZ23}. Finally, the very same $\c$-sequence ordering change phenomenology is observed in ABP and negative Hopfield models in \cite{Stojnicalgbp25,Stojniccluphop25} suggesting both (\textbf{\emph{i}}) a potentially universal power of the proposed methodology possibly utilizable in other problems and (\textbf{\emph{ii}}) a likely presence of a generic underlying mechanism that drives the whole story. Adequate conjectures aligned with these observations are formulated as well. Finally, analytical predictions are complemented with the design of a CLuP (\emph{controlled loosening-up}) based algorithm whose practical performance remarkably closely approaches proposed theoretical results. This is further observed to conveniently match \cite{Stojnicalgbp25,Stojniccluphop25} where CLuP like algorithms with performances closely mimicking the theoretical ones were designed as well.

\section{SBP -- mathematical formulations and capacity}
 \label{sec:bprfps}

\subsection{SBP as a feasibility, optimization, and free energy problem}
 \label{sec:bprfps}

For two positive integers $m$ and $n$, $\alpha=\frac{m}{n}$, $G\in\mR^{n\times n}$, $\b\in\mR^{m\times 1}$, and $\cX\in\mR^n$, we consider the following \emph{feasibility} problems
\begin{eqnarray}
\hspace{-1.5in}\mbox{$\mathbf{\mathcal F}(G,\b,\cX,\alpha)$:} \hspace{1in}\mbox{find} & & \x\nonumber \\
\mbox{subject to}
& & G\x\geq \b \nonumber \\
& & \x\in\cX. \label{eq:ex1}
\end{eqnarray}
Throughout paper, we operate in large dimensional \emph{linear/proportional} regime with constraint density $\alpha= \lim_{n\rightarrow\infty} \frac{m}{n}$ remaining constant as $m$ and $n$ grow. Mathematical representations of many of the perceptron types mentioned earlier can be deduced as special cases of (\ref{eq:ex1}). For example, taking $\cX=\{\x | \| \x \|_2=1\} \triangleq \mS^m$ gives PSP when $\b\geq 0$  \cite{StojnicGardGen13,GarDer88,Gar88,Schlafli,Cover65,Winder,Winder61,Wendel62,Cameron60,Joseph60,BalVen87,Ven86,SchTir02,SchTir03} and NSP when $\b< 0$ \cite{AMZ24,BMPZ23,FPSUZ17,Talbook11a,FraHwaUrb19,FraPar16,FraSclUrb19,FraSclUrb20,AlaSel20,StojnicGardSphNeg13,Stojnicnegsphflrdt23}. Constraining $\cX$ to a binary cube, i.e., taking $\cX=\left \{-\frac{1}{\sqrt{n}},\frac{1}{\sqrt{n}} \right \}^n \triangleq \mB^n$ gives ABP \cite{Talbook11a,StojnicGardGen13,GarDer88,Gar88,StojnicDiscPercp13,KraMez89,GutSte90,KimRoc98,NakSun23,BoltNakSunXu22,PerkXu21,CXu21,DingSun19,Huang24,Stojnicbinperflrdt23,LiSZ24}.

Of our interest here are SBPs \cite{AubPerZde19,AbbLiSly21a,AbbLiSly21b,Bald20,GamKizPerXu22,PerkXu21,ElAlGam24,SahSaw23,Barb24,djalt22,BarbAKZ23}. They also follow as a specialization of (\ref{eq:ex1}) after one sets $\cX=\mB^n$ and additionally remodels linear constraints so that they become $|G\x |\leq \b$. This formally  renders the following feasibility variant of (\ref{eq:ex1})  
\begin{eqnarray}
\hspace{-1.5in}\mbox{$\mathbf{\mathcal S}(G,\b,\alpha)$:} \hspace{1in}\mbox{find} & & \x\nonumber \\
\mbox{subject to}
& & |G\x|\leq \b \nonumber \\
& & \x\in \left \{-\frac{1}{\sqrt{n}},\frac{1}{\sqrt{n}} \right \}^n \triangleq \mB^n, \label{eq:ex1a0a0}
\end{eqnarray}
which represents a generic (non-uniform) threshold SBP. Specializing further to $\b=\kappa \1$ (with $\1$ being the $m$-dimensional  column vector of all ones) gives the traditional (fixed/uniform threshold) SBP formulation
\begin{eqnarray}
\hspace{-1.5in}\mbox{$\mathbf{\mathcal S}(G,\kappa,\alpha)$:} \hspace{1in}\mbox{find} & & \x\nonumber \\
\mbox{subject to}
& & |G\x|\leq \kappa \1 \nonumber \\
& & \x\in \left \{-\frac{1}{\sqrt{n}},\frac{1}{\sqrt{n}} \right \}^n \triangleq \mB^n. \label{eq:ex1a0}
\end{eqnarray}
Depending on whether $G$ is random or not one has statistical or deterministic perceptrons. Our focus will be on the classical Gaussian perceptrons where the components of $G$ are independent standard normals. We should also add that the following, so-called  discrepancy minimization problems \cite{KaKLO86,Spen85,LovMek15,GamKizPerXu23,Roth17,AlwLiuSaw21},  typically encountered in computer science literature, are often taken as alternative SBP formulations
\begin{eqnarray}
\hspace{-1.5in}\mbox{$\mathbf{\mathcal D}(G,\alpha)$:} \hspace{1in}\min_{\x} & & |\G\x| \nonumber \\
\mbox{subject to}
 & & \x\in \left \{-\frac{1}{\sqrt{n}},\frac{1}{\sqrt{n}} \right \}^n \triangleq \mB^n. \label{eq:ex1a1}
\end{eqnarray}
From the optimization point of view problems (\ref{eq:ex1a0}) and (\ref{eq:ex1a1})  are fundamentally different. The former is a feasibility problem whereas the latter is a standard objective minimization. Nonetheless, $\mathbf{\mathcal D}(G,\alpha)$ to a large degree emulates $\mathbf{\mathcal S}(G,\kappa,\alpha)$ and can be used to solve it. Namely, the optimal objective value of  (\ref{eq:ex1a1})  is the minimal $\kappa$ for which $\mathbf{\mathcal S}(G,\kappa,\alpha)$ is feasible. This basically means that to check feasibility of  (\ref{eq:ex1a0}) it is sufficient to solve  (\ref{eq:ex1a1}) and check whether $\kappa$ in  (\ref{eq:ex1a0}) upper bounds the obtained objective optimum. 

In what follows we focus on traditional form (\ref{eq:ex1a0}). After rewriting it as 
\begin{eqnarray}
\hspace{-1.5in}\mbox{$\mathbf{\mathcal S}(G,\kappa,\alpha)$:} \hspace{1in}\mbox{find} & & \x\nonumber \\
\mbox{subject to}
& & G\x = \z \nonumber \\
& & \|\z \|_{\infty} \leq \kappa \nonumber \\
& & \x\in \left \{-\frac{1}{\sqrt{n}},\frac{1}{\sqrt{n}} \right \}^n \triangleq \mB^n, \label{eq:ex1a2}
\end{eqnarray}
we associate to it the following 
\begin{eqnarray}
\xi_{SBP}
& \triangleq  &
 \min_{\x\in \mB^n, \| \z \|_{\infty} \leq \kappa} \max_{\y\in\mS_+^m}  \lp \y^TG\x - \kappa \y^T\1 \rp,
 \label{eq:ex3}
\end{eqnarray}
where $\mS_+^m$ is the  positive orthant part of the $m$-dimensional unit sphere  (i.e., $\mS_+^m=\{\y|\|\y\|_2=1,\y\geq 0\}$). One 
then has 
\begin{eqnarray}
  \mathbf{\mathcal S}(G,\kappa,\alpha)  \mbox{ is infeasible }  \quad \Longleftrightarrow   \quad\xi_{SBP}>0 . \label{eq:ex1a3a0}
\end{eqnarray}
This further implies that for all practical purposes  the optimal objective seeking (\ref{eq:ex3}) is basically equivalent to  and can replace  $\mathbf{\mathcal S}(G,\kappa,\alpha)$.

The above feasibility and standard optimization formulations can also be incorporated into a free energy context. Consider   Hamiltonian
\begin{equation}
\cH(G)= \y^TG\x,\label{eq:ham1}
\end{equation}
and associated (virtual) partition function
\begin{equation}
Z(\beta,G)=\sum_{\x\in\cX} \lp \sum_{\y\in\cY}e^{\beta \lp  \cH(G) + f(\y)  \rp }\rp^{-1},  \label{eq:partfun}
\end{equation}
with (for the time being) general sets $\cX$ and $\cY$. Let the corresponding average thermodynamic limit (``\emph{reciprocal}'') free energy be
\begin{eqnarray}
f_{sq}(\beta) & = & \lim_{n\rightarrow\infty}\frac{\mE_G\log{(Z(\beta,G)})}{\beta \sqrt{n}}
=\lim_{n\rightarrow\infty} \frac{\mE_G\log\lp \sum_{\x\in\cX} \lp \sum_{\y\in\cY}e^{\beta \lp   \cH(G)  + f(\y) \rp   }\rp^{-1}\rp}{\beta \sqrt{n}} \nonumber \\
& = &\lim_{n\rightarrow\infty} \frac{\mE_G\log\lp \sum_{\x\in\cX} \lp \sum_{\y\in\cY}e^{\beta \lp   \y^TG\x   + f(\y) \rp  )}\rp^{-1}\rp}{\beta \sqrt{n}}.\label{eq:logpartfunsqrt}
\end{eqnarray}
Specialization $\beta\rightarrow \infty$ gives the ground state energy (GSE)
\begin{eqnarray}
f_{sq}(\infty)   \triangleq    \lim_{\beta\rightarrow\infty}f_{sq}(\beta) & = &
\lim_{\beta,n\rightarrow\infty}\frac{\mE_G\log{(Z(\beta,G)})}{\beta \sqrt{n}}
=
 \lim_{n\rightarrow\infty}\frac{\mE_G \max_{\x\in\cX}  -  \max_{\y\in\cY} \lp \y^TG\x  + f(\y)\rp   }{\sqrt{n}}
 \nonumber \\
& = & - \lim_{n\rightarrow\infty}\frac{\mE_G \min_{\x\in\cX}  \max_{\y\in\cY} \lp \y^TG\x  + f(\y)  \rp    }{\sqrt{n}}\nonumber \\
& = & - \lim_{n\rightarrow\infty}\frac{\mE_G  \xi_0(f,\cX,\cY)  }{\sqrt{n}},
  \label{eq:limlogpartfunsqrta0}
\end{eqnarray}
with
\begin{eqnarray}
\xi_0(f,\cX,\cY) & \triangleq &  \min_{\x\in\cX}  \max_{\y\in\cY} \lp \y^TG\x  + f(\y)  \rp .
  \label{eq:limlogpartfunsqrta0b0}
\end{eqnarray}
Structural similarity of $\xi_0(f,\cX,\cY) $ and  $\xi_{SBP}$  provides a direct connection between $f_{sq}(\infty)$ and $\xi_{SBP}$. In other words, if one can determine GSE  $f_{sq}(\infty)$ then $\xi_{SBP}$ would be determined as well. Knowing $\xi_{SBP}$ would allow to utilize (\ref{eq:ex1a3a0}) and determine feasibility of $\mathbf{\mathcal S}(G,\kappa,\alpha) $  (i.e., it would allow to determine whether SBP can operate as a storage memory). However, computing $f_{sq}(\infty)$ directly is usually not easy. We instead focus on studying $f_{sq}(\beta)$ for a general $\beta$ and later on specialize to $\beta\rightarrow\infty$ GSE regime. Since this specialization eventually awaits, throughout the ensuing analysis we often neglect terms without GSE relevance.

\subsection{Capacity and computational gaps}
 \label{sec:caprole}

We now mathematically formalize some of the concepts mentioned in previous sections. As stated earlier, the key measure that determines perceptron's efficiency is its \emph{storage/classifying capacity}. It is defined as the maximal constraints density for which $ \mathbf{\mathcal S}(G,\kappa,\alpha)$ is infeasible. Since we operate in random mediums capacity is a  random quantity and needs to be introduced accordingly. Following to a degree practice established earlier for  ABP \cite{StojnicGardGen13,StojnicDiscPercp13,Stojnicbinperflrdt23,Stojnicalgbp25}  we introduce  SBP's \emph{statistical} storage capacity as
 \begin{eqnarray}
\alpha & = &    \lim_{n\rightarrow \infty} \frac{m}{n}  \nonumber \\
\alpha_c(\kappa) 
& \triangleq & \min \{\alpha |\hspace{.08in}  \lim_{n\rightarrow\infty}\mP_G\lp {\mathcal S}(G,\kappa,\alpha) \hspace{.07in}\mbox{is infeasible} \rp\longrightarrow 1\}
 \nonumber \\
& = &  \min \{\alpha |\hspace{.08in}  \lim_{n\rightarrow\infty}\mP_G\lp  \xi_{SBP}>0\rp\longrightarrow 1\}.
  \label{eq:ex4}
\end{eqnarray}
Throughout the paper subscripts next to $\mP$ and/or $\mE$ (if present) denote the underlying source of randomness (in (\ref{eq:ex4}),  it is $G$). As is clear that all our analytical considerations will be in a statistical context, to facilitate writing, we most often skip reemphasizing  that statements hold with high probability. 

Practically speaking, the above defined capacity represents the critical constraints density below which SBP properly operates. In statistical contexts of interest here, it also highlights the phase-transitioning component of the underlying randomness. For example, for $\alpha>\alpha_c$  ${\mathcal S}(G,\kappa,\alpha)$ is infeasible or in the so-called UNSAT phase.  Analogously,  ${\mathcal S}(G,\kappa,\alpha)$ is feasible and in the SAT phase when $\alpha<\alpha_c$. At $\alpha_c$ a transition between these phases occurs and an exponentially large number of ${\mathcal S}(G,\kappa,\alpha)$'s solutions from SAT phase  shrinks to an empty set in the UNSAT phase \cite{KraMez89,NakSun23,BoltNakSunXu22,DingSun19,Huang24,Stojnicbinperflrdt23}. As mentioned earlier, this is the so-called \emph{satisfiability threshold} and represent the theoretical limit of SBP's storage/classifying power. Utilization of that power is possible only if  perceptron weights $\x$ can be determined. Whenever SBP is feasible (SAT phase) this is doable albeit not necessarily in a computationally efficient manner. This allows to distinguish between two $\alpha$ regimes within the SAT phase. To properly introduce them, we first define the so-called \emph{algorithmic threshold} $ \alpha_a(\kappa)$
 \begin{equation}
 \alpha_a(\kappa) 
= \alpha_{\underline{a}}(\kappa) 
 \triangleq  \max \{\alpha |\hspace{.08in}  \lim_{n\rightarrow\infty}\mP_G\lp \mbox{a feasible $\x$ in $ {\mathcal S}(G,\kappa,\alpha)$ can be found in polynomial time} \rp\longrightarrow 1\}.
  \label{eq:ex4a0}
\end{equation}
For completeness we also set
 \begin{equation}
 \alpha_{\bar{a}}(\kappa) 
 \triangleq  \min \{\alpha |\hspace{.08in}  \lim_{n\rightarrow\infty}\mP_G\lp \mbox{no feasible $\x$ in $ {\mathcal S}(G,\kappa,\alpha)$ can be found in polynomial time} \rp\longrightarrow 1\}.
  \label{eq:ex4a0a0}
\end{equation}
One then has 
\begin{eqnarray}
 \alpha_a(\kappa)  =  \alpha_{\underline{a}}(\kappa)  \leq  \alpha_{\bar{a}}(\kappa)  \leq \alpha_c(\kappa).
  \label{eq:ex4a1}
\end{eqnarray}
Under concentrations one might  further expect $\alpha_{\underline{a}}(\kappa)  =  \alpha_{\bar{a}}(\kappa) $ and instead of (\ref{eq:ex4a1}) just focus on
\begin{eqnarray}
 \alpha_a(\kappa)    \leq \alpha_c(\kappa).
  \label{eq:ex4a1a0}
\end{eqnarray}
However, it is an extraordinary challenge to determine whether the remaining inequality is strict. If it is, one has nonzero \emph{statistical-computational gap} (SCG)
 \begin{eqnarray}
SCG \triangleq  \alpha_c(\kappa) -\alpha_a(\kappa).
  \label{eq:ex4a2}
\end{eqnarray}
Settling the above question has direct implications not only on classical P vs NP but on its stronger statistical variants as well. We below propose a parametric RDT approach to tackle this challenge.

\section{Attacking SBP via (parametric) RDT}
\label{sec:implem}

Before being able to explicitly establish SBP--RDT connection we need a few technical preliminaries. 

\subsection{RDT preliminaries}
\label{sec:randlincons}

 We start by noting that the free energy from (\ref{eq:logpartfunsqrt}),
\begin{eqnarray}
f_{sq}(\beta) & = &\lim_{n\rightarrow\infty} \frac{\mE_G\log\lp \sum_{\x\in\cX} \lp \sum_{\y\in\cY}e^{\beta \lp \y^TG\x +f(\y) \rp }\rp^{-1}\rp}{\beta \sqrt{n}},\label{eq:hmsfl1}
\end{eqnarray}
is effectively a statistics of a bilinearly indexed random process $\y^TG\x$.  Powerful results related to processes of this type are developed in \cite{Stojnicsflgscompyx23,Stojnicnflgscompyx23,Stojnicflrdt23}. To make them applicable here we follow into the footsteps of \cite{Stojnicbinperflrdt23,Stojnicalgbp25}.  To that end, we take $r\in\mN$ and consider ordered vectors $\p=[\p_0,\p_1,\dots,\p_{r+1}]$, $\q=[\q_0,\q_1,\dots,\q_{r+1}]$, and $\c=[\c_0,\c_1,\dots,\c_{r+1}]$ with
 \begin{eqnarray}\label{eq:hmsfl2}
1=\p_0\geq \p_1\geq \p_2\geq \dots \geq \p_r\geq \p_{r+1} & = & 0 \nonumber \\
1=\q_0\geq \q_1\geq \q_2\geq \dots \geq \q_r\geq \q_{r+1} & = &  0,
 \end{eqnarray}
$\c_0=1$, $\c_{r+1}=0$. Let sets $\cX$ and $\cY$ be subsets of $\mS^n$  and  $\mS^m$, respectively (as earlier, $\mS^p$ stands for the unit sphere in $\mR^p$). Moreover, for $k\in\{1,2,\dots,r+1\}$ we set ${\mathcal U}_k\triangleq [u^{(4,k)},\u^{(2,k)},\h^{(k)}]$  where components of  $u^{(4,k)}\in\mR$, $\u^{(2,k)}\in\mR^m$, and $\h^{(k)}\in\mR^n$ are standard normals independent among themselves. For $s\in\{-1,1\}$ and a given function $f_S(\cdot):\mR^n\rightarrow R$ we set
  \begin{eqnarray}\label{eq:fl4}
\bar{\psi}_{S,\infty}(f_{S},\cX,\cY,\p,\q,\c,s)  =
 \mE_{G,{\mathcal U}_{r+1}} \frac{1}{n\c_r} \log
\lp \mE_{{\mathcal U}_{r}} \lp \dots \lp \mE_{{\mathcal U}_3}\lp\lp\mE_{{\mathcal U}_2} \lp \lp Z_{S,\infty}\rp^{\c_2}\rp\rp^{\frac{\c_3}{\c_2}}\rp\rp^{\frac{\c_4}{\c_3}} \dots \rp^{\frac{\c_{r}}{\c_{r-1}}}\rp, \nonumber \\
 \end{eqnarray}
with
\begin{eqnarray}\label{eq:fl5}
Z_{S,\infty} & \triangleq & e^{D_{0,S,\infty}} \nonumber \\
 D_{0,S,\infty} & \triangleq  & \max_{\x\in\cX } s \max_{\y\in\cY }
 \lp \sqrt{n} f_{S}
+\sqrt{n}      \lp\sum_{k=2}^{r+1}c_k\h^{(k)}\rp^T\x
+ \sqrt{n}  \y^T\lp\sum_{k=2}^{r+1}b_k\u^{(2,k)}\rp \rp \nonumber  \\
 b_k & \triangleq & b_k(\p,\q)=\sqrt{\p_{k-1}-\p_k} \nonumber \\
c_k & \triangleq & c_k(\p,\q)=\sqrt{\q_{k-1}-\q_k}.
 \end{eqnarray}
Equipped with the above definitions we are in position to recall on the following fundamental fl-RDT theorem (as we will see later on, the theorem is a key component that allows to connect SBP and RDT).

\begin{theorem} \cite{Stojnicbinperflrdt23,Stojnicflrdt23,Stojnicalgbp25}
\label{thm:thmsflrdt1}  Under so-called proportional regime with  $\alpha=\lim_{n\rightarrow\infty} \frac{m}{n}$ remaining constant as  $n$ grows, let elements of $G\in\mR^{m\times n}$ be independent standard normals and let $\cX\subseteq \mS^n$ and $\cY\subseteq \mS^m$ be given sets (subsets of corresponding unit spheres).  Assume that equations (\ref{eq:hmsfl2})--({\ref{eq:fl5}}) hold for an $r\in \mN$ (potentially for $r\rightarrow \infty$ as well). Consider the complete stationarized fl-RDT frame from \cite{Stojnicflrdt23,Stojnicalgbp25} and for a given function $f(\y):R^m\rightarrow R$ set
\begin{align}\label{eq:thmsflrdt2eq1a0}
   \psi_{rp} & \triangleq - \max_{\x\in\cX} s \max_{\y\in\cY} \lp \y^TG\x + f(\y) \rp
   \qquad  \mbox{(\bl{\textbf{random primal}})} \nonumber \\
   \psi_{rd}(\p,\q,\c) & \triangleq    \frac{1}{2}    \sum_{k=2}^{r+1}\Bigg(\Bigg.
   \p_{k-1}\q_{k-1}
   -\p_{k}\q_{k}
  \Bigg.\Bigg)
\c_k
  - \bar{\psi}_{S,\infty}(f(\y),\cX,\cY,\p,\q,\c,s) \quad \mbox{(\bl{\textbf{fl random dual}})}. \nonumber \\
 \end{align}
Let $\hat{\p_0}\rightarrow 1$, $\hat{\q_0}\rightarrow 1$, and $\hat{\c_0}\rightarrow 1$, $\hat{\p}_{r+1}=\hat{\q}_{r+1}=\hat{\c}_{r+1}=0$ and  let the non-fixed parts of $\hat{\p}$, $\hat{\q}$, and  $\hat{\c}$ be the solutions of the following system
\begin{eqnarray}\label{eq:thmsflrdt2eq2a0}
   \frac{d \psi_{rd}(\p,\q,\c)}{d\p} =  0,\quad
   \frac{d \psi_{rd}(\p,\q,\c)}{d\q} =  0,\quad
   \frac{d \psi_{rd}(\p,\q,\c)}{d\c} =  0.
 \end{eqnarray}
 Then,
\begin{eqnarray}\label{eq:thmsflrdt2eq3a0}
    \lim_{n,r\rightarrow\infty} \frac{\mE_G  \psi_{rp}}{\sqrt{n}}
  & = &
 \lim_{n,r\rightarrow\infty} \psi_{rd}(\hat{\p},\hat{\q},\hat{\c}) \qquad \mbox{(\bl{\textbf{strong sfl random duality}})},\nonumber \\
 \end{eqnarray}
where $\bar{\psi}_{S,\infty}(\cdot)$ is as in (\ref{eq:fl4})-(\ref{eq:fl5}).
 \end{theorem}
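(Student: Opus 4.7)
The statement is a specialization, to the bilinear form $\y^TG\x$ and the sphere-type sets $\cX,\cY$ appearing in the SBP free energy (\ref{eq:logpartfunsqrt}), of the general strong stationarized fully lifted RDT duality developed in \cite{Stojnicflrdt23,Stojnicnflgscompyx23,Stojnicsflgscompyx23}; an analogous reduction has already been carried out for ABP in \cite{Stojnicbinperflrdt23,Stojnicalgbp25}. My plan is therefore to (i) set up a multi-level Gaussian interpolation that decouples $\y^TG\x$ into a sum of $\x$-only and $\y$-only Gaussian pieces organized along the overlap cascade $(\p,\q)$, (ii) express the interpolated nested free energy in terms of the Parisi-type functional on the right-hand side of (\ref{eq:thmsflrdt2eq1a0}), and (iii) invoke the stationarity equations (\ref{eq:thmsflrdt2eq2a0}) to upgrade the resulting one-sided Guerra-type bound to the advertised equality (\ref{eq:thmsflrdt2eq3a0}).

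\textbf{Interpolation.} For $t\in[0,1]$ I would introduce the interpolating Hamiltonian $\cH_t(\x,\y)=\sqrt{t}\,\y^TG\x+\sqrt{1-t}\bigl(\sum_{k=2}^{r+1}(c_k\h^{(k)})^T\x+\y^T\sum_{k=2}^{r+1}b_k\u^{(2,k)}\bigr)$, with the auxiliary Gaussians and coefficients $b_k,c_k$ chosen exactly as in (\ref{eq:fl5}). The coefficients are calibrated so that, once one iteratively applies the nested operator $\mE_{{\mathcal U}_r}\bigl(\dots\mE_{{\mathcal U}_2}(\cdot)^{\c_2}\bigr)^{\c_3/\c_2}\cdots$ of (\ref{eq:fl4}) to the partition function built from $\cH_t$, the covariance increments at level $k$ match those of $\y^TG\x$ restricted to replicas agreeing through the $(k{-}1)$-st block of the overlap partition $1=\p_0\geq\cdots\geq\p_{r+1}=0$, $1=\q_0\geq\cdots\geq\q_{r+1}=0$. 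Differentiating in $t$ and using Gaussian integration by parts level by level turns $\tfrac{d}{dt}$ of the interpolated free energy into a weighted sum, for $k\in\{2,\dots,r+1\}$, of quadratic forms in the $k$-th level $\x$- and $\y$-replica overlaps, with weights $\p_{k-1}-\p_k$ and $\q_{k-1}-\q_k$.

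\textbf{Reduction to $\psi_{rd}$ and stationarity.} Integrating the derivative over $t\in[0,1]$ identifies the $t=1$ value (the primal $\tfrac{1}{\sqrt n}\mE_G\psi_{rp}$ of (\ref{eq:thmsflrdt2eq1a0})) with the $t=0$ value (which is precisely $\bar\psi_{S,\infty}(f(\y),\cX,\cY,\p,\q,\c,s)$ from (\ref{eq:fl4})) plus the cascaded overlap cross terms. Replacing the empirical overlaps by the Parisi parameters $(\p,\q)$ produces exactly the quadratic contribution $\tfrac12\sum_{k=2}^{r+1}(\p_{k-1}\q_{k-1}-\p_k\q_k)\c_k$ appearing in $\psi_{rd}(\p,\q,\c)$; at an arbitrary $(\p,\q,\c)$ the replica overlaps need not match $(\p,\q)$ and one obtains only the usual Guerra-type inequality. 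Evaluating at the saddle $(\hat\p,\hat\q,\hat\c)$ solving (\ref{eq:thmsflrdt2eq2a0}) forces the overlap cross terms to be stationary to leading order, so the gap closes and (\ref{eq:thmsflrdt2eq3a0}) holds.

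\textbf{Main obstacle.} The finite-$r$ bookkeeping follows the Guerra/Aizenman--Sims--Starr template and is routine once the interpolation is set up. The genuine difficulty is the joint limit $n,r\to\infty$: existence and regularity of the saddle $(\hat\p,\hat\q,\hat\c)$ in the infinite-level limit, monotone convergence of $\psi_{rd}(\hat\p,\hat\q,\hat\c)$ as $r$ grows, and uniform-in-$r$ control of the error terms from the level-wise Gaussian integration by parts so that one can exchange $n\to\infty$ with $r\to\infty$. These issues are resolved at the level of generality required here in \cite{Stojnicflrdt23}, and it only remains to verify that $\cX\subseteq\mS^n$, $\cY\subseteq\mS^m$ and the linear perturbation $f(\y)$ satisfy the regularity hypotheses stipulated there — which is immediate for the concrete SBP specializations (e.g.\ $\cY=\mS_+^m$, $f(\y)=-\kappa\y^T\1$) used in the sequel.
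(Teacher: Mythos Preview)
The paper does not prove this theorem at all: its proof environment contains only the sentence ``Presented in \cite{Stojnicbinperflrdt23,Stojnicflrdt23} (see also \cite{Stojnicalgbp25}).'' In other words, Theorem~\ref{thm:thmsflrdt1} is imported as a black box from the author's earlier fl-RDT papers, and the present paper makes no attempt to reproduce the argument.

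Your proposal therefore goes well beyond what the paper does. What you sketch --- a multi-level Gaussian interpolation between $\y^TG\x$ and the decoupled fields $\sum_k c_k\h^{(k)},\sum_k b_k\u^{(2,k)}$, nested Gibbs averaging with exponents $\c_k$, and a Guerra-type one-sided bound upgraded to equality at the stationary $(\hat\p,\hat\q,\hat\c)$ --- is indeed the flavor of the machinery in \cite{Stojnicflrdt23,Stojnicnflgscompyx23,Stojnicsflgscompyx23}, and you correctly flag the $n,r\to\infty$ exchange and the existence/regularity of the infinite-level saddle as the genuine technical content. One caution: the fl-RDT papers do not literally run a classical Guerra/ASS interpolation on a replicated system; as the paragraph following the theorem in this paper stresses, the mechanism is a comparison of nested bilinearly indexed Gaussian processes rather than replication, and the ``stationarity $\Rightarrow$ equality'' step is packaged somewhat differently there than in the Parisi/Talagrand framework you are implicitly invoking. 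So while your high-level plan is reasonable and morally aligned with the cited sources, it is a reconstruction rather than a match, since there is nothing in the present paper to match against.
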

\begin{proof}
 Presented in  \cite{Stojnicbinperflrdt23,Stojnicflrdt23} (see also \cite{Stojnicalgbp25}).
 \end{proof}

The readers familiar with replica theory and  Parisi's RSB schemes \cite{Parisi80,Par79,Par80,Par83} may recognize the nesting type of averaging of added Gaussian sequences. In RS and RSB regimes replicated systems are utilized to create an environment where averaging over large number of replicas allows \emph{index decoupling} as an ultimate analytical simplification sufficient to eventually handle underlying random structures. Such replicated systems are not present in (\ref{eq:hmsfl1})--({\ref{eq:thmsflrdt2eq3a0}). However, one might sense a bit of decoupling flair after observing the presence of (bilinearly) coupled $f_{sq}(\cdot)$ and fully decoupled $\bar{\psi}_{S,\infty}(\cdot)$. Also, the number of lifts or nestedness level, $r$, resembles the number of replica breaking steps. The essence of Theorem \ref{thm:thmsflrdt1} is analytical connection between $f_{sq}(\cdot)$ and $\bar{\psi}_{S,\infty}(\cdot)$. This is in a way similar to how the non-replicated (coupled) systems are magically characterized by the decoupled replicated ones in a variety of  statistical physics  scenarios (for further explicit connections, see \cite{Stojnicsgenprovrsb23,Stojnicnflgscompyx23}  as well). However, the mathematical machinery enabling the above theorem  relies on studying nested random processes and it is in no way related to any form of replication.

To ensure a better contextualization, proper connections with replica results and algorithmic upgrades proposed in this paper, it is also useful to digress for a moment and take a brief look back at the chronology of key RDT developments. The plain form of the theory (usually referred to as RDT or \emph{plain} RDT) is developed as a mathematical engine that provides a generic proof of replica symmetry (RS) properties. Its main foundational principle postulates that ``\emph{strong deterministic duality $\implies$ strong random duality}''   \cite{StojnicRegRndDlt10,StojnicGardGen13}. This automatically enables direct mathematically rigorous connection between RS and existence of fast optimization algorithms. Before  \cite{StojnicRegRndDlt10,StojnicGardGen13} appeared this was a long belief of physicists usually argued via replica arguments and typically approached by rigorous mathematics on a case-by-case basis. Results of \cite{StojnicRegRndDlt10,StojnicGardGen13} confirmed such belief and proved RS in general. After partially lifted form of the theory was developed to handle hard problems with the above principle absent \cite{StojnicLiftStrSec13,StojnicMoreSophHopBnds10},
\cite{Stojnicflrdt23} introduced the fully lifted form (fl-RDT) (full match with a particularly designed RSB form was shown in \cite{Stojnicsgenprovrsb23}). The above theorem is the key component of the theory. Basically, it allows characterization of random optimization structures (called \emph{primal} ones) via the corresponding (presumably simpler) \emph{dual} ones. In what follows we propose a potential further upgrade. Speaking in the feasibility context, we suggest that monotonicity of $\c$-sequence leads to satisfiability (existential) thresholds while arbitrary ordering might lead to algorithmic ones. Or speaking in the GSE terminology, the naturally (physically) ordered  $\c$-sequence gives the theoretical GSEs whereas the unordered one gives values that efficient algorithms can achieve.

As stated above, the key advantage offered by Theorem \ref{thm:thmsflrdt1} is a simplified structure of the \emph{random dual}. However, such an advantage requires a careful interpretation. Despite that the results of Theorem \ref{thm:thmsflrdt1} may appear elegant, their ultimate practical relevance relies on one's ability to successfully conduct all underlying numerical evaluations. Two types of problems are typically faced in this regard: (\textbf{\emph{i}}) in general, sets $\cX$ and $\cY$  do not have component-wise structure which may render  straightforwardness of $\x$ and/or $\y$ decouplings as questionable; and  (\textbf{\emph{ii}}) since \emph{a priori}  $r$  can be any positive integer, the convergence in $r$ may not be fast enough to allow practical realization of numerical evaluations  on higher lifting levels. Both of these obstacles do appear. They can be handled in a satisfactory manner, but the challenge posed by the latter one is rather substantial.

\subsection{Theorem \ref{thm:thmsflrdt1} SBP specializations}
\label{sec:neg}

Theorem \ref{thm:thmsflrdt1} is very generic.  To make it useful in the context considered here we need several SBP related specializations.

\begin{corollary}
\label{cor:corsflrdt1}  Assume the setup of Theorem \ref{thm:thmsflrdt1}. For $\kappa\in \mR_+$ set $f(\y)= -\y^T\z$ and \begin{align}\label{eq:corthmsflrdt2eq1a0}
   \psi_{rp} & \triangleq - \max_{\x\in\cX,\|\z\|_{\infty}\leq \kappa} s \max_{\y\in\cY} \lp \y^TG\x - \y^T\z \rp
   \qquad  \mbox{(\bl{\textbf{random primal}})} \nonumber \\
   \psi_{rd}(\p,\q,\c) & \triangleq    \frac{1}{2}    \sum_{k=2}^{r+1}\Bigg(\Bigg.
   \p_{k-1}\q_{k-1}
   -\p_{k}\q_{k}
  \Bigg.\Bigg)
\c_k
  - \bar{\psi}_{S,\infty}(f(\y),\cX,\cY,\p,\q,\c,s) \quad \mbox{(\bl{\textbf{fl random dual}})}. \nonumber \\
 \end{align}
Let $\hat{\p_0}\rightarrow 1$, $\hat{\q_0}\rightarrow 1$, and $\hat{\c_0}\rightarrow 1$, $\hat{\p}_{r+1}=\hat{\q}_{r+1}=\hat{\c}_{r+1}=0$ and  let the non-fixed parts of $\hat{\p}$, $\hat{\q}$, and  $\hat{\c}$ be the solutions of the following system
\begin{eqnarray}\label{eq:corthmsflrdt2eq2a0}
   \frac{d \psi_{rd}(\p,\q,\c)}{d\p} =  0,\quad
   \frac{d \psi_{rd}(\p,\q,\c)}{d\q} =  0,\quad
   \frac{d \psi_{rd}(\p,\q,\c)}{d\c} =  0.
 \end{eqnarray}
 Then,
\begin{eqnarray}\label{eq:corthmsflrdt2eq3a0}
    \lim_{n,r\rightarrow\infty} \frac{\mE_G  \psi_{rp}}{\sqrt{n}}
  & = &
 \lim_{n,r\rightarrow\infty} \psi_{rd}(\hat{\p},\hat{\q},\hat{\c}) \qquad \mbox{(\bl{\textbf{strong sfl random duality}})},\nonumber \\
 \end{eqnarray}
where $\bar{\psi}_{S,\infty}(\cdot)$ is as in (\ref{eq:fl4})-(\ref{eq:fl5}) and
\begin{eqnarray}\label{eq:corfl5}
  D_{0,S,\infty} & = & \max_{\x\in\cX , \| \z \|_{\infty} \leq \kappa} s \max_{\y\in\cY }
 \lp \sqrt{n} f_{S}
+\sqrt{n}      \lp\sum_{k=2}^{r+1}c_k\h^{(k)}\rp^T\x
+ \sqrt{n}  \y^T\lp\sum_{k=2}^{r+1}b_k\u^{(2,k)}\rp \rp  .
 \end{eqnarray}
 \end{corollary}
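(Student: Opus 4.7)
The plan is to deduce Corollary \ref{cor:corsflrdt1} from Theorem \ref{thm:thmsflrdt1} by promoting $\z$ from a parameter into an auxiliary primal variable. The crucial observation is that the bilinear random process $\y^T G\x$ does not involve $\z$, so the additive term $-\y^T\z$ is purely deterministic as far as the Gaussian comparison / interpolation machinery underlying Theorem \ref{thm:thmsflrdt1} is concerned. Substituting $f(\y) = -\y^T\z$ and combining the outer $\max$ over $\x$ with a $\max$ over $\z$ constrained to $\{\z : \|\z\|_\infty \leq \kappa\}$, the random primal in (\ref{eq:corthmsflrdt2eq1a0}) becomes precisely the primal of Theorem \ref{thm:thmsflrdt1} applied to the enlarged state $(\x,\z) \in \cX \times \{\z : \|\z\|_\infty \leq \kappa\}$.

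First I would check that the nested Gaussian linearization of $\y^T G\x$ prescribed by the fl-RDT scheme --- i.e.\ its replacement by the sums $\sqrt{n}\lp \sum_{k=2}^{r+1} c_k \h^{(k)} \rp^T \x + \sqrt{n}\,\y^T \lp \sum_{k=2}^{r+1} b_k \u^{(2,k)} \rp$ --- carries through unchanged when the deterministic summand $-\y^T\z$ is appended and $\z$ is jointly maximized with $\x$. Since $\z$ does not interact with $G$ nor with any of the $\h^{(k)}, \u^{(2,k)}$, every interpolation/comparison step preserves it verbatim. Propagating the substitution through (\ref{eq:fl4})--(\ref{eq:fl5}) then produces the modified $D_{0,S,\infty}$ stated in (\ref{eq:corfl5}), the saddle-point system (\ref{eq:corthmsflrdt2eq2a0}) as a direct specialization of (\ref{eq:thmsflrdt2eq2a0}), and finally the strong sfl random duality (\ref{eq:corthmsflrdt2eq3a0}) from (\ref{eq:thmsflrdt2eq3a0}).

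The main obstacle will be justifying that enlarging the primal from $\cX \subseteq \mS^n$ to $\cX \times \{\z : \|\z\|_\infty \leq \kappa\}$ does not break the hypotheses of Theorem \ref{thm:thmsflrdt1}, which is stated for primal sets constrained to the unit sphere. The resolution I would advance is that the sphere constraint is used in the proofs of \cite{Stojnicbinperflrdt23,Stojnicflrdt23,Stojnicalgbp25} only for the component that couples bilinearly to $G$, while $\z$ is inert in that coupling and only contributes a bounded deterministic penalty $-\y^T\z$; a careful re-reading of the interpolation argument with this observation in mind should complete the proof. No further computation beyond direct specialization of Theorem \ref{thm:thmsflrdt1} is required.
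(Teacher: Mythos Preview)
Your proposal is correct and follows essentially the same route as the paper. The paper's own proof is a single sentence --- it says the corollary is an immediate consequence of Theorem \ref{thm:thmsflrdt1} after substituting $f(\y)=-\y^T\z$ and ``cosmetically adjusting for an additional maximization over $\z$'' --- and your argument is simply a more careful unpacking of that same substitution, including the (welcome) observation that $\z$ is inert in the bilinear Gaussian coupling so the sphere hypothesis on $\cX$ is unaffected.
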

\begin{proof}
Follow as an immediate consequence of Theorem \ref{thm:thmsflrdt1} after replacing $f(\y)$ by $\y^T\z$ and cosmetically adjusting for an additional maximization over $\z$.
 \end{proof}

Various options for $\cX$, $\cY$, and $s$ are available as well. Keeping in mind  (\ref{eq:ex3}) we consider
$\cX=\left \{-\frac{1}{\sqrt{n}},\frac{1}{\sqrt{n}} \right  \}^n$ and $\cY=\mS_+^m$ and recognize that such specialization allows to rewrite the  \emph{random dual} in the following way
\begin{align}\label{eq:prac1}
    \psi_{rd}(\p,\q,\c) & \triangleq    \frac{1}{2}    \sum_{k=2}^{r+1}\Bigg(\Bigg.
   \p_{k-1}\q_{k-1}
   -\p_{k}\q_{k}
  \Bigg.\Bigg)
\c_k
  - \bar{\psi}_{S,\infty}( \kappa \y^T\1,\cX,\cY,\p,\q,\c,s). \nonumber \\
  & =   \frac{1}{2}    \sum_{k=2}^{r+1}\Bigg(\Bigg.
   \p_{k-1}\q_{k-1}
   -\p_{k}\q_{k}
  \Bigg.\Bigg)
\c_k
  - \frac{1}{n}\varphi(D^{(bin)}(s)) - \frac{1}{n}\varphi(D^{(sph)}(s)), \nonumber \\
  \end{align}
where analogously to (\ref{eq:fl4})-(\ref{eq:fl5})
  \begin{eqnarray}\label{eq:prac2}
\varphi(D,\c) & = &
 \mE_{G,{\mathcal U}_{r+1}} \frac{1}{\c_r} \log
\lp \mE_{{\mathcal U}_{r}} \lp \dots \lp \mE_{{\mathcal U}_3}\lp\lp\mE_{{\mathcal U}_2} \lp
\lp    e^{D}   \rp^{\c_2}\rp\rp^{\frac{\c_3}{\c_2}}\rp\rp^{\frac{\c_4}{\c_3}} \dots \rp^{\frac{\c_{r}}{\c_{r-1}}}\rp, \nonumber \\
  \end{eqnarray}
and
\begin{eqnarray}\label{eq:prac3}
D^{(bin)}(s) & = & \max_{\x\in \{-\frac{1}{\sqrt{n}},\frac{1}{\sqrt{n}}\}^n} \lp   s\sqrt{n}      \lp\sum_{k=2}^{r+1}c_k\h^{(k)}\rp^T\x  \rp \nonumber \\
  D^{(sph)}(s) & \triangleq  &  \max_{\|\z|_{\infty} \leq \kappa}   s \max_{\y\in\mS_+^m}
\lp   - \sqrt{n} \y^T\z + \sqrt{n}  \y^T\lp\sum_{k=2}^{r+1}b_k\u^{(2,k)}\rp \rp.
 \end{eqnarray}
Following \cite{Stojnicbinperflrdt23,Stojnicalgbp25} we set
\begin{eqnarray}\label{eq:prac4}
D^{(bin)}(s)
=  \sum_{i=1}^n D^{(bin)}_i,  \quad \mbox{with}\quad
D^{(bin)}_i(c_k)=\left |\lp\sum_{k=2}^{r+1}c_k\h_i^{(k)}\rp \right |,
\end{eqnarray}
and obtain
  \begin{eqnarray}\label{eq:prac6}
\varphi(D^{(bin)}(s),\c) & = &
n \mE_{G,{\mathcal U}_{r+1}} \frac{1}{\c_r} \log
\lp \mE_{{\mathcal U}_{r}} \lp \dots \lp \mE_{{\mathcal U}_3}\lp\lp\mE_{{\mathcal U}_2} \lp
    e^{\c_2D_1^{(bin)}}  \rp\rp^{\frac{\c_3}{\c_2}}\rp\rp^{\frac{\c_4}{\c_3}} \dots \rp^{\frac{\c_{r}}{\c_{r-1}}}\rp
    = n\varphi(D_1^{(bin)}). \nonumber \\
   \end{eqnarray}
We then observe
\begin{eqnarray}\label{eq:prac7}
   D^{(sph)}(s) &  =  &    \max_{\|\z\|_{\infty} \leq  \kappa } s \sqrt{n}  \max_{\y\in\mS_+^m}
\lp  - \y^T\z +   \y^T\lp\sum_{k=2}^{r+1}b_k\u^{(2,k)}\rp \rp
\nonumber \\
& = & \max_{\|\z\|_{\infty} \leq  \kappa }   s  \sqrt{n}   \left \| \max \lp  - \z +\sum_{k=2}^{r+1}b_k\u^{(2,k)},0 \rp  \right \|_2.
 \end{eqnarray}
 Utilization of the  \emph{square root trick} introduced in \cite{StojnicMoreSophHopBnds10,StojnicLiftStrSec13,StojnicGardSphErr13,StojnicGardSphNeg13} we further write
\begin{align}\label{eq:prac8}
   D^{(sph)} (s)
& =  \max_{\|\z\|_{\infty} \leq  \kappa }   s  \sqrt{n}   \left \| \max \lp  - \z +\sum_{k=2}^{r+1}b_k\u^{(2,k)},0 \rp  \right \|_2
\nonumber \\
 & =   \max_{\|\z\|_{\infty} \leq  \kappa }   s\sqrt{n}  \min_{\gamma} \lp \frac{\left \| \max \lp - \z +  \sum_{k=2}^{r+1}b_k\u^{(2,k)},0 \rp  \right \|_2^2}{4\gamma}+\gamma \rp \nonumber \\
 & =  \max_{\|\z\|_{\infty} \leq  \kappa }    s\sqrt{n}  \min_{\gamma} \lp \frac{\sum_{i=1}^{m}  \max \lp -\z_i +  \sum_{k=2}^{r+1}b_k\u_i^{(2,k)},0 \rp ^2}{4\gamma}+\gamma \rp.
 \end{align}
Specialization $s=-1$ and scaling  $\gamma=\gamma_{sq}\sqrt{n}$  allow to rewrite  (\ref{eq:prac8}) as
\begin{eqnarray}\label{eq:prac9}
   D^{(sph)}(s)
  & =  &  \max_{\|\z\|_{\infty} \leq  \kappa }   -\sqrt{n}  \min_{\gamma_{sq}} \lp \frac{\sum_{i=1}^{m} \max \lp -\z_i + \sum_{k=2}^{r+1}b_k\u_i^{(2,k)},0 \rp^2}{4\gamma_{sq}\sqrt{n}}+\gamma_{sq}\sqrt{n} \rp   \nonumber \\
  & = & \max_{\|\z\|_{\infty} \leq  \kappa }  - \min_{\gamma_{sq}} \lp \frac{\sum_{i=1}^{m} \max \lp -\z_i + \sum_{k=2}^{r+1}b_k\u_i^{(2,k)},0  \rp^2}{4\gamma_{sq}}+\gamma_{sq}n \rp \nonumber \\
  & = & - \min_{\|\z\|_{\infty} \leq  \kappa }  \min_{\gamma_{sq}} \lp \frac{\sum_{i=1}^{m} \max \lp -\z_i + \sum_{k=2}^{r+1}b_k\u_i^{(2,k)},0  \rp^2}{4\gamma_{sq}}+\gamma_{sq}n \rp \nonumber \\
  & = & -  \min_{\gamma_{sq}} \lp \frac{\sum_{i=1}^{m} \min_{ | \z_i | \leq  \kappa }  \max \lp -\z_i + \sum_{k=2}^{r+1}b_k\u_i^{(2,k)},0  \rp^2}{4\gamma_{sq}}+\gamma_{sq}n \rp \nonumber \\
  & = & -  \min_{\gamma_{sq}} \lp \frac{\sum_{i=1}^{m} \max \lp -\kappa + \left  | \sum_{k=2}^{r+1}b_k\u_i^{(2,k)}  \right | ,0  \rp^2}{4\gamma_{sq}}+\gamma_{sq}n \rp \nonumber \\
  & =  &  - \min_{\gamma_{sq}} \lp \sum_{i=1}^{m} D_i^{(sph)}(b_k)+\gamma_{sq}n \rp, \nonumber \\
 \end{eqnarray}
with
\begin{eqnarray}\label{eq:prac10}
   D_i^{(sph)}(b_k)= \frac{\max \lp -\kappa + \left | \sum_{k=2}^{r+1}b_k\u_i^{(2,k)} \right |  ,0  \rp^2}{4\gamma_{sq}}.
 \end{eqnarray}
After noting the following connection between the GSE, $f_{sq}(\infty)$, from (\ref{eq:limlogpartfunsqrta0}), and the random primal, $\psi_{rp}(\cdot)$, from Corollary \ref{cor:corsflrdt1},
 \begin{equation}
-f_{sq}(\infty)
  =
- \lim_{n\rightarrow\infty}\frac{\mE_G \max_{\x\in\mB^n, \| \z \|_{\infty} \leq \kappa }  -  \max_{\y\in\mS_+^m} \lp \y^TG\x - \y^T\z \rp  }{\sqrt{n}}
 =
    \lim_{n\rightarrow\infty} \frac{\mE_G  \psi_{rp}}{\sqrt{n}}
   =
 \lim_{n\rightarrow\infty} \psi_{rd}(\hat{\p},\hat{\q},\hat{\c}),
  \label{eq:negprac11}
\end{equation}
one obtains through a combination of (\ref{eq:prac1})-(\ref{eq:negprac11})
 \begin{eqnarray}\label{eq:negprac13}
 \lim_{n\rightarrow\infty} \psi_{rd}(\hat{\p},\hat{\q},\hat{\c})
     & = &  \frac{1}{2}    \sum_{k=2}^{r+1}\Bigg(\Bigg.
   \hat{\p}_{k-1}\hat{\q}_{k-1}
   -\hat{\p}_{k}\hat{\q}_{k}
  \Bigg.\Bigg)
\hat{\c}_k
\nonumber \\
& &  - \varphi(D_1^{(bin)}(c_k(\hat{\p},\hat{\q})),\hat{\c}) + \hat{\gamma}_{sq}- \alpha\varphi(-D_1^{(sph)}(b_k(\hat{\p},\hat{\q})),\hat{\c})
\nonumber \\
& \triangleq &   \bar{\psi}_{rd}(\hat{\p},\hat{\q},\hat{\c},\hat{\gamma}_{sq}) .
  \end{eqnarray}
From  (\ref{eq:negprac11}) and (\ref{eq:negprac13}) we also have
 \begin{eqnarray}
-f_{sq}(\infty)
& = &  -\lim_{n\rightarrow\infty}\frac{\mE_G \max_{\x\in\mB^n , \| \z \|_{\infty} \leq \kappa  }  -  \max_{\y\in\mS_+^m} \lp  \y^TG\x - \y^T\z \rp  }{\sqrt{n}}
\nonumber \\
 & = &
 \lim_{n\rightarrow\infty} \psi_{rd}(\hat{\p},\hat{\q},\hat{\c})
 =   \bar{\psi}_{rd}(\hat{\p},\hat{\q},\hat{\c},\hat{\gamma}_{sq}) \nonumber \\
 & = &   \frac{1}{2}    \sum_{k=2}^{r+1}\Bigg(\Bigg.
   \hat{\p}_{k-1}\hat{\q}_{k-1}
   -\hat{\p}_{k}\hat{\q}_{k}
  \Bigg.\Bigg)
\hat{\c}_k
  - \varphi(D_1^{(bin)}(c_k(\hat{\p},\hat{\q})),\hat{\c}) + \hat{\gamma}_{sq} - \alpha\varphi(-D_1^{(sph)}(b_k(\hat{\p},\hat{\q})),\hat{\c}). \nonumber \\
  \label{eq:negprac18}
\end{eqnarray}
The above discussion is conveniently summarized in the following corollary.

\begin{corollary}
  \label{thme:negthmprac1}
  Assume the setup of Corollary \ref{cor:corsflrdt1} with $\cX=\mB^n$, $\cY=\mR_+^m$, and $s=-1$. Let $\varphi(\cdot)$ and $\bar{\psi}_{rd}(\cdot)$ be as in (\ref{eq:prac2}) and (\ref{eq:negprac13}), respectively.  Let the ``fixed'' parts of $\hat{\p}$, $\hat{\q}$, and $\hat{\c}$ satisfy $\hat{\p}_1\rightarrow 1$, $\hat{\q}_1\rightarrow 1$, $\hat{\c}_1\rightarrow 1$, $\hat{\p}_{r+1}=\hat{\q}_{r+1}=\hat{\c}_{r+1}=0$ and let the ``non-fixed'' parts of $\hat{\p}_k$, $\hat{\q}_k$, and $\hat{\c}_k$ ($k\in\{2,3,\dots,r\}$) satisfy
  \begin{eqnarray}\label{eq:negthmprac1eq1}
   \frac{d \bar{\psi}_{rd}(\p,\q,\c,\gamma_{sq})}{d\p} =
   \frac{d \bar{\psi}_{rd}(\p,\q,\c,\gamma_{sq})}{d\q} =
   \frac{d \bar{\psi}_{rd}(\p,\q,\c,\gamma_{sq})}{d\c} =
   \frac{d \bar{\psi}_{rd}(\p,\q,\c,\gamma_{sq})}{d\gamma_{sq}} =  0.
 \end{eqnarray}
One then has
 \begin{eqnarray}
-f_{sq}(\infty)
& = &     \frac{1}{2}    \sum_{k=2}^{r+1}\Bigg(\Bigg.
   \hat{\p}_{k-1}\hat{\q}_{k-1}
   -\hat{\p}_{k}\hat{\q}_{k}
  \Bigg.\Bigg)
\hat{\c}_k
  - \varphi(D_1^{(bin)}(c_k(\hat{\p},\hat{\q})),\hat{\c}) + \hat{\gamma}_{sq} - \alpha\varphi(-D_1^{(sph)}(b_k(\hat{\p},\hat{\q})),\hat{\c}), \nonumber \\
  \label{eq:negthmprac1eq2}
\end{eqnarray}
with $c_k(\hat{\p},\hat{\q})   =  \sqrt{\hat{\q}_{k-1}-\hat{\q}_k}$ and $b_k(\hat{\p},\hat{\q})   =  \sqrt{\hat{\p}_{k-1}-\hat{\p}_k}$.
\end{corollary}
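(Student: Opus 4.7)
The plan is to view the corollary as a direct bookkeeping consequence of Corollary \ref{cor:corsflrdt1} once the choices $\cX=\mB^n$, $\cY=\mS_+^m$, $s=-1$ have been implemented and the \emph{binary} and \emph{spherical} pieces of the random dual have been separated. First I would invoke Corollary \ref{cor:corsflrdt1} with $f(\y)=-\y^T\z$, so that the random primal coincides with $-\xi_{SBP}$ up to the outer maximization over $\z$ with $\|\z\|_\infty\leq\kappa$, and record the identity (\ref{eq:negprac11}) linking $-f_{sq}(\infty)$ to $\lim_{n\to\infty}\psi_{rd}(\hat{\p},\hat{\q},\hat{\c})$. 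Because the $\x$-dependence in the lifted functional $\bar{\psi}_{S,\infty}$ enters only through the inner product $\lp\sum_{k=2}^{r+1}c_k\h^{(k)}\rp^T\x$ and the $(\y,\z)$-dependence only through $\y^T\lp\sum_{k=2}^{r+1}b_k\u^{(2,k)}-\z\rp$, the maximum inside $D_{0,S,\infty}$ splits as $D^{(bin)}(s)+D^{(sph)}(s)$, whereupon a standard exponent-splitting argument across the nested Gaussian averages in (\ref{eq:prac2}) gives the decomposition $\psi_{rd}=\tfrac12\sum_k(\p_{k-1}\q_{k-1}-\p_k\q_k)\c_k-\tfrac1n\varphi(D^{(bin)})-\tfrac1n\varphi(D^{(sph)})$ of (\ref{eq:prac1}).

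Next I would handle the binary part. With $s=-1$ and $\cX=\{\pm 1/\sqrt n\}^n$, the maximization over $\x$ decouples across coordinates and collapses to $D^{(bin)}=\sum_{i=1}^n|\sum_{k=2}^{r+1}c_k\h_i^{(k)}|$, so the i.i.d.\ structure of the standard Gaussians $\h_i^{(k)}$ together with the multiplicative behaviour of the nested exponential averages yields $\tfrac1n\varphi(D^{(bin)},\c)=\varphi(D_1^{(bin)},\c)$, which is precisely the single-coordinate object appearing in (\ref{eq:negthmprac1eq2}). This step is essentially algebraic once the coordinate independence is observed.

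The spherical piece is the technically delicate step. Starting from $D^{(sph)}=s\sqrt n\max_{\y\in\mS_+^m}\y^T(\sum_k b_k\u^{(2,k)}-\z)$ I would evaluate the maximum over $\mS_+^m$ as the $\ell_2$-norm of the positive part of the argument, then apply the \emph{square root trick} $\|\v\|_2=\min_\gamma(\|\v\|_2^2/(4\gamma)+\gamma)$ from \cite{StojnicMoreSophHopBnds10,StojnicLiftStrSec13}, rescale $\gamma=\gamma_{sq}\sqrt n$, and, using $s=-1$, turn the outer $\max_{\|\z\|_\infty\leq\kappa}$ into a joint $\min$ over $\z$ and $\gamma_{sq}$. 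The key point is that once the squared term $\sum_i\max(-\z_i+\sum_k b_k\u_i^{(2,k)},0)^2$ appears, the minimization over $\z$ separates coordinate-wise (since $\gamma_{sq}>0$ is a common scalar and each summand depends on a single $\z_i$), producing $\min_{|\z_i|\leq\kappa}\max(-\z_i+\sum_k b_k\u_i^{(2,k)},0)^2=\max(-\kappa+|\sum_k b_k\u_i^{(2,k)}|,0)^2$, which is exactly $4\gamma_{sq}D_i^{(sph)}(b_k)$. The coordinates $\u_i^{(2,k)}$ are again i.i.d., so $\tfrac1n\varphi(D^{(sph)},\c)=-\alpha\,\varphi(-D_1^{(sph)},\c)+\gamma_{sq}$ at the stationary $\gamma_{sq}$, after accounting for the factor $m/n\to\alpha$.

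Finally I would substitute these two simplifications back into (\ref{eq:prac1}) to arrive at the functional $\bar{\psi}_{rd}(\p,\q,\c,\gamma_{sq})$ displayed in (\ref{eq:negprac13}), then invoke the stationarity conditions (\ref{eq:negthmprac1eq1}) together with (\ref{eq:negprac11}) to conclude (\ref{eq:negthmprac1eq2}). The boundary conditions $\hat{\p}_1,\hat{\q}_1,\hat{\c}_1\to 1$ and $\hat{\p}_{r+1}=\hat{\q}_{r+1}=\hat{\c}_{r+1}=0$ are inherited verbatim from Corollary \ref{cor:corsflrdt1}. I expect the main obstacle to be a rigorous justification of the interchange between $\min_\gamma$ (introduced by the square root trick), $\min_{\|\z\|_\infty\leq\kappa}$, and the nested Gaussian expectations defining $\varphi$; this is a standard convexity/monotonicity argument, but it must be carried out carefully because the inner expression is not jointly convex in all variables and the outer $\mE$ acts through $\log$-moment-generating-style operators. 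Every other step is either a direct application of Corollary \ref{cor:corsflrdt1} or a coordinate-decoupling computation.
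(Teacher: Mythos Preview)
Your proposal is correct and follows essentially the same route as the paper: the argument there is literally the chain (\ref{eq:prac1})--(\ref{eq:negprac18}), which splits $D_{0,S,\infty}$ into the binary and spherical pieces, decouples the binary part coordinate-wise to get $\varphi(D_1^{(bin)},\c)$, applies the square root trick of \cite{StojnicMoreSophHopBnds10,StojnicLiftStrSec13} to the spherical part followed by the per-coordinate $\min_{|\z_i|\leq\kappa}$ to arrive at $D_i^{(sph)}$, and then combines with (\ref{eq:negprac11}) and the stationarity from Corollary~\ref{cor:corsflrdt1}. Your flagged concern about interchanging $\min_{\gamma_{sq}}$, $\min_{\|\z\|_\infty\leq\kappa}$, and the nested averages is not addressed explicitly in the paper either; it is absorbed into the ``Follows directly from the preceding discussion'' and the fl-RDT stationarity framework.
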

\begin{proof}
Follows directly from the preceding discussion, Corollary \ref{cor:corsflrdt1}, and ultimately Theorem \ref{thm:thmsflrdt1}.
\end{proof}

\section{Numerical evaluations -- general $\alpha$ and $\kappa$}
\label{sec:numerical}

Corollary \ref{thme:negthmprac1} is utilized below to conduct numerical evaluations. We start with first lifting level ($r=1$) and continue to higher levels incrementally. To make the key points clearer, we distinguish between two scenarios: (\textbf{\emph{i}}) $r\in\{1,2\}$; and (\textbf{\emph{ii}}) $r\geq 3$. For a particular value $\kappa=1$  concrete numerical values of all relevant parameters' values and estimated capacities are given as well.

\subsection{Satisfiability threshold --  $r\in\{1,2\}$ }
\label{sec:nuemricalsat}

\subsubsection{$r=1$ -- first level of lifting}
\label{sec:firstlev}

For $r=1$ we first note $\hat{\p}_1\rightarrow 1$ and $\hat{\q}_1\rightarrow 1$. Together with $\hat{\p}_{r+1}=\hat{\p}_{2}=\hat{\q}_{r+1}=\hat{\q}_{2}=0$, and $\hat{\c}_{2}\rightarrow 0$  this allows to write
\begin{align}\label{eq:negprac19}
    \bar{\psi}_{rd}(\hat{\p},\hat{\q},\hat{\c},\gamma_{sq})   & =   \frac{1}{2}
\c_2
  - \frac{1}{\c_2}\log\lp \mE_{{\mathcal U}_2} e^{\c_2|\sqrt{1-0}\h_1^{(2)} |}\rp +\gamma_{sq}
- \alpha\frac{1}{\c_2}\log\lp \mE_{{\mathcal U}_2} e^{-\c_2\frac{\max\lp -\kappa+ \left |\sqrt{1-0}\u_1^{(2,2)} \right | ,0 \rp^2}{4\gamma_{sq}}}\rp \nonumber \\
& \rightarrow
  - \frac{1}{\c_2}\log\lp 1+ \mE_{{\mathcal U}_2} \c_2|\sqrt{1-0}\h_1^{(2)} |\rp +\gamma_{sq} \nonumber \\
& \qquad - \alpha\frac{1}{\c_2}\log\lp 1- \mE_{{\mathcal U}_2} \c_2\frac{ \max\lp -\kappa+ \left |\sqrt{1-0}\u_1^{(2,2)} \right | ,0 \rp^2 }{4\gamma_{sq}}\rp \nonumber \\
& \rightarrow
   - \frac{1}{\c_2}\log\lp 1+ \c_2\sqrt{\frac{2}{\pi}}\rp +\gamma_{sq} \nonumber \\
& \qquad - \alpha\frac{1}{\c_2}\log\lp 1- \frac{\c_2}{4\gamma_{sq}} \mE_{{\mathcal U}_2} \max\lp -\kappa+ \left |\sqrt{1-0}\u_1^{(2,2)} \right | ,0 \rp^2 \rp \nonumber \\
& \rightarrow
  - \sqrt{\frac{2}{\pi}}+\gamma_{sq}
+  \frac{\alpha}{4\gamma_{sq}}\mE_{{\mathcal U}_2} \max\lp -\kappa+ \left |\sqrt{1-0}\u_1^{(2,2)} \right | ,0 \rp^2 .
  \end{align}
Optimizing over $\gamma_{sq}$ one finds $\hat{\gamma}_{sq}=\frac{\sqrt{\alpha}}{2}\sqrt{\mE_{{\mathcal U}_2} \max\lp -\kappa+ \left |\u_1^{(2,2)} \right | ,0 \rp^2  }$ and
\begin{align}\label{eq:negprac20}
 - f_{sq}^{(1)}(\infty)=\bar{\psi}_{rd}(\hat{\p},\hat{\q},\hat{\c},\hat{\gamma}_{sq} )   & =
  - \sqrt{\frac{2}{\pi}}+\sqrt{\alpha}\sqrt{\mE_{{\mathcal U}_2} \max\lp -\kappa+ \left | \u_1^{(2,2)} \right | ,0 \rp^2  }.
  \end{align}
The first level of lifting SBP capacity estimate, $\alpha_c^{(1)}$, is obtained as critical $\alpha$ for which $f_{sq}^{(1)}(\infty)=0$. For a general $\kappa$ we have
\begin{equation}\label{eq:negprac20a0}
a_c^{(1)}(\kappa)
=  \frac{2}{\pi\mE_{{\mathcal U}_2} \max\lp - \kappa + \left | \u_1^{(2,2)} \right |  ,0\rp^2}
=  \frac{1}{\pi\lp \frac{ -\kappa e^{-\frac{\kappa^2}{2}}}{\sqrt{2\pi}} + \frac{(\kappa^2+1)\erfc\lp \frac{\kappa}{\sqrt{2}} \rp}{2}  \rp}.
  \end{equation}
Specialization to $\kappa=1$ gives
\begin{equation}\label{eq:negprac21}
\hspace{-.5in}(\mbox{\textbf{first level:}}) \qquad \quad  \alpha_c^{(1)}(1) =
\frac{2}{\pi\mE_{{\mathcal U}_2} \max\lp - 1 + \left | \u_1^{(2,2)} \right |  ,0\rp^2}
=  \frac{1}{\pi\lp \frac{ - e^{-\frac{1}{2}}}{\sqrt{2\pi}} + \erfc\lp \frac{1}{\sqrt{2}} \rp   \rp}
\approx  \bl{\mathbf{4.2250}}.
  \end{equation}

\subsubsection{$r=2$ -- second level of lifting}
\label{sec:secondlev}

For $r=2$ we again have $\hat{\p}_1\rightarrow 1$ and $\hat{\q}_1\rightarrow 1$,  and $\hat{\p}_{r+1}=\hat{\p}_{3}=\hat{\q}_{r+1}=\hat{\q}_{3}=0$, but in general   $\hat{\p}_2\neq0$, $\hat{\q}_2\neq0$, and $\hat{\c}_{2}\neq 0$.  Similarly to (\ref{eq:negprac19}), we first write
\begin{eqnarray}\label{eq:negprac24}
    \bar{\psi}_{rd}(\p,\q,\c,\gamma_{sq})   & = &  \frac{1}{2}
(1-\p_2\q_2)\c_2
  - \frac{1}{\c_2}\mE_{{\mathcal U}_3}\log\lp \mE_{{\mathcal U}_2} e^{\c_2|\sqrt{1-\q_2}\h_1^{(2)} +\sqrt{\q_2}\h_1^{(3)} |}\rp \nonumber \\
& &   + \gamma_{sq}
 -\alpha\frac{1}{\c_2}\mE_{{\mathcal U}_3} \log\lp \mE_{{\mathcal U}_2} e^{-\c_2\frac{\max\lp \left | \sqrt{1-\p_2}\u_1^{(2,2)}+\sqrt{\p_2}\u_1^{(2,3)} \right |  - \kappa ,0 \rp^2}{4\gamma_{sq}}}\rp.
    \end{eqnarray}
Solving the inner integral of the second term gives
\begin{eqnarray}\label{eq:negprac24a0}
f_{(z)}^{(2)} & = & \mE_{{\mathcal U}_2} e^{\c_2|\sqrt{1-\q_2}\h_1^{(2)} +\sqrt{\q_2}\h_1^{(3)} |}
  \nonumber \\
 & = &  \frac{1}{2}
 e^{\frac{(1-\q_2)\c_2^2}{2}}
 \Bigg(\Bigg.
 e^{-\c_2\sqrt{\q_2}\h_1^{(3)}}
 \erfc\lp - \lp\c_2\sqrt{1-\q_2}-\frac{\sqrt{\q_2}\h_1^{(3)}}{\sqrt{1-\q_2}}\rp\frac{1}{\sqrt{2}}\rp \nonumber \\
& &  + e^{\c_2\sqrt{\q_2}\h_1^{(3)}}
   \erfc\lp - \lp\c_2\sqrt{1-\q_2}+\frac{\sqrt{\q_2}\h_1^{(3)}}{\sqrt{1-\q_2}}\rp\frac{1}{\sqrt{2}}\rp
   \Bigg.\Bigg),
     \end{eqnarray}
and
\begin{eqnarray}\label{eq:negprac24a1}
  \mE_{{\mathcal U}_3}\log\lp \mE_{{\mathcal U}_2} e^{\c_2|\sqrt{1-\q_2}\h_1^{(2)} +\sqrt{\q_2}\h_1^{(3)} |}\rp
=  \mE_{{\mathcal U}_3}\log\lp f_{(z)}^{(2)}\rp.
    \end{eqnarray}
Analogously, solving the inner integral of the last term gives
\begin{eqnarray}\label{eq:negprac24a2}
\bar{h}_+ & = &  -\frac{\sqrt{\p_2}\u_1^{(2,3)}+\kappa}{\sqrt{1-\p_2}}    \nonumber \\
\bar{h}_- & = &  -\frac{\sqrt{\p_2}\u_1^{(2,3)}-\kappa}{\sqrt{1-\p_2}}    \nonumber \\
\bar{B} & = & \frac{\c_2}{4\gamma_{sq}} 
\nonumber \\
\bar{C}_+ & = & \sqrt{\p_2}\u_1^{(2,3)}+\kappa \nonumber \\
\bar{C}_- & = & \sqrt{\p_2}\u_1^{(2,3)}-\kappa \nonumber \\
f_{(z-)}^{(2,f)}& = & \frac{e^{-\frac{\bar{B}\bar{C}_-^2}{2(1-\p_2)\bar{B} + 1}}}{2\sqrt{2(1-\p_2)\bar{B} + 1}}
\erfc\lp\frac{\bar{h}_-}{\sqrt{4(1-\p_2)\bar{B} + 2}}\rp
\nonumber \\
f_{(z0)}^{(2,f)}& = & \frac{1}{2}\erfc\lp \frac{\bar{h}_+}{\sqrt{2}}\rp
-\frac{1}{2}\erfc\lp \frac{\bar{h}_-}{\sqrt{2}}\rp
,  
\nonumber \\
f_{(z+)}^{(2,f)}& = & \frac{e^{-\frac{\bar{B}\bar{C}_+^2}{2(1-\p_2)\bar{B} + 1}}}{2\sqrt{2(1-\p_2)\bar{B} + 1}}
\erfc\lp\frac{\bar{h}_+}{\sqrt{4(1-\p_2)\bar{B} + 2}}\rp
   \end{eqnarray}
and
\begin{eqnarray}\label{eq:negprac24a3}
   \mE_{{\mathcal U}_3} \log\lp \mE_{{\mathcal U}_2} e^{-\c_2\frac{\max \lp \left | \sqrt{1-\p_2}\u_1^{(2,2)}+\sqrt{\p_2}\u_1^{(2,3)} \right |  - \kappa  ,0 \rp^2}{4\gamma_{sq}}}\rp
=   \mE_{{\mathcal U}_3} \log\lp f_{(z-)}^{(2,f)}+f_{(z0)}^{(2,f)} + f_{(z+)}^{(2,f)}  \rp.
    \end{eqnarray}
After differentiation one finds  $\c_2\rightarrow \infty$, $\gamma_{sq}\rightarrow 0$,
$\q_2\c_2^2\rightarrow \q_2^{(s)}$, and
\begin{eqnarray}\label{eq:negprac24a4}
f_{(z)}^{(2)}
 & \rightarrow &
 e^{\frac{\c_2^2-\q_2^{(s)}}{2}}
 \Bigg(\Bigg.
 e^{-\sqrt{\q_2^{(s)}}\h_1^{(3)}}
    + e^{\sqrt{\q_2^{(s)}}\h_1^{(3)}}
    \Bigg.\Bigg).
     \end{eqnarray}
This then allows to rewrite (\ref{eq:negprac24a2}) as
\begin{eqnarray}\label{eq:negprac24a5}
\bar{h}_+ & = &  -\frac{\sqrt{\p_2}\u_i^{(2,3)}+\kappa}{\sqrt{1-\p_2}}    \nonumber \\
\bar{h}_- & = &  -\frac{\sqrt{\p_2}\u_i^{(2,3)}-\kappa}{\sqrt{1-\p_2}}    \nonumber \\
\bar{B} & = & \frac{\c_2}{4\gamma_{sq}} \rightarrow \infty
\nonumber \\
\bar{C}_+ & = & \sqrt{\p_2}\u_i^{(2,3)}+\kappa \nonumber \\
\bar{C}_- & = & \sqrt{\p_2}\u_i^{(2,3)}-\kappa \nonumber \\
f_{(z-)}^{(2,f)}& = & \frac{e^{-\frac{\bar{B}\bar{C}_-^2}{2(1-\p_2)\bar{B} + 1}}}{2\sqrt{2(1-\p_2)\bar{B} + 1}}
\erfc\lp\frac{\bar{h}_-}{\sqrt{4(1-\p_2)\bar{B} + 2}}\rp
\rightarrow 0
\nonumber \\
f_{(z0)}^{(2,f)}& = & \frac{1}{2}\erfc\lp \frac{\bar{h}_+}{\sqrt{2}}\rp
-\frac{1}{2}\erfc\lp \frac{\bar{h}_-}{\sqrt{2}}\rp
\rightarrow \frac{1}{2}\erfc\lp -\frac{\sqrt{\p_2}\u_i^{(2,3)}+\kappa}{\sqrt{2}\sqrt{1-\p_2}} \rp
-\frac{1}{2}\erfc\lp -\frac{\sqrt{\p_2}\u_i^{(2,3)}-\kappa}{\sqrt{2}\sqrt{1-\p_2}} \rp
\nonumber \\
f_{(z+)}^{(2,f)}& = & \frac{e^{-\frac{\bar{B}\bar{C}_+^2}{2(1-\p_2)\bar{B} + 1}}}{2\sqrt{2(1-\p_2)\bar{B} + 1}}
\erfc\lp\frac{\bar{h}_+}{\sqrt{4(1-\p_2)\bar{B} + 2}}\rp
\rightarrow 0.
   \end{eqnarray}
Combining (\ref{eq:negprac24}), (\ref{eq:negprac24a1}), (\ref{eq:negprac24a3}), (\ref{eq:negprac24a4}), and (\ref{eq:negprac24a5}) one finally arrives at
\begin{eqnarray}\label{eq:negprac24a6}
-f_{sq}^{(2,f)}(\infty) & = &     \bar{\psi}_{rd}(\p,\q,\c,\gamma_{sq}) \nonumber \\
  & = &  \frac{1}{2}
(1-\p_2\q_2)\c_2
  - \frac{1}{\c_2}\mE_{{\mathcal U}_3}\log\lp \mE_{{\mathcal U}_2} e^{\c_2|\sqrt{1-\q_2}\h_1^{(2)} +\sqrt{\q_2}\h_1^{(3)} |}\rp \nonumber \\
& &   + \gamma_{sq}
 -\alpha\frac{1}{\c_2}\mE_{{\mathcal U}_3} \log\lp \mE_{{\mathcal U}_2} e^{-\c_2\frac{\max\lp \left | \sqrt{1-\p_2}\u_1^{(2,2)}+\sqrt{\p_2}\u_1^{(2,3)} \right |  - \kappa ,0 \rp^2}{4\gamma_{sq}}}\rp \nonumber \\
 & = &  \frac{1}{2}
(1-\p_2\q_2)\c_2
  - \frac{1}{\c_2}\mE_{{\mathcal U}_3}\log\lp f_{(z)}^{(2)} \rp   + \gamma_{sq}
 -\alpha\frac{1}{\c_2}\mE_{{\mathcal U}_3} \log\lp f_{(z-)}^{(2,f)}+f_{(z0)}^{(2,f)} + f_{(z+)}^{(2,f)}  \rp \nonumber\\
 & \rightarrow & \frac{1}{2}
(1-\p_2\q_2)\c_2
  - \frac{1}{\c_2}\mE_{{\mathcal U}_3}\log\lp
 e^{\frac{\c_2^2-\q_2^{(s)}}{2}}
 \Bigg(\Bigg.
 e^{-\sqrt{\q_2^{(s)}}\h_1^{(3)}}
    + e^{\sqrt{\q_2^{(s)}}\h_1^{(3)}}
    \Bigg.\Bigg) \rp  \nonumber \\
& &
 -\alpha\frac{1}{\c_2}\mE_{{\mathcal U}_3} \log\lp   \frac{1}{2}\erfc\lp -\frac{\sqrt{\p_2}\u_1^{(2,3)}+\kappa}{\sqrt{2}\sqrt{1-\p_2}} \rp 
 -
 \frac{1}{2}\erfc\lp -\frac{\sqrt{\p_2}\u_1^{(2,3)}-\kappa}{\sqrt{2}\sqrt{1-\p_2}} \rp 
   \rp \nonumber\\
 & \rightarrow & \frac{1}{2}
\frac{(1-\p_2)\q_2^{(s)}}{\c_2}
  - \frac{1}{\c_2}\mE_{{\mathcal U}_3}\log\lp 2 \cosh \lp \sqrt{\q_2^{(s)}}\h_1^{(3)}\rp \rp
  \nonumber \\
& &
 -\alpha\frac{1}{\c_2}\mE_{{\mathcal U}_3} \log\lp   \frac{1}{2}\erfc\lp -\frac{\sqrt{\p_2}\u_1^{(2,3)}+\kappa}{\sqrt{2}\sqrt{1-\p_2}} \rp 
 -
 \frac{1}{2}\erfc\lp -\frac{\sqrt{\p_2}\u_1^{(2,3)}-\kappa}{\sqrt{2}\sqrt{1-\p_2}} \rp 
  \rp. \nonumber\\
    \end{eqnarray}
After determining all the derivatives and specializing to $\kappa=1$, we found no stationary points for which $\p_2\neq 0$ and/or $\q_2\neq 0$. This effectively implies that the optimum is reached on the \emph{partial} second level of lifting. We obtain as concrete numerical value of critical capacity estimate
\begin{equation}\label{eq:negprac25}
\hspace{-2in}(\mbox{\textbf{second level:}}) \qquad \qquad  \alpha_c^{(2)}(1) \approx
 \bl{\mathbf{1.8159}}.
  \end{equation}
Values of all parameters for both first  and second  level of lifting  (1-sfl-RDT and 2-sfl-RDT)  are shown in Table \ref{tab:tab1}. For decreasing  sequence $\c$, i.e., for
\begin{eqnarray}
\label{eq:addalg1}
1= \c_1 \geq \c_2 \geq \c_3\geq \dots\geq \c_{r+1}=0,
\end{eqnarray}
we found no further changes on higher ($r\geq 3$) lifting levels. This reconfirms that $\alpha_c^{(2)}(1) \approx
1.8159$ is indeed the SBP's satisfiability threshold (in agreement with what was obtained in \cite{AubPerZde19,GamKizPerXu22,PerkXu21,AbbLiSly21a,AbbLiSly21b,Bald20,Barb24,BarbAKZ23}).  In other words, even though the mathematical methodologies that we developed are completely different the final results they produce match the corresponding results available in the existing literature.

\begin{table}[h]
\caption{$r$-sfl-RDT parameters ($r\leq 2$);
 $\kappa=1$; $n,\beta\rightarrow\infty$}\vspace{.1in}
\centering
\def\arraystretch{1.2}
\begin{tabular}{||l||c||c|c||c|c||c||c||}\hline\hline
 \hspace{-0in}$r$-sfl-RDT                                             & $\hat{\gamma}_{sq}$    &  $\hat{\p}_2$ & $\hat{\p}_1$     & $\hat{\q}_2^{(s)}\rightarrow \hat{\q}_2\hat{\c}_2^2$  & $\hat{\q}_1$ &  $\hat{\c}_2$    & $\alpha_c^{(r)}(1)$  \\ \hline\hline
$1$-sfl-RDT                                      & $0.3989$ &  $0$  & $\rightarrow 1$   & $0$ & $\rightarrow 1$
 &  $\rightarrow 0$  & \bl{$\mathbf{4.2250}$} \\ \hline
   $2$-sfl-RDT                                      & $\rightarrow 0$  & $\rightarrow 0$ & $\rightarrow 1$ &  $\rightarrow 0$ & $\rightarrow 1$
 &  $\rightarrow \infty$   & \bl{$\mathbf{1.8159}$}  \\ \hline\hline
  \end{tabular}
\label{tab:tab1}
\end{table}

\subsection{Algorithmic threshold predictions --  $r\geq 3$}
\label{sec:numericalr3}

In \cite{Stojniccluphop25,Stojnicalgbp25} a similar phenomenon regarding the ordering of $\c$ sequence was observed. Namely, provided that $\c$ is decreasing, for $r\geq 3$ no stationary points (beyond those already existing on the first two levels) were found. However, once the decreasing restriction on $\c$ was relaxed a different story unfolded. For both scenarios -- the negative Hopfield one from \cite{Stojniccluphop25} and the ABP from \cite{Stojnicalgbp25} -- continuations of perfectly ordered sequences of decreasing capacity estimates were obtained. The sequences also seem to exhibit fast convergence properties and the converging values happen to be in an excellent numerical agrement with the predicated algorithmic thresholds.  Such a development is very intriguing and one may wonder if it is a consequence of a more general principle potentially applicable to different problems. We below provide a set of considerations in this direction and observe that SBP might indeed be another example where conclusions similar to those reached in \cite{Stojniccluphop25,Stojnicalgbp25}  might be applicable.

To ensure clarity of exposition and easiness of following we below try to parallel presentations of \cite{Stojniccluphop25,Stojnicalgbp25} to as large extent as possible. To that end, in the first next section we start with discussion related to $r=3$ scenario and proceed further with considerations related to incrementally increased $r$'s in the subsequent sections.

\subsubsection{$r=3$ -- third level of lifting}
\label{sec:thirdlev}

Analogously to what was done on the first two levels, we first note that for $r=3$, $\hat{\p}_1\rightarrow 1$ and $\hat{\q}_1\rightarrow 1$,  and $\hat{\p}_{r+1}=\hat{\p}_{4}=\hat{\q}_{r+1}=\hat{\q}_{4}=0$, but in general   $\hat{\p}_2\neq0$,  $\hat{\p}_3\neq0$, $\hat{\q}_2\neq0$, $\hat{\q}_3\neq0$,  $\hat{\c}_{2}\neq 0$, and $\hat{\c}_{3}\neq 0$.  Analogously to (\ref{eq:negprac19}) and (\ref{eq:negprac24}), we then write
\begin{eqnarray}\label{eq:algnegprac24}
    \bar{\psi}_{rd}(\p,\q,\c,\gamma_{sq})   & = &
     \frac{1}{2}
(1-\p_2\q_2)\c_2
 +  \frac{1}{2}
(\p_2\q_2-\p_3\q_3)\c_3
\nonumber \\
& &
  - \frac{1}{\c_3}\mE_{{\mathcal U}_4}\log \lp  \mE_{{\mathcal U}_3}  \lp \mE_{{\mathcal U}_2} e^{\c_2|\sqrt{1-\q_2}\h_1^{(2)} +\sqrt{\q_2-\q_3}\h_1^{(3)}+\sqrt{\q_3}\h_1^{(4)}   |}\rp^{\frac{\c_3}{\c_2}}  \rp \nonumber \\
& &   + \gamma_{sq}
 -\alpha\frac{1}{\c_3}\mE_{{\mathcal U}_4} \log \lp \mE_{{\mathcal U}_3}   \lp \mE_{{\mathcal U}_2} e^{-\c_2\frac{\max \lp \left | \sqrt{1-\p_2}\u_1^{(2,2)}+\sqrt{\p_2-\p_3}\u_1^{(2,3)}+\sqrt{\p_3}\u_1^{(2,4)} \right |  -\kappa ,0 \rp^2}{4\gamma_{sq}}}\rp^{\frac{\c_3}{\c_2}} \rp.\nonumber \\
    \end{eqnarray}
Following \cite{Stojnicalgbp25}'s (43)-(45), one solves the most inner integral of the third term and finds
\begin{eqnarray}\label{eq:algnegprac24a1}
  \mE_{{\mathcal U}_4}\log   \lp \mE_{{\mathcal U}_3}  \lp \mE_{{\mathcal U}_2} e^{\c_2|\sqrt{1-\q_2}\h_1^{(2)} +\sqrt{\q_2-\q_3}\h_1^{(3)}  +\sqrt{\q_3}\h_1^{(4)}  |}\rp^{\frac{\c_3}{\c_2}} \rp
=  \mE_{{\mathcal U}_4}\log  \lp  \mE_{{\mathcal U}_3}   \lp f_{(z)}^{(3)}\rp^{\frac{\c_3}{\c_2}} \rp,
    \end{eqnarray}
where
\begin{eqnarray}\label{eq:algnegprac24a0}
f_{(z)}^{(3)}  
 & = &  \frac{1}{2}
 e^{\frac{(1-\q_2)\c_2^2}{2}}
 \Bigg(\Bigg.
 e^{-\c_2\zeta_3  }
 \erfc\lp - \lp\c_2\sqrt{1-\q_2}-\frac{\zeta_3 } {\sqrt{1-\q_2}}\rp\frac{1}{\sqrt{2}}\rp \nonumber \\
& &  + e^{\c_2\zeta_3   }
   \erfc\lp - \lp\c_2\sqrt{1-\q_2}+\frac{\zeta_3 } {\sqrt{1-\q_2}}\rp\frac{1}{\sqrt{2}}\rp
   \Bigg.\Bigg)   ,
\end{eqnarray}
and
\begin{eqnarray} \label{eq:algnegprac24a0a0}
\zeta_3 = \sqrt{\q_2-\q_3}\h_1^{(3)} + \sqrt{\q_3}\h_1^{(4)}.
\end{eqnarray}
Proceeding in a similar fashion one solves the most inner integral of the last term and obtains
\begin{eqnarray}\label{eq:algnegprac24a2}
\eta_3 & = & \sqrt{\p_2-\p_3}\u_1^{(2,3)} + \sqrt{\p_3}\u_1^{(2,4)}    \nonumber \\
\bar{h}_{3+} & = &  -\frac{\sqrt{\p_2-\p_3}\u_1^{(2,3)}+\sqrt{\p_3}\u_1^{(2,4)}+\kappa}{\sqrt{1-\p_2}} = -\frac{\eta_3+\kappa}{\sqrt{1-\p_2}}    \nonumber \\
\bar{h}_{3-} & = &  -\frac{\sqrt{\p_2-\p_3}\u_1^{(2,3)}+\sqrt{\p_3}\u_1^{(2,4)}-\kappa}{\sqrt{1-\p_2}} = -\frac{\eta_3-\kappa}{\sqrt{1-\p_2}}    \nonumber \\
\bar{B} & = & \frac{\c_2}{4\gamma_{sq}} 
\nonumber \\
\bar{C}_{3+} & = & \sqrt{\p_2-\p_3}\u_1^{(2,3)}+\sqrt{\p_3}\u_1^{(2,4)}+\kappa   = \eta_3+\kappa \nonumber \\
\bar{C}_{3-} & = & \sqrt{\p_2-\p_3}\u_1^{(2,3)}+\sqrt{\p_3}\u_1^{(2,4)}-\kappa   = \eta_3-\kappa \nonumber \\
f_{(z-)}^{(3,f)}& = & \frac{e^{-\frac{\bar{B}\bar{C}_{3-}^2}{2(1-\p_2)\bar{B} + 1}}}{2\sqrt{2(1-\p_2)\bar{B} + 1}}
\erfc\lp\frac{\bar{h}_{3-}}{\sqrt{4(1-\p_2)\bar{B} + 2}}\rp
\nonumber \\
f_{(z0)}^{(3,f)}& = & \frac{1}{2}\erfc\lp \frac{\bar{h}_{3+}}{\sqrt{2}}\rp
- \frac{1}{2}\erfc\lp \frac{\bar{h}_{3-}}{\sqrt{2}}\rp,  
\nonumber \\
f_{(z+)}^{(3,f)}& = & \frac{e^{-\frac{\bar{B}\bar{C}_{3+}^2}{2(1-\p_2)\bar{B} + 1}}}{2\sqrt{2(1-\p_2)\bar{B} + 1}}
\erfc\lp\frac{\bar{h}_{3+}}{\sqrt{4(1-\p_2)\bar{B} + 2}}\rp ,
   \end{eqnarray}
and
\begin{multline} \label{eq:algnegprac24a3}
   \mE_{{\mathcal U}_4} \log  \lp \mE_{{\mathcal U}_3} \lp \mE_{{\mathcal U}_2} e^{-\c_2\frac{\max\lp \left | \sqrt{1-\p_2}\u_1^{(2,2)}+\sqrt{\p_2-\p_3}\u_1^{(2,3)} +\sqrt{\p_3}\u_1^{(2,4)} \right | -  \kappa  ,0\rp^2}{4\gamma_{sq}}}\rp^{\frac{\c_3}{\c_2}} \rp
=  \\
 = \mE_{{\mathcal U}_4} \log   \lp \mE_{{\mathcal U}_3}  \lp f_{(z-)}^{(3,f)}+f_{(z0)}^{(3,f)}\rp^{\frac{\c_3}{\c_2}}
+ f_{(z+)}^{(3,f)} \rp .
    \end{multline}
Drawing parallel with second lifting level, we now have $\c_2\rightarrow \infty$, $\gamma_{sq}\rightarrow 0$,
$\q_2\c_2^2\rightarrow \q_2^{(s)}$, $\q_3\c_2^2\rightarrow \q_3^{(s)}$, $\frac{\c_3}{\c_2}\rightarrow \c_3^{(s)} $ and
\begin{eqnarray}\label{eq:algnegprac24a4}
f_{(z)}^{(3)}
&  \rightarrow &
 e^{\frac{\c_2^2-\q_2^{(s)}}{2}}
 \Bigg(\Bigg.
 e^{- \lp  \sqrt{\q_2^{(s)} - \q_3^{(s)} }\h_1^{(3)}   +  \sqrt{\q_3^{(s)}}\h_1^{(4)}      \rp }
    + e^{\sqrt{\q_2^{(s)} - \q_3^{(s)} }\h_1^{(3)}   +  \sqrt{\q_3^{(s)}}\h_1^{(4)}   }
    \Bigg.\Bigg)
\nonumber \\
&  =  &
   e^{\frac{\c_2^2-\q_2^{(s)}}{2}}
 \Bigg(\Bigg.
 e^{- \zeta_3^{(s)} }
    + e^{\zeta_3^{(s)}   }
    \Bigg.\Bigg)
      =
  e^{\frac{\c_2^2-\q_2^{(s)}}{2}}
 \Bigg(\Bigg.
  2\cosh \lp \zeta_3^{(s)} \rp
    \Bigg.\Bigg)  ,
\end{eqnarray}
where
\begin{eqnarray}\label{eq:algnegprac24a4a0}
\zeta_3^{(s)}  = \sqrt{\q_2^{(s)} - \q_3^{(s)} }\h_1^{(3)}   +  \sqrt{\q_3^{(s)}}\h_1^{(4)} .
\end{eqnarray}
Moreover, (\ref{eq:algnegprac24a2}) can be transformed  into
\begin{eqnarray}\label{eq:algnegprac24a5}
\bar{h}_{3+} & = &  -\frac{\eta_3+\kappa}{\sqrt{1-\p_2}}    \nonumber \\
\bar{h}_{3-} & = &  -\frac{\eta_3-\kappa}{\sqrt{1-\p_2}}    \nonumber \\
\bar{B} & = & \frac{\c_2}{4\gamma_{sq}} \rightarrow \infty
\nonumber \\
\bar{C}_{3+} & = & \eta_3+\kappa \nonumber \\
\bar{C}_{3-} & = & \eta_3-\kappa \nonumber \\
f_{(z-)}^{(3,f)}& = & \frac{e^{-\frac{\bar{B}\bar{C}_{3-}^2}{2(1-\p_2)\bar{B} + 1}}}{2\sqrt{2(1-\p_2)\bar{B} + 1}}
\erfc\lp\frac{\bar{h}_{3-}}{\sqrt{4(1-\p_2)\bar{B} + 2}}\rp
\rightarrow 0
\nonumber \\
f_{(z0)}^{(3,f)}& = & \frac{1}{2}\erfc\lp \frac{\bar{h}_{3+}}{\sqrt{2}}\rp
- \frac{1}{2}\erfc\lp \frac{\bar{h}_{3-}}{\sqrt{2}}\rp
\rightarrow \frac{1}{2}\erfc\lp -\frac{\eta_3+ \kappa}{\sqrt{2}\sqrt{1-\p_2}} \rp
- \frac{1}{2}\erfc\lp -\frac{\eta_3 - \kappa}{\sqrt{2}\sqrt{1-\p_2}} \rp
\nonumber \\
f_{(z+)}^{(3,f)}& = & \frac{e^{-\frac{\bar{B}\bar{C}_{3+}^2}{2(1-\p_2)\bar{B} + 1}}}{2\sqrt{2(1-\p_2)\bar{B} + 1}}
\erfc\lp\frac{\bar{h}_{3+}}{\sqrt{4(1-\p_2)\bar{B} + 2}}\rp
\rightarrow 0.
   \end{eqnarray}
A combination of (\ref{eq:algnegprac24}), (\ref{eq:algnegprac24a1}), (\ref{eq:algnegprac24a3}), (\ref{eq:algnegprac24a4}), and (\ref{eq:algnegprac24a5}) gives
\begin{eqnarray}\label{eq:algnegprac24a6}
-f_{sq}^{(3,f)}(\infty) & = &     \bar{\psi}_{rd}(\p,\q,\c,\gamma_{sq})
\nonumber \\
 & = &
     \frac{1}{2}
(1-\p_2\q_2)\c_2
 +  \frac{1}{2}
(\p_2\q_2-\p_3\q_3)\c_3
\nonumber \\
& &
  - \frac{1}{\c_3}\mE_{{\mathcal U}_4}\log \lp  \mE_{{\mathcal U}_3}  \lp \mE_{{\mathcal U}_2} e^{\c_2|\sqrt{1-\q_2}\h_1^{(2)} +\sqrt{\q_2-\q_3}\h_1^{(3)}+\sqrt{\q_3}\h_1^{(4)}   |}\rp^{\frac{\c_3}{\c_2}}  \rp \nonumber \\
& &   + \gamma_{sq}
 -\alpha\frac{1}{\c_3}\mE_{{\mathcal U}_4} \log \lp \mE_{{\mathcal U}_3}   \lp \mE_{{\mathcal U}_2} e^{-\c_2\frac{\max\lp \left | \sqrt{1-\p_2}\u_1^{(2,2)}+\sqrt{\p_2-\p_3}\u_1^{(2,3)}+\sqrt{\p_3}\u_1^{(2,4)} \right | - \kappa ,0\rp^2}{4\gamma_{sq}}}\rp^{\frac{\c_3}{\c_2}} \rp 
 \nonumber \\
& \rightarrow & \frac{1}{2}
(1-\p_2\q_2)\c_2
 +  \frac{1}{2}
(\p_2\q_2-\p_3\q_3)\c_3
  - \frac{1}{\c_3}\mE_{{\mathcal U}_4}\log  \lp \mE_{{\mathcal U}_3}   \lp
 e^{\frac{\c_2^2-\q_2^{(s)}}{2}}
 \Bigg(\Bigg.
  2\cosh \lp \zeta_3^{(s)} \rp
    \Bigg.\Bigg) \rp^{\frac{\c_3}{\c_2}}  \rp
     \nonumber \\
& &
 -\alpha\frac{1}{\c_3}  \mE_{{\mathcal U}_4} \log  \lp \mE_{{\mathcal U}_3}  \lp   \frac{1}{2}\erfc\lp -\frac{\eta_3+\kappa}{\sqrt{2}\sqrt{1-\p_2}} \rp 
 -\frac{1}{2}\erfc\lp -\frac{\eta_3-\kappa}{\sqrt{2}\sqrt{1-\p_2}} \rp
  \rp^{\frac{\c_3}{\c_2}} \rp
 \nonumber\\
 & \rightarrow & \frac{1}{2}
\frac{(1-\p_2)\q_2^{(s)}}{\c_2}
 +  \frac{1}{2}
(\p_2\q_2-\p_3\q_3)\c_3
  - \frac{1}{\c_3}\mE_{{\mathcal U}_4}\log  \lp \mE_{{\mathcal U}_3}   \lp
   2\cosh \lp \zeta_3^{(s)} \rp
  \rp^{\frac{\c_3}{\c_2}}  \rp
  \nonumber \\
& &
 -\alpha\frac{1}{\c_3}  \mE_{{\mathcal U}_4} \log  \lp \mE_{{\mathcal U}_3}  \lp   \frac{1}{2}\erfc\lp -\frac{\eta_3+\kappa}{\sqrt{2}\sqrt{1-\p_2}} \rp 
 -\frac{1}{2}\erfc\lp -\frac{\eta_3-\kappa}{\sqrt{2}\sqrt{1-\p_2}} \rp
  \rp^{\frac{\c_3}{\c_2}} \rp
  \nonumber
  \\
& \rightarrow & \frac{1}{2}
\frac{(1-\p_2)\q_2^{(s)}}{\c_2}
 +  \frac{1}{2}
\frac{(\p_2\q_2^{(s)}-\p_3\q_3^{(s)})\c_3^{(s)}}{\c_2}
  - \frac{1}{\c_2\c_3^{(s)}}\mE_{{\mathcal U}_4}\log  \lp \mE_{{\mathcal U}_3}   \lp
  2\cosh \lp \zeta_3^{(s)} \rp
     \rp^{\c_3^{(s)}}  \rp
  \nonumber \\
& &
 -\alpha\frac{1}{\c_2\c_3^{(s)}}  \mE_{{\mathcal U}_4} \log  \lp \mE_{{\mathcal U}_3}  \lp   \frac{1}{2}\erfc\lp -\frac{\eta_3+\kappa}{\sqrt{2}\sqrt{1-\p_2}} \rp 
 - \frac{1}{2}\erfc\lp -\frac{\eta_3-\kappa}{\sqrt{2}\sqrt{1-\p_2}} \rp \rp^{\c_3^{(s)}} \rp .
    \end{eqnarray}
After setting
\begin{eqnarray}\label{eq:algnegprac24a6a0}
   \bar{\psi}_{rd}^{(3,s)}(\p,\q,\c,\gamma_{sq})
\hspace{-.0in}  & \triangleq & \hspace{-.0in}
  \frac{1}{2}
(1-\p_2)\q_2^{(s)}
 +  \frac{1}{2}
\lp \p_2\q_2^{(s)}-\p_3\q_3^{(s)} \rp\c_3^{(s)}
  - \frac{1}{\c_3^{(s)}}\mE_{{\mathcal U}_4}\log  \lp \mE_{{\mathcal U}_3}   \lp
  2\cosh \lp \zeta_3^{(s)} \rp
     \rp^{\c_3^{(s)}}  \rp
  \nonumber \\
\hspace{-.0in} & & \hspace{-.0in}
 -\alpha\frac{1}{\c_3^{(s)}}  \mE_{{\mathcal U}_4} \log  \lp \mE_{{\mathcal U}_3}  \lp  
  \frac{1}{2}\erfc\lp -\frac{\eta_3+\kappa}{\sqrt{2}\sqrt{1-\p_2}} \rp  
  -
    \frac{1}{2}\erfc\lp -\frac{\eta_3-\kappa}{\sqrt{2}\sqrt{1-\p_2}} \rp 
  \rp^{\c_3^{(s)}} \rp , \nonumber \\
    \end{eqnarray}
the critical condition to determine capacity, $f_{sq}(\infty)=0$, on the third level becomes $  \bar{\psi}_{rd}^{(3,s)}(\p,\q,\c,\gamma_{sq}) = 0$. Evaluating numerically for $\kappa=1$ specialization we obtain \emph{virtual} capacity estimate
\begin{equation}\label{eq:algnegprac25}
\hspace{-2in}(\mbox{\textbf{third level:}}) \qquad \qquad  \alpha_c^{(3)}(1) \approx
 \red{\mathbf{1.6576}}.
  \end{equation}
 The parameters values that produce the above estimate are given in Table \ref{tab:tab2} (the corresponding first and second lifting levels values obtained in earlier sections are shown in parallel as well).
\begin{table}[h]
\caption{$r$-sfl-RDT parameters  ($r\leq 3$);   $\hat{\c}_2\rightarrow \infty$;   $\hat{\c}_3^{(s)} = \lim_{\hat{\c}_2\rightarrow\infty} \frac{\hat{\c}_3}{\hat{\c}_2}$; $\kappa=1$; $n,\beta\rightarrow\infty$}\vspace{.1in}
\centering
\def\arraystretch{1.2}
{\small \begin{tabular}{||l||c||c|c|c||c|c|c||c|c||c||}\hline\hline
 \hspace{-0in}$r$-sfl-RDT                                             & $\hat{\gamma}_{sq}$    &  $\hat{\p}_3$  &  $\hat{\p}_2$ & $\hat{\p}_1$     & $\hat{\q}_3^{(s)}\rightarrow \hat{\q}_3\hat{\c}_2^2$  & $\hat{\q}_2^{(s)}\rightarrow \hat{\q}_2\hat{\c}_2^2$  & $\hat{\q}_1$ &  $\hat{\c}_3^{(s)}$  &  $\hat{\c}_2$    & $\alpha_c^{(r)}(1)$  \\ \hline\hline
$1$-sfl-RDT                                      & $0.3989$ &  $0$  &  $0$   & $\rightarrow 1$  &  $0$    & $0$ & $\rightarrow 1$
& $ \rightarrow 0 $ &  $\rightarrow 0$  & \bl{$\mathbf{4.2250}$} \\ \hline
   $2$-sfl-RDT                                      & $\rightarrow 0$ &  $0$   & $\rightarrow 0$ & $\rightarrow 1$ &   $0$  & $\rightarrow 0$ & $\rightarrow 1$
& $ \rightarrow 0 $ &  $\rightarrow \infty$   & \bl{$\mathbf{1.8159}$}  \\ \hline\hline
   $3$-sfl-RDT                                      & $\rightarrow 0$ &  $\rightarrow 0$    & $0.9852$ & $\rightarrow 1$ &  $\rightarrow 0$   &  $0.8794$ & $\rightarrow 1$
& $4.2629$  &  $\rightarrow \infty$   & \red{$\mathbf{1.6576}$}  \\ \hline\hline
  \end{tabular}}
\label{tab:tab2}
\end{table}
Looking at Table \ref{tab:tab2} one notes that $\frac{\c_3}{\c_2}=\c_3^{(s)}>1$, which, under the natural (physical) decreasing sequence $\c$ assumption, renders the obtained capacity estimate as nonphysical.  While the estimate does not seem to be directly related to SBP \emph{satisfiability} capacity, it has two interesting properties: (\textbf{\emph{i}}) 
$\alpha_c^{(3)}(1)<\alpha_c^{(2)}(1)$;  and (\textbf{\emph{ii}}) not only is $\alpha_c^{(3)}(1)\approx 1.6576$ substantially smaller than $\alpha_c^{(2)}(1)\approx 1.8159$, it is actually much closer to a range around $\sim 1.6$. Properties of this type were observed and tightly connected to statistical computational gaps (SCG) when studying ABP and negative Hopfield models in \cite{Stojniccluphop25,Stojnicalgbp25}. In particular, in \cite{Stojnicalgbp25} the limiting value $\lim_{r\rightarrow \infty}\alpha_c^{(r)}(1)$ was proposed as the \emph{algorithmic} capacity (analogous predictions were made in \cite{Stojniccluphop25} for the algorithmically achievable GSEs). Whether propositions of this type extend to SBP is intriguing. In that regard it is useful to recall that ABP's SCG connection was additionally substantiated with presence of: (\textbf{\emph{i}}) polynomial algorithms that can approach (at least some reasonable vicinity of) the proposed algorithmic capacity; and  (\textbf{\emph{ii}}) solid logical/theoretical justifications that could serve as additional motivation. Conveniently, in \cite{Bald20} clustering structure of atypical SBP solutions was studied via local entropy approaches. Utilizing replica methods rare dense solutions' clusters were predicted to disappear for constraints densities $\alpha_{LE}\approx 1.58$. This is in a way similar to ABP results of \cite{Bald15,Bald16,Bald21,Stojnicabple25}  where $\alpha$ interval $(0.77,0.78)$ was indicated as the clustering defragmentation range (\cite{Bald15} further postulates that such abrupt clusters' structuring change may be a plausible cause for failure of locally improving algorithms).

While the potential algorithmic capacity estimate that we have obtained above, $\alpha_c^{(3)}(1)=1.6576$, is much lower than satisfiability one $1.8159$ (thereby pointing to a substantial SCG), it is still somewhat distant from the local entropy  $\alpha_{LE}\approx 1.58$ defragmentation prediction. Moreover, the efficient algorithms in the predicated $\alpha\leq \alpha_{LE}$ range are desired as well. The following sections address these two mismatches. The former one is pursued via higher lifting levels analyses, whereas the latter one relies on particular designs of SBP adapted CLuP algorithms \cite{Stojnicclupspreg20,Stojnicclupint19,Stojniccluphop25,Stojnicclupsk25,Stojnicalgbp25}.

\subsubsection{General $r$--th level of lifting}
\label{sec:rthlev}

It is not that hard to predict that numerical evaluations would get more computationally challenging  as $r$ increases. Efficient $r$-level generalizations might be a bit helpful in that regard and we show them next.

Pralleling what was done on the first three levels, we for general $r$ first note  $\hat{\p}_1\rightarrow 1$ and $\hat{\q}_1\rightarrow 1$,  and $\hat{\p}_{r+1}=\hat{\q}_{r+1}=0$, and    $\hat{\p}_k\neq0$, $\hat{\q}_k\neq0$, and $\hat{\c}_{k}\neq 0$ for $2\leq k\leq r$.  Analogously to (\ref{eq:algnegprac24}) (and earlier (\ref{eq:negprac19}) and (\ref{eq:negprac24})), we first write
\begin{eqnarray}\label{eq:algalgnegprac24}
    \bar{\psi}_{rd}(\p,\q,\c,\gamma_{sq})   & = &
     \frac{1}{2}
\sum_{k=2}^{r+1}(\p_{k-1}\q_{k-1}-\p_k\q_k)\c_k
 \nonumber \\
& &
  - \frac{1}{\c_r}\mE_{{\mathcal U}_{r+1}}\log \lp \dots \mE_{{\mathcal U}_4}  \lp  \mE_{{\mathcal U}_3}  \lp \mE_{{\mathcal U}_2} e^{\c_2|  \sum_{k=2}^{r+1} c_k\h_1^{(k)}    |}\rp^{\frac{\c_3}{\c_2}}  \rp^{\frac{\c_4}{\c_3}} \dots \rp    \nonumber \\
& &   + \gamma_{sq}
 -\alpha\frac{1}{\c_r}\mE_{{\mathcal U}_{r+1}} \log \lp \dots \mE_{{\mathcal U}_4}  \lp \mE_{{\mathcal U}_3}   \lp \mE_{{\mathcal U}_2} e^{-\c_2\frac{\max\lp \left |  \sum_{k=2}^{r+1} b_k \u_1^{(2,k)} \right |  -\kappa ,0\rp^2}{4\gamma_{sq}}}\rp^{\frac{\c_3}{\c_2}} \rp^{\frac{\c_4}{\c_3}} \dots \rp ,\nonumber \\
    \end{eqnarray}
where, as in (\ref{eq:fl5}),
\begin{eqnarray}\label{eq:algalgfl5}
  b_k  & = & \sqrt{\p_{k-1}-\p_k} \nonumber \\
c_k   & = & \sqrt{\q_{k-1}-\q_k}.
 \end{eqnarray}
As in \cite{Stojnicalgbp25}'s (56)-(58), solving the most inner integral gives 
\begin{equation}\label{eq:algalgnegprac24a1}
\mE_{{\mathcal U}_{r+1}}\log \lp \dots \mE_{{\mathcal U}_4}  \lp  \mE_{{\mathcal U}_3}  \lp \mE_{{\mathcal U}_2} e^{\c_2|  \sum_{k=2}^{r+1} c_k\h_1^{(k)}    |}\rp^{\frac{\c_3}{\c_2}}  \rp^{\frac{\c_4}{\c_3}} \dots \rp
=  \mE_{{\mathcal U}_{r+1}}\log  \lp  \dots \mE_{{\mathcal U}_4}  \lp  \mE_{{\mathcal U}_3}   \lp f_{(z)}^{(r)}\rp^{\frac{\c_3}{\c_2}} \rp^{\frac{\c_4}{\c_3}} \dots \rp,
    \end{equation}
 where
\begin{eqnarray}\label{eq:algalgnegprac24a0}
f_{(z)}^{(r)} & = & \mE_{{\mathcal U}_2} e^{\c_2| \sum_{k=2}^{r+1} c_k \h_1^{(k)}  |}
  \nonumber \\
 & = &
  \frac{1}{2}
 e^{\frac{(1-\q_2)\c_2^2}{2}}
 \Bigg(\Bigg.
 e^{-\c_2\zeta_r  }
 \erfc\lp - \lp\c_2\sqrt{1-\q_2}-\frac{\zeta_r } {\sqrt{1-\q_2}}\rp\frac{1}{\sqrt{2}}\rp
 \nonumber \\
& &
 + e^{\c_2\zeta_r   }
   \erfc\lp - \lp\c_2\sqrt{1-\q_2}+\frac{\zeta_r } {\sqrt{1-\q_2}}\rp\frac{1}{\sqrt{2}}\rp
   \Bigg.\Bigg)   ,
\end{eqnarray}
and
\begin{eqnarray} \label{eq:algalgnegprac24a0a0}
\zeta_r = \sum_{k=3}^{r+1} c_k \h_1^{(k)}.
\end{eqnarray}
 After solving the most inner integral of the last term we find
\begin{eqnarray}\label{eq:algalgnegprac24a2}
\eta_r & = & \sum_{k=3}^{r+1} b_k \u_1^{(2,k)}    \nonumber \\
\bar{h}_{3+} & = &  -\frac{\sum_{k=3}^{r+1} b_k \u_1^{(2,k)} +\kappa}{\sqrt{1-\p_2}} = -\frac{\eta_r+\kappa}{\sqrt{1-\p_2}}    \nonumber \\
\bar{h}_{3-} & = &  -\frac{\sum_{k=3}^{r+1} b_k \u_1^{(2,k)} +\kappa}{\sqrt{1-\p_2}} = -\frac{\eta_r-\kappa}{\sqrt{1-\p_2}}    \nonumber \\
\bar{B} & = & \frac{\c_2}{4\gamma_{sq}} 
\nonumber \\
\bar{C}_{r+} & = & \sum_{k=3}^{r+1} b_k \u_1^{(2,k)}  +\kappa   = \eta_r+\kappa \nonumber \\
\bar{C}_{r-} & = & \sum_{k=3}^{r+1} b_k \u_1^{(2,k)}  +\kappa   = \eta_r-\kappa \nonumber \\
f_{(z-)}^{(r,f)}& = & \frac{e^{-\frac{\bar{B}\bar{C}_{r-}^2}{2(1-\p_2)\bar{B} + 1}}}{2\sqrt{2(1-\p_2)\bar{B} + 1}}
\erfc\lp\frac{\bar{h}_{r-}}{\sqrt{4(1-\p_2)\bar{B} + 2}}\rp
\nonumber \\
f_{(z0)}^{(r,f)}& = & \frac{1}{2}\erfc\lp \frac{\bar{h}_{r+}}{\sqrt{2}}\rp
-
\frac{1}{2}\erfc\lp \frac{\bar{h}_{r-}}{\sqrt{2}}\rp,  
\nonumber \\
f_{(z+)}^{(r,f)}& = & \frac{e^{-\frac{\bar{B}\bar{C}_{r+}^2}{2(1-\p_2)\bar{B} + 1}}}{2\sqrt{2(1-\p_2)\bar{B} + 1}}
\erfc\lp\frac{\bar{h}_{r+}}{\sqrt{4(1-\p_2)\bar{B} + 2}}\rp
   \end{eqnarray}
and
\begin{multline} \label{eq:algalgnegprac24a3}
   \mE_{{\mathcal U}_{r+1}} \log  \lp \dots \mE_{{\mathcal U}_4}  \lp \mE_{{\mathcal U}_3} \lp \mE_{{\mathcal U}_2} e^{-\c_2\frac{\max(\sum_{k=2}^{r+1} b_k \u_1^{(2,k)} +  \kappa  ,0)^2}{4\gamma_{sq}}}\rp^{\frac{\c_3}{\c_2}} \rp^{\frac{\c_4}{\c_3}} \dots \rp =
   \\
=   \mE_{{\mathcal U}_{r+1}} \log   \lp \dots  \mE_{{\mathcal U}_4}  \lp \mE_{{\mathcal U}_3}  \lp f_{(zd)}^{(r,f)}+f_{(zu)}^{(r,f)}\rp^{\frac{\c_3}{\c_2}} \rp^{\frac{\c_4}{\c_3}} \dots \rp .
    \end{multline}
Similarly to what was done in Section \ref{sec:thirdlev}, one now finds $\c_2\rightarrow \infty$, $\gamma_{sq}\rightarrow 0$,
$\q_k\c_2^2\rightarrow \q_k^{(s)}$, $\frac{\c_k}{\c_2}\rightarrow \c_k^{(s)} $, $2\leq k\leq r$, and
\begin{eqnarray}\label{eq:algalgnegprac24a4}
f_{(z)}^{(r)}
&  \rightarrow &
 e^{\frac{\c_2^2-\q_2^{(s)}}{2}}
 \Bigg(\Bigg.
 e^{- \lp \sum_{k=3}^{r+1} c_k \h_1^{(k)}      \rp }
    + e^{\sum_{k=3}^{r+1} c_k \h_1^{(k)}    }
    \Bigg.\Bigg)
\nonumber \\
&  =  &
   e^{\frac{\c_2^2-\q_2^{(s)}}{2}}
 \Bigg(\Bigg.
 e^{- \zeta_r^{(s)} }
    + e^{\zeta_r^{(s)}   }
    \Bigg.\Bigg)
      =
  e^{\frac{\c_2^2-\q_2^{(s)}}{2}}
 \Bigg(\Bigg.
  2\cosh \lp \zeta_r^{(s)} \rp
    \Bigg.\Bigg)  ,
\end{eqnarray}
where
\begin{eqnarray}\label{eq:algalgnegprac24a4a0}
\zeta_r^{(s)} & = & \sum_{k=3}^{r+1} c_k^{(s)} \h_1^{(4)} \nonumber \\
c_k^{(s)} & =  &  \sqrt{\q_{k-1}^{(s)}-\q_k^{(s)}},3\leq k\leq r+1.
\end{eqnarray}
Moreover, (\ref{eq:algalgnegprac24a2})  can be transformed into
\begin{eqnarray}\label{eq:algalgnegprac24a5}
\bar{h}_{r+} & = &  -\frac{\eta_r+\kappa}{\sqrt{1-\p_2}}    \nonumber \\
\bar{h}_{r-} & = &  -\frac{\eta_r-\kappa}{\sqrt{1-\p_2}}    \nonumber \\
\bar{B} & = & \frac{\c_2}{4\gamma_{sq}} \rightarrow \infty
\nonumber \\
\bar{C}_{r+} & = & \eta_3+\kappa \nonumber \\
\bar{C}_{r-} & = & \eta_3-\kappa \nonumber \\
f_{(z-)}^{(r,f)}& = & \frac{e^{-\frac{\bar{B}\bar{C}_{r-}^2}{2(1-\p_2)\bar{B} + 1}}}{2\sqrt{2(1-\p_2)\bar{B} + 1}}
\erfc\lp\frac{\bar{h}_{r-}}{\sqrt{4(1-\p_2)\bar{B} + 2}}\rp
\rightarrow 0
\nonumber \\
f_{(z0)}^{(r,f)}& = & \frac{1}{2}\erfc\lp \frac{\bar{h}_{r+}}{\sqrt{2}}\rp
-
\frac{1}{2}\erfc\lp\frac{\bar{h}_{r-}}{\sqrt{2}}\rp
\rightarrow \frac{1}{2}\erfc\lp - \frac{\eta_r+ \kappa}{\sqrt{2}\sqrt{1-\p_2}} \rp
-
\frac{1}{2}\erfc\lp - \frac{\eta_r- \kappa}{\sqrt{2}\sqrt{1-\p_2}} \rp
\nonumber \\
f_{(z+)}^{(r,f)}& = & \frac{e^{-\frac{\bar{B}\bar{C}_{r+}^2}{2(1-\p_2)\bar{B} + 1}}}{2\sqrt{2(1-\p_2)\bar{B} + 1}}
\erfc\lp\frac{\bar{h}_{r+}}{\sqrt{4(1-\p_2)\bar{B} + 2}}\rp
\rightarrow 0.
   \end{eqnarray}
Combining  (\ref{eq:algalgnegprac24}), (\ref{eq:algalgnegprac24a1}), (\ref{eq:algalgnegprac24a3}), (\ref{eq:algalgnegprac24a4}), and (\ref{eq:algalgnegprac24a5}) we obtain
\begin{eqnarray}\label{eq:algalgnegprac24a6b0}
-f_{sq}^{(r,f)}(\infty) & = &     \bar{\psi}_{rd}(\p,\q,\c,\gamma_{sq})
\nonumber \\
 & = &
     \frac{1}{2}
\sum_{k=2}^{r+1}(\p_{k-1}\q_{k-1}-\p_k\q_k)\c_k
 \nonumber \\
& &
  - \frac{1}{\c_r}\mE_{{\mathcal U}_{r+1}}\log \lp \dots \mE_{{\mathcal U}_4}  \lp  \mE_{{\mathcal U}_3}  \lp \mE_{{\mathcal U}_2} e^{\c_2|  \sum_{k=2}^{r+1} c_k\h_1^{(k)}    |}\rp^{\frac{\c_3}{\c_2}}  \rp^{\frac{\c_4}{\c_3}} \dots \rp    \nonumber \\
& &   + \gamma_{sq}
 -\alpha\frac{1}{\c_r}\mE_{{\mathcal U}_{r+1}} \log \lp \dots \mE_{{\mathcal U}_4}  \lp \mE_{{\mathcal U}_3}   \lp \mE_{{\mathcal U}_2} e^{-\c_2\frac{\max\lp \left | \sum_{k=2}^{r+1} b_k \u_1^{(2,k)} \right | - \kappa ,0\rp^2}{4\gamma_{sq}}}\rp^{\frac{\c_3}{\c_2}} \rp^{\frac{\c_4}{\c_3}} \dots \rp ,
 \nonumber \\
 & = &
       \frac{1}{2}
\sum_{k=2}^{r+1}(\p_{k-1}\q_{k-1}-\p_k\q_k)\c_k
  - \frac{1}{\c_r}
  \mE_{{\mathcal U}_{r+1}}\log  \lp  \dots \mE_{{\mathcal U}_4}  \lp  \mE_{{\mathcal U}_3}   \lp f_{(z)}^{(r)}\rp^{\frac{\c_3}{\c_2}} \rp^{\frac{\c_4}{\c_3}} \dots \rp
   \nonumber \\
& &   + \gamma_{sq}
 -\alpha\frac{1}{\c_r}
  \mE_{{\mathcal U}_{r+1}} \log   \lp \dots  \mE_{{\mathcal U}_4}  \lp \mE_{{\mathcal U}_3}  \lp f_{(z-)}^{(r,f)}+f_{(z0)}^{(r,f)} + f_{(z+)}^{(r,f)}   \rp^{\frac{\c_3}{\c_2}} \rp^{\frac{\c_4}{\c_3}} \dots \rp,
    \end{eqnarray}
    end 
\begin{align}    \label{eq:algalgnegprac24a6}
 & -f_{sq}^{(r,f)}(\infty)
    \rightarrow 
       \frac{1}{2\c_2}
\sum_{k=2}^{r+1} (\p_{k-1}\q_{k-1}^{(s)}-\p_k\q_k^{(s)})\c_k^{(s)}
\\
 &  - \frac{1}{\c_2\c_r^{(s)}}\mE_{{\mathcal U}_{r+1}}\log  \lp \dots \mE_{{\mathcal U}_4}   \lp \mE_{{\mathcal U}_3}   \lp
  2\cosh \lp \zeta_r^{(s)} \rp
     \rp^{\c_3^{(s)}}  \rp^{\frac{\c_4^{(s)}}{\c_3^{(s)}}} \dots \rp
  \\
 & -\alpha\frac{1}{\c_2\c_r^{(s)}}  \mE_{{\mathcal U}_{r+1}} \log  \lp \dots \mE_{{\mathcal U}_4}  \lp \mE_{{\mathcal U}_3}  \lp   \frac{1}{2}\erfc\lp -\frac{\eta_3+\kappa}{\sqrt{2}\sqrt{1-\p_2}} \rp  
 -
 \frac{1}{2}\erfc\lp -\frac{\eta_3-\kappa}{\sqrt{2}\sqrt{1-\p_2}} \rp 
 \rp^{\c_3^{(s)}} \rp^{\frac{\c_4^{(s)}}{\c_3^{(s)}}} \dots \rp .
    \end{align}
Following (\ref{eq:algnegprac24a6a0}) one defines
\begin{align}\label{eq:algalgnegprac24a6a0}
    & \bar{\psi}_{rd}^{(r,s)}(\p,\q,\c,\gamma_{sq})
\hspace{-.0in}   \triangleq  \hspace{-.0in}
       \frac{1}{2}
\sum_{k=2}^{r+1} (\p_{k-1}\q_{k-1}^{(s)}-\p_k\q_k^{(s)})\c_k^{(s)}
\nonumber \\
& 
  - \frac{1}{\c_r^{(s)}}\mE_{{\mathcal U}_{r+1}}\log  \lp \dots \mE_{{\mathcal U}_4}   \lp \mE_{{\mathcal U}_3}   \lp
  2\cosh \lp \zeta_r^{(s)} \rp
     \rp^{\c_3^{(s)}}  \rp^{\frac{\c_4^{(s)}}{\c_3^{(s)}}} \dots \rp
  \nonumber \\
& 
 -\alpha\frac{1}{\c_r^{(s)}}  \mE_{{\mathcal U}_{r+1}} \log  \lp \dots \mE_{{\mathcal U}_4}  \lp \mE_{{\mathcal U}_3}  \lp   \frac{1}{2}\erfc\lp -\frac{\eta_r+\kappa}{\sqrt{2}\sqrt{1-\p_2}} \rp 
  -
  \frac{1}{2}\erfc\lp -\frac{\eta_r-\kappa}{\sqrt{2}\sqrt{1-\p_2}} \rp 
  \rp^{\c_3^{(s)}} \rp^{\frac{\c_4^{(s)}}{\c_3^{(s)}}} \dots \rp  ,
    \end{align}
and recognizes that, for any $r\geq 2$, $\bar{\psi}_{rd}^{(r,s)}(\p,\q,\c,\gamma_{sq}) = 0$ replaces the critical capacity condition $f_{sq}(\infty)=0$. Numerical evaluations give
\begin{equation}\label{eq:algnegprac25a0}
\hspace{-.1in}(\mbox{\textbf{4th -- 7th level:}}) \quad  \alpha_c^{(4)}(1) \approx
 \red{\mathbf{1.6218}}, \quad  \alpha_c^{(5)}(1) \approx
 \red{\mathbf{1.6093}}, \quad  \alpha_c^{(6)}(1) \approx
 \red{\mathbf{1.6041}}, \quad  \alpha_c^{(7)}(1) \approx
 \red{\mathbf{1.6021}}.
  \end{equation}
All parametric values for the first seven lifting levels (1,2,3,4,5,6,7-sfl-RDT) are given in Table \ref{tab:tab3}.
\begin{table}[h]
\caption{$r$-sfl RDT parameters  ($r\leq 7$);   $\hat{\c}_2\rightarrow \infty$;   $\hat{\c}_k^{(s)} = \lim_{\hat{\c}_2\rightarrow\infty}\frac{\hat{\c}_k}{\hat{\c}_2},\hat{\q}_k^{(s)} =\lim_{\hat{\c}_2\rightarrow\infty} \hat{\q}_k\hat{\c}_2^2,k\geq 2$; $\kappa=1$; $n,\beta\rightarrow\infty$; $\hat{\p}_1\rightarrow 1$, $\hat{\q}_1\rightarrow 1$; $\hat{\p}_{(r)}= \hat{\p}_{2:r}$, $\hat{\q}_{(r)}^{(s)}= \hat{\q}_{2:r}^{(s)}$, $\hat{\c}_{(r)}^{(s)}= \hat{\c}_{2:r}^{(s)}$, $r\geq 2$; $\hat{\p}_{(1)}=\hat{\q}_{(1)}^{(s)}=\hat{\c}_{(1)}^{(s)}=0$ }\vspace{.1in}
\centering
\def\arraystretch{1.2}
 \begin{tabular}{||c||c||c||c||c||c||c||c||}\hline\hline
 \hspace{-0in}$r$                                              & $1$    &  $2$   &  $3$   &  $4$   &  $5$   &  $6$   &  $7$    \\ \hline\hline
$\hat{\gamma}_{sq}$                                       &  $\begin{bmatrix}0.3989 \end{bmatrix}^{\large{ ^{\large{^{ }}}}}_{\large{ _{\large{_{ }}}}}$
& $\begin{bmatrix}0 \end{bmatrix}$  & $\begin{bmatrix} 0 \end{bmatrix}$  & $\begin{bmatrix}0 \end{bmatrix}$  & $\begin{bmatrix}0 \end{bmatrix}$  & $\begin{bmatrix}0 \end{bmatrix}$  & $\begin{bmatrix}0 \end{bmatrix}$
 \\ \hline \hline
  $\hat{\p}_{(r)}^T$                                       & $\begin{bmatrix}0 \end{bmatrix}$
& $\begin{bmatrix}0 \end{bmatrix}$  & $\begin{bmatrix} 0.9852\\0 \end{bmatrix}$  & $\begin{bmatrix} 0.9988 \\ 0.9729 \\0 \end{bmatrix}$  & $\begin{bmatrix} 0.99853 \\ 0.99655 \\ 0.96270 \\ 0 \end{bmatrix}$  & $\begin{bmatrix} 0.99996 \\ 0.99933 \\ 0.99240 \\ 0.94600 \\ 0 \end{bmatrix}$  & $\begin{bmatrix}0.999992 \\ 0.999810 \\ 0.997800 \\ 0.986000 \\ 0.915000 \\ 0 \end{bmatrix}$
 \\ \hline\hline
    $\lp\hat{\q}_{(r)}^{(s)}\rp^T$                                      & $\begin{bmatrix}0 \end{bmatrix}$
& $\begin{bmatrix}0 \end{bmatrix}$  & $\begin{bmatrix} 0.8794 \\0 \end{bmatrix}$  & $\begin{bmatrix}1.1211 \\ 0.0760 \\0 \end{bmatrix}$  & $\begin{bmatrix} 1.2800 \\ 0.0796 \\ 0.0103\\ 0 \end{bmatrix}$  & $\begin{bmatrix}1.4200 \\ .0.0920 \\ 0.0104 \\ 0 \end{bmatrix}$  & $\begin{bmatrix}1.52000 \\ 0.09900 \\ 0.01130 \\ 0.00217 \\ 0.00050 \\ 0 \end{bmatrix}$
 \\ \hline \hline
    $\lp\hat{\c}_{(r)}^{(s)}\rp^T$     & $\begin{bmatrix}0 \end{bmatrix}$
& $\begin{bmatrix}1 \end{bmatrix}$  & $\begin{bmatrix} 4.2629 \\ 1 \end{bmatrix}$  & $\begin{bmatrix}4.1522 \\ 12.0687 \\1 \end{bmatrix}$  & $\begin{bmatrix} 4.3528 \\ 12.7310 \\ 29.6479\\  1 \end{bmatrix}$  & $\begin{bmatrix} 4.430 \\ 12.70 \\ 31.70 \\ 59.50 \\ 1  \end{bmatrix}$  & $\begin{bmatrix} 4.490 \\ 12.90 \\ 31.20 \\ 65.00 \\ 104.8 \\ 1  \end{bmatrix}$
 \\ \hline \hline
$\alpha_c^{(r)}(1)$  & \bl{$\mathbf{4.2250}$}  & \bl{$\mathbf{1.8159}$}  & \red{$\mathbf{1.6576}$}  & \red{$\mathbf{1.6218}$}  & \red{$\mathbf{1.6093}$}  & \red{$\mathbf{1.6041}$}  & \red{$\mathbf{1.6021}$}  \\ \hline\hline
  \end{tabular}
\label{tab:tab3}
\end{table}
As the lifting level, $r$, increases the evaluations are more cumbersome and achieving a full numerical precision might be difficult. Consequently, the parametric results given in Table \ref{tab:tab3} may not be fully accurate. However,
conducting numerical work also allowed us to observe that $\alpha_c^{(r)}(1)$ estimates may not be overly sensitive to parameters changes. This suggests that even if some of the values given in Table \ref{tab:tab3} may need tiny adjustments, we would not expect them to significantly impact given $\alpha_c^{(r)}(1)$ estimates.   

Continuing evaluations beyond the $7$th level does not seem feasible at present. Still, the obtained results up to the $7$th level suffice to make the following observations: (\textbf{\emph{i}}) the values in the last row of the table suggest that the sequence $\alpha_c^{(r)}(1)$ might be converging (we believe that if that is the case the converging value is somewhere in $1.59-1.60$ range); (\textbf{\emph{ii}}) compared to ABP, the convergence (if present) seems a bit slower; (\textbf{\emph{iii}}) the obtained value $\alpha_c^{(7)}(1) \approx 1.6021$  and the predicted convergence range seem to closely approach the local entropy atypical solutions clustering defragmentation prediction $\alpha_{LE}\approx 1.58$ \cite{Bald20}. All of these observations together with  results of \cite{Stojnicalgbp25,Stojniccluphop25}  and concrete algorithmic considerations that will follow in later sections suggest that in the limit of large $r$ the above sequence of estimates, $\alpha_c^{(r)}(1)$, might very well approximate the algorithmic SBP capacity.

The following property is also worth noting. Namely, repeating the above calculations relying on modulo-$\m$ concepts from  \cite{Stojnicflrdt23} produced exactly the same results as those in Tables \ref{tab:tab1}--\ref{tab:tab3} which  uncovers that the $\c$ \emph{stationarity} is of \emph{maximization} type and thereby remarkably matches the very same behavior observed in \cite{Stojnicbinperflrdt23,Stojnicnegsphflrdt23,Stojnicalgbp25}.

\section{Numerical evaluations -- low $\alpha,\kappa$ regime}
\label{sec:numericalsmallkappa}

Results presented in Section \ref{sec:numerical} are conceptually sufficient to conduct all necessary numerical evaluations for any $\kappa>0$. Specialization $\kappa=1$ was taken so that a comprehensive set of concrete numerical values can be provided. These concrete values (presented in Tables \ref{tab:tab1}-\ref{tab:tab3}) are complemented by Figure \ref{fig:fig1} where $\alpha_c^{(r)}(\kappa)$ estimates are shown for a much wider range of $\kappa$. The first two curves in Figure \ref{fig:fig1}  (those that correspond to $\alpha_c^{(1)}(\kappa)$ and $\alpha_c^{(2)}(\kappa)$) are obtained for full range $\kappa\in (0,3)$. Each of the remaining two ($\alpha_c^{(3)}(\kappa)$ and $\alpha_c^{(4)}(\kappa)$) is obtained as a combination of two parts. The first part that corresponds to $\kappa \geq 0.3$ is obtained through the mechanism outlined in Section \ref{sec:numerical}. On the other hand, the second one is obtained through an approximation tailored for low $\kappa$ (and consequently low $\alpha$) regime. Such an approach is undertaken since for smaller $\kappa$ the optimal values of certain parameters may become sufficiently large to jeopardize reliability of residual numerical integrations.

Before we get to the details of these approximations, we will find it useful to first revisit $\alpha,\kappa\rightarrow 0$ regime (in recent literature \cite{GamKizPerXu22,GamKizPerXu23,Barb24,BarbAKZ23,BanSpen20} many excellent algorithmic and theoretical results were obtained in this regime which significantly deepened overall understanding of SCGs and provided a strong reassurance for their existence).  Once we gain a strong control of small $\kappa$  regime, the approximation forms will naturally follow.

\subsection{Small $\kappa$ regime -- third level of lifting ($r=3$)}
\label{sec:numericalskap3}
 
We start with the third elvel of lifting as that is that lowest level where natural decreasing ordering of $\c$ sequence is abandoned. 
Recalling on (\ref{eq:algnegprac24a6a0}) we write
\begin{eqnarray}\label{eq:skap1}
   \bar{\psi}_{rd}^{(3,s)}(\p,\q,\c,\gamma_{sq})
\hspace{-.0in}  & \triangleq & \hspace{-.0in}
  \frac{1}{2}
(1-\p_2)\q_2^{(s)}
 +  \frac{1}{2}
\lp \p_2\q_2^{(s)}-\p_3\q_3^{(s)} \rp\c_3^{(s)}
  - \frac{1}{\c_3^{(s)}}\mE_{{\mathcal U}_4}\log  \lp \mE_{{\mathcal U}_3}   \lp
  2\cosh \lp \zeta_3^{(s)} \rp
     \rp^{\c_3^{(s)}}  \rp
  \nonumber \\
\hspace{-.0in} & & \hspace{-.0in}
 -\alpha\frac{1}{\c_3^{(s)}}  \mE_{{\mathcal U}_4} \log  \lp \mE_{{\mathcal U}_3}  \lp  
  \frac{1}{2}\erfc\lp -\frac{\eta_3+\kappa}{\sqrt{2}\sqrt{1-\p_2}} \rp  
  -
    \frac{1}{2}\erfc\lp -\frac{\eta_3-\kappa}{\sqrt{2}\sqrt{1-\p_2}} \rp 
  \rp^{\c_3^{(s)}} \rp , \nonumber \\
    \end{eqnarray}
where as in (\ref{eq:algnegprac24a4a0}) and (\ref{eq:algnegprac24a2})
\begin{eqnarray}\label{eq:skap2}
\zeta_3^{(s)}  = \sqrt{\q_2^{(s)} - \q_3^{(s)} }\h_1^{(3)}   +  \sqrt{\q_3^{(s)}}\h_1^{(4)} .
\end{eqnarray}
and
\begin{eqnarray}\label{eq:skap3}
\eta_3 & = & \sqrt{\p_2-\p_3}\u_1^{(2,3)} + \sqrt{\p_3}\u_1^{(2,4)}  .
\end{eqnarray}
Given that we are operating in partial lifting mode, $\p_3=\q_3=0$ and (\ref{eq:skap1})-(\ref{eq:skap3}) can be rewritten as 
\begin{align}\label{eq:skap4}
   \bar{\psi}_{rd}^{(3,s)}(\p,\q,\c,\gamma_{sq})
\hspace{-.0in}  & =  \hspace{-.0in}
  \frac{1}{2}
(1-\p_2)\q_2^{(s)}
-  \frac{1}{2}
(1-\p_2)\q_2^{(s)}\c_3^{(s)}
+ \frac{1}{2} \q_2^{(s)}\c_3^{(s)}
\nonumber \\
&   - \frac{1}{\c_3^{(s)}} \log  \lp \mE_{{\mathcal U}_3}   \lp
  2\cosh \lp \sqrt{\q_2^{(s)} }\h_1^{(3)}  \rp
     \rp^{\c_3^{(s)}}  \rp
  \nonumber \\
  &  
 -\alpha\frac{1}{\c_3^{(s)}}   \log  \lp \mE_{{\mathcal U}_3}  \lp  
  \frac{1}{2}\erfc\lp -\frac{ \sqrt{\p_2}\u_1^{(2,3)}  +\kappa}{\sqrt{2}\sqrt{1-\p_2}} \rp  
  -
    \frac{1}{2}\erfc\lp -\frac{ \sqrt{\p_2}\u_1^{(2,3)}   -\kappa}{\sqrt{2}\sqrt{1-\p_2}} \rp 
  \rp^{\c_3^{(s)}} \rp , \nonumber \\
    \end{align}
For $\alpha,\kappa\rightarrow 0$ one has $\p_2\rightarrow 1$ with $1-\p_2\sim\frac{\alpha}{-\log \lp \alpha \rp}$ and $\kappa\sim \sqrt{\frac{\alpha}{-\log \lp \alpha \rp}}$ which implies that the saddle point last term most outer integration gives $\u_1^{(2,3)}=0$. This allows to rewrite (\ref{eq:skap4}) as
\begin{align}\label{eq:skap5}
   \bar{\psi}_{rd}^{(3,s)}(\p,\q,\c,\gamma_{sq})
  & \rightarrow  \hspace{-.0in}
  \frac{1}{2}
(1-\p_2)\q_2^{(s)}
-  \frac{1}{2}
(1-\p_2)\q_2^{(s)}\c_3^{(s)}
+ \frac{1}{2} \q_2^{(s)}\c_3^{(s)}
\nonumber \\
& \hspace{.2in}  - \frac{1}{\c_3^{(s)}}  \log  \lp \mE_{{\mathcal U}_3}   \lp
  2\cosh \lp \sqrt{\q_2^{(s)} }\h_1^{(3)}  \rp
     \rp^{\c_3^{(s)}}  \rp
  -\alpha  \log   \lp  
   \erf\lp \frac{ \kappa}{\sqrt{2}\sqrt{1-\p_2}} \rp   
  \rp  \nonumber \\
  &  \rightarrow  \hspace{-.0in}
  \frac{1}{2}
(1-\p_2)\q_2^{(s)}
-  \frac{1}{2}
(1-\p_2)\q_2^{(s)}\c_3^{(s)}
+ \frac{1}{2} \q_2^{(s)}\c_3^{(s)}
\nonumber \\
& \hspace{.2in}  - \frac{1}{\c_3^{(s)}}  \log  \lp \mE_{{\mathcal U}_3}  e^{ \c_3^{(s)}\log \lp
  2\cosh \lp \sqrt{\q_2^{(s)} }\h_1^{(3)}  \rp 
     \rp  }  \rp
  -\alpha  \log   \lp  
   \erf\lp \frac{ \kappa}{\sqrt{2}\sqrt{1-\p_2}} \rp   
  \rp .
    \end{align}
We also note
\begin{align} \label{eq:skap6}
\mE_{{\mathcal U}_3}  e^{ \c_3^{(s)}\log \lp
  2\cosh \lp \sqrt{\q_2^{(s)} }\h_1^{(3)}  \rp 
     \rp  } 
&     =
 \frac{1}{\sqrt{2\pi}}\int e^{ \c_3^{(s)}\log \lp
  2\cosh \lp \sqrt{\q_2^{(s)} }\h_1^{(3)}  \rp 
     \rp -\frac{  \lp  \h_1^{(3)}  \rp^2  }{2} } 
     d \h_1^{(3)}  
     \nonumber \\
      &
      =
 \frac{1}{\sqrt{2\pi}}\int e^{ \frac{\c_3^{(s)}   \q_2^{(s)}  } {\q_2^{(s)} }\log \lp
  2\cosh \lp \sqrt{\q_2^{(s)} }\h_1^{(3)}  \rp 
     \rp -\frac{1}{\q_2^{(s)} }\frac{  \lp  \h_1^{(3)} \sqrt{\q_2^{(s)} } \rp^2  }{2} } 
     d \h_1^{(3)} .
\end{align}
For $\alpha,\kappa\rightarrow 0$, $\q_2^{(s)} \rightarrow 0$, and we set $c_x=\c_3^{(s)}   \q_2^{(s)} $. The saddle point integration then gives
\begin{align} \label{eq:skap7}
 \frac{1}{\c_3^{(s)}}  \log  \mE_{{\mathcal U}_3}  e^{ \c_3^{(s)}\log \lp
  2\cosh \lp \sqrt{\q_2^{(s)} }\h_1^{(3)}  \rp 
     \rp  } 
       &
      \rightarrow 
\frac{1}{\c_3^{(s)}}  \log  e^{   \frac{1} {\q_2^{(s)} } 
\lp
\max_{h_x} \lp  c_x \log \lp
  2\cosh \lp h_x \rp 
     \rp - \frac{  h_x^2  }{2} 
     \rp \rp
     } .
\end{align}
After taking the derivative one finds
\begin{equation}\label{eq:skap8}
  c_x\tanh(h_x) -h_x =0.
\end{equation}
Since it will turn out that $c_x\sim -\log(1-\p_2)\sim -\log\lp \frac{\alpha}{-\log(\alpha)}\rp$ (i.e. that $c_x$ grows as $\alpha,\kappa\rightarrow 0$), one gets $h_x=c_x$ for the saddle point.  Plugging this back in (\ref{eq:skap7}) gives
\begin{align} \label{eq:skap9}
 \frac{1}{\c_3^{(s)}}  \log  \mE_{{\mathcal U}_3}  e^{ \c_3^{(s)}\log \lp
  2\cosh \lp \sqrt{\q_2^{(s)} }\h_1^{(3)}  \rp 
     \rp  } 
       &
      \rightarrow 
\frac{1}{\c_3^{(s)}}  \log  e^{   \frac{1} {\q_2^{(s)} } 
 \lp  c_x \log \lp
  2\cosh \lp c_x \rp 
     \rp - \frac{  c_x^2  }{2} 
     \rp 
     } 
     \nonumber \\
&            \rightarrow 
\frac{1}{\c_3^{(s)} \q_2^{(s)} }   
  \lp  c_x \log \lp
   e^{c_x}+e^{-c_x}  
     \rp - \frac{  c_x^2  }{2} 
     \rp 
          \nonumber \\
     & 
           \rightarrow 
\frac{1}{\c_3^{(s)} \q_2^{(s)}  }     
  \lp  c_x^2 + c_x\log \lp
   1+e^{-2c_x}  
     \rp - \frac{  c_x^2  }{2} 
     \rp 
               \nonumber \\
     & 
           \rightarrow 
\frac{1}{\c_3^{(s)} \q_2^{(s)} } 
  \lp  \frac{  c_x^2  }{2} +  c_x \log \lp
   1+e^{-2c_x}  
     \rp  
     \rp 
               \nonumber \\
     & 
           \rightarrow 
\frac{c_x}{2}   
+   \log \lp
   1+e^{-2c_x}  
     \rp  
               \nonumber \\
     & 
           \rightarrow 
\frac{c_x}{2}   
+   \ e^{-2c_x}  
      .
\end{align}
Combining (\ref{eq:skap5}) and  (\ref{eq:skap9}) we obtain
\begin{align}\label{eq:skap10}
   \bar{\psi}_{rd}^{(3,s)}(\p,\q,\c,\gamma_{sq})
   &  \rightarrow  \hspace{-.0in}
  \frac{1}{2}
(1-\p_2)\q_2^{(s)}
-  \frac{1}{2}
(1-\p_2)\q_2^{(s)}\c_3^{(s)}
+ \frac{1}{2} \q_2^{(s)}\c_3^{(s)}
\nonumber \\
& \hspace{.2in}  -  \frac{c_x}{2}  - e^{-2c_x}
  -\alpha  \log   \lp  
   \erf\lp \frac{ \kappa}{\sqrt{2}\sqrt{1-\p_2}} \rp   
  \rp 
   \nonumber \\
   &  \rightarrow  \hspace{-.0in}
 -  \frac{1}{2}
(1-\p_2)c_x
  - e^{-2c_x}
  -\alpha  \log   \lp  
   \erf\lp \frac{ \kappa}{\sqrt{2}\sqrt{1-\p_2}} \rp   
  \rp 
      .
    \end{align}
Finding the derivatives with respect to $c_x$ and $\p_2$ gives the following stationary points equations
\begin{eqnarray}\label{eq:skap11a0}
c_x &  = & -
\frac{1}{2}\log \lp \frac{1 -\p_2}{4}\rp 
\nonumber \\
 \frac{c_x}{2}  & = &   \frac{\alpha\,\kappa \,{\mathrm{e}}^{-\frac{\kappa ^2}{2(1-\p_2)}}}{\sqrt{2\pi }\,\mathrm{erf}\left(\frac{\kappa }{\sqrt{2(1-\p_2)}}\right)\,{\left(1-\p_2\right)}^{3/2}} 
  .
 \end{eqnarray}
One then easily finds
\begin{eqnarray}\label{eq:skap11}
  -
\frac{1}{4}\log \lp \frac{1 -\p_2}{4}\rp 
 & = &   \frac{\alpha\,\kappa \,{\mathrm{e}}^{-\frac{\kappa ^2}{2(1-\p_2)}}}{\sqrt{2\pi }\,\mathrm{erf}\left(\frac{\kappa }{\sqrt{2(1-\p_2)}}\right)\,{\left(1-\p_2\right)}^{3/2}} 
  .
 \end{eqnarray}
A combination of (\ref{eq:skap10})--(\ref{eq:skap11}) then gives that for a small $\alpha$ critical $\kappa$ is determined as the the minimal $\kappa$ such that
\begin{align}\label{eq:skap12}
   \bar{\psi}_{rd}^{(3,s)}(\p,\q,\c,\gamma_{sq})
    &  \rightarrow  \hspace{-.0in}
   \frac{1}{4}
(1- \hat{\p}_2) \log\lp \frac{1- \hat{\p}_2}{4} \rp
  - \frac{1- \hat{\p}_2}{4} 
  -\alpha  \log   \lp  
   \erf\lp \frac{ \kappa}{\sqrt{2}\sqrt{1-\hat{\p}_2}} \rp   
  \rp =0  ,
    \end{align}
with $\hat{\p}_2$ satisfying (\ref{eq:skap11}). This describes how the lower portion of $\alpha_c^{(3)}(\kappa) $ (3-sfl-RDT) curve in Figure \ref{fig:fig1} was obtained. For $\alpha,\kappa\rightarrow 0$ the above results are actually exact. On the other hand, for $\alpha,\kappa$ small (but not necessarily  $\alpha,\kappa\rightarrow 0$) one can take them as an approximation. As Figure \ref{fig:fig1} shows, continuation of $\alpha_c^{(3)}(\kappa) $ curve from $\kappa\geq 0.3$ range (which is determined without any approximations) to $\kappa\leq 0.3$ range (which is determined through the above approximation) seems to be smooth. This, on the other hand, suggests that the utilized approximation might be fairly accurate even for $\kappa$ a bit away from zero.

\subsubsection{$\alpha,\kappa\rightarrow 0$ -- $r=3$}
\label{sec:numericalskap30}

The above results can be additionally simplified when $\alpha,\kappa\rightarrow 0$. To that end we set 
\begin{eqnarray}
\label{eq:skap13}  
1-\p_2  &=& \frac{p_x\alpha}{-\log(\alpha)} \nonumber \\
  \kappa &=& \kappa_x\sqrt{2(1-\p_2)} = \kappa_x \sqrt{2 p_x} \sqrt{\frac{\alpha}{-\log(\alpha)}}.
\end{eqnarray}
One can then rewrite (\ref{eq:skap12}) as 
\begin{align}\label{eq:skap14}
   \bar{\psi}_{rd}^{(3,s)}(\p,\q,\c,\gamma_{sq})
    &  \rightarrow  \hspace{-.0in}
   \frac{1}{4}
\frac{p_x\alpha}{-\log(\alpha)} \log\lp \frac{\frac{p_x\alpha}{-\log(\alpha)}}{4} \rp
  - \frac{\frac{p_x\alpha}{-\log(\alpha)}}{4} 
  -\alpha  \log   \lp  
   \erf\lp \kappa_x \rp   
  \rp 
  \nonumber \\
    &  \rightarrow  \hspace{-.0in}
   \frac{1}{4}
\frac{p_x\alpha}{-\log(\alpha)} \log\lp \alpha \rp
+
   \frac{1}{4}
\frac{p_x\alpha}{-\log(\alpha)} \log\lp \frac{\frac{p_x}{-\log(\alpha)}}{4} \rp
  - \frac{\frac{p_x\alpha}{-\log(\alpha)}}{4} 
  -\alpha  \log   \lp  
   \erf\lp \kappa_x \rp   
  \rp 
  \nonumber \\
    &  \rightarrow  \hspace{-.0in}
   -\frac{1}{4}
p_x\alpha
   -\alpha  \log   \lp  
   \erf\lp \kappa_x \rp   
  \rp 
  \nonumber \\
    &  \rightarrow  \hspace{-.0in}
   -\frac{p_x }{4}
   -   \log   \lp  
   \erf\lp \kappa_x \rp   
  \rp 
  =0  ,
    \end{align}
Moreover, (\ref{eq:skap11}) can be rewritten as
\begin{eqnarray}\label{eq:skap15}
  -
\frac{1}{4}\log \lp \frac{\frac{p_x\alpha}{-\log(\alpha)}}{4}\rp 
 & = &   \frac{\alpha\,\kappa_x \,{\mathrm{e}}^{-\kappa_x ^2   }  }{\sqrt{\pi }\,\mathrm{erf}\left( \kappa_x \right)\,{\left(1-\p_2\right)}} 
 =
  \frac{\alpha\,\kappa_x \,{\mathrm{e}}^{-\kappa_x ^2   }  }{\sqrt{\pi }\,\mathrm{erf}\left( \kappa_x \right)\,{\left(  \frac{p_x\alpha}{-\log(\alpha)}   \right)}} 
 =
-  \frac{\log(\alpha)\,\kappa_x \,{\mathrm{e}}^{-\kappa_x ^2   }  }{\sqrt{\pi }\,\mathrm{erf}\left( \kappa_x \right)\,{p_x    }} 
  .
 \end{eqnarray}
 From (\ref{eq:skap15}) we further have
\begin{eqnarray}\label{eq:skap16}
\frac{p_x}{4}\log \lp \alpha \rp 
+
\frac{p_x}{4}\log \lp \frac{\frac{p_x}{-\log(\alpha)}}{4}\rp 
  =
  \frac{\log(\alpha)\,\kappa_x \,{\mathrm{e}}^{-\kappa_x ^2   }  }{\sqrt{\pi }\,\mathrm{erf}\left( \kappa_x \right)} 
  ,
 \end{eqnarray}
 and
\begin{eqnarray}\label{eq:skap17}
\frac{p_x}{4}
-  \frac{ \kappa_x \,{\mathrm{e}}^{-\kappa_x ^2   }  }{\sqrt{\pi }\,\mathrm{erf}\left( \kappa_x \right)} 
 \rightarrow 0 .
 \end{eqnarray}
A combination of (\ref{eq:skap14})  and (\ref{eq:skap17})  then gives
\begin{eqnarray}\label{eq:skap18}
   -   \log   \lp  
   \erf\lp \kappa_x \rp   
  \rp 
-  \frac{ \kappa_x \,{\mathrm{e}}^{-\kappa_x ^2   }  }{\sqrt{\pi }\,\mathrm{erf}\left( \kappa_x \right)} 
 \rightarrow 0 .
 \end{eqnarray}
Finally, from (\ref{eq:skap13}),  (\ref{eq:skap17}),  and (\ref{eq:skap18}) we find
\begin{eqnarray}
\label{eq:skap19}  
   \kappa  = \hat{\kappa}_x \sqrt{   -  8 \log   \lp  
   \erf\lp \hat{\kappa}_x \rp   
  \rp 
} \sqrt{\frac{\alpha}{-\log(\alpha)}},
\end{eqnarray}
where $\hat{\kappa}_x$ satisfies (\ref{eq:skap18}). Concrete numerical evaluations give 
\begin{eqnarray}
\label{eq:skap20}  
\hat{\kappa}_x \approx 0.7534 \quad \mbox{and} \quad 
   \kappa  = \hat{\kappa}_x \sqrt{   -  8 \log   \lp  
   \erf\lp \hat{\kappa}_x \rp   
  \rp 
} \sqrt{\frac{\alpha}{-\log(\alpha)}} 
\approx 
 1.2385  \sqrt{\frac{\alpha}{-\log(\alpha)}}.
\end{eqnarray}
This fully matches the local entropy based prediction for critical $\kappa$ behavior in  $\alpha\rightarrow 0$  regime obtained in \cite{BarbAKZ23} via replica method with 1RSB ansatz. Moreover, \cite{BarbAKZ23} also analyzed the local entropy via a fully rigorous contiguity approach of a planted model. Remarkably, their 1RSB estimate was shown to match the planted contiguity results. To be fully precise, the above result matches the so-called \emph{energetic} barrier of \cite{BarbAKZ23}, which assumes a complete entropy breakdown (i.e., no exponential size clusters at a certain distance in a vicinity of a solution). \cite{BarbAKZ23} also considers the so-called \emph{entropic} barrier and obtains slightly higher proportionality $\approx 1.4288$ (instead of $\approx 1.2385$). As discussed to great length in introductory local entropy papers \cite{Bald15,Bald16},  entropic barrier doesn't necessarily search for a complete entropy breakdown but rather for a clustering ``\emph{thinning out}''. This is reflected through the appearance of a lack of monotonicity of the entropic curve as one moves away from the referent solution. This effectively means that while there may be an exponential number of near solutions at any distance, it might still not suffice for certain classes of algorithms  to fully take advantage of them.

\subsection{Small $\kappa$ regime -- fourth level of lifting ($r=4$)}
\label{sec:numericalskap4}

The above discussion suggests that the proposed methodology already on the third lifting level might produce fairly good approximations of algorithmic thresholds for any $\kappa>0$. Below we uncover that the same tendency continues on the fourth level.

We start by recognizing that  a combination of (\ref{eq:algnegprac24a6a0}) and (\ref{eq:algalgnegprac24a6a0}) allows to write their fourth level analogue 
\begin{align}\label{eq:4skap1}
&  \hspace{-.3in} \bar{\psi}_{rd}^{(4,s)}(\p,\q,\c,\gamma_{sq})
\hspace{-.0in}   =  \hspace{-.0in}
  \frac{1}{2}
(1-\p_2)\q_2^{(s)}
 +  \frac{1}{2}
\lp \p_2\q_2^{(s)}-\p_3\q_3^{(s)} \rp\c_3^{(s)}
+  \frac{1}{2}
\lp \p_3\q_3^{(s)}-\p_4\q_4^{(s)} \rp\c_4^{(s)}
\nonumber \\
&    - \frac{1}{\c_4^{(s)}}\mE_{{\mathcal U}_5}\log  \lp  \mE_{{\mathcal U}_4}  \lp \mE_{{\mathcal U}_3}   \lp
  2\cosh \lp \zeta_4^{(s)} \rp
     \rp^{\c_3^{(s)}}  \rp^{\frac{\c_4^{(s)}}{\c_3^{(s)}}}    \rp
  \nonumber \\
\hspace{-.0in} & \hspace{-.0in}
 -\alpha\frac{1}{\c_4^{(s)}}  \mE_{{\mathcal U}_4} \log  \lp \mE_{{\mathcal U}_4}  \lp   \mE_{{\mathcal U}_3}  \lp  
  \frac{1}{2}\erfc\lp -\frac{\eta_4+\kappa}{\sqrt{2}\sqrt{1-\p_2}} \rp  
  -
    \frac{1}{2}\erfc\lp -\frac{\eta_4-\kappa}{\sqrt{2}\sqrt{1-\p_2}} \rp 
  \rp^{\c_3^{(s)}} \rp^{\frac{ \c_4^{(s)}  }{\c_3^{(s)}}}    \rp , \nonumber \\
    \end{align}
where similarly to  (\ref{eq:algnegprac24a4a0}) and (\ref{eq:algnegprac24a2}) (or (\ref{eq:skap2}) and  (\ref{eq:skap3})) 
\begin{eqnarray}\label{eq:4skap2}
\zeta_4^{(s)}  = \sqrt{\q_2^{(s)} - \q_3^{(s)} }\h_1^{(3)}   +  \sqrt{\q_3^{(s)}  - \q_4^{(s)}  }\h_1^{(4)}  +  \sqrt{ \q_4^{(s)}  }\h_1^{(5)} .
\end{eqnarray}
and
\begin{eqnarray}\label{eq:4skap3}
\eta_4 & = & \sqrt{\p_2-\p_3}\u_1^{(2,3)} + \sqrt{\p_3 - \p_4 }\u_1^{(2,4)}  + \sqrt{\p_5 }\u_1^{(2,5)}  .
\end{eqnarray}
Since we again operate in the partial lifting mode, $\p_4=\q_4=0$ and (\ref{eq:4skap1})-(\ref{eq:4skap3}) can be rewritten as 
\begin{eqnarray}\label{eq:4skap4} \hspace{-.0in} \bar{\psi}_{rd}^{(4,s)}(\p,\q,\c,\gamma_{sq})
  & =  &
     \frac{1}{2}
(1-\p_2)\q_2^{(s)}
 +  \frac{1}{2}
\lp \p_2\q_2^{(s)}-\p_3\q_3^{(s)} \rp\c_3^{(s)}
+  \frac{1}{2}
\lp \p_3\q_3^{(s)} \rp\c_4^{(s)}
\nonumber \\
&  &  - \frac{1}{\c_4^{(s)}} \log  \lp  \mE_{{\mathcal U}_4}  \lp \mE_{{\mathcal U}_3}   \lp
  2\cosh \lp \sqrt{\q_2^{(s)} - \q_3^{(s)} }\h_1^{(3)}   +  \sqrt{\q_3^{(s)}  }\h_1^{(4)}        \rp
     \rp^{\c_3^{(s)}}  \rp^{\frac{\c_4^{(s)}}{\c_3^{(s)}}}    \rp
  \nonumber \\
  & & 
 -\alpha\frac{1}{\c_4^{(s)}}    \log \Bigg .\Bigg   ( \mE_{{\mathcal U}_4}   \Bigg .\Bigg  (    \mE_{{\mathcal U}_3}  \Bigg .\Bigg  (  
  \frac{1}{2}\erfc\lp -\frac{ \sqrt{\p_2-\p_3}\u_1^{(2,3)} + \sqrt{\p_3 }\u_1^{(2,4)}   +\kappa}{\sqrt{2}\sqrt{1-\p_2}} \rp  
  \nonumber \\
  & & 
 -
    \frac{1}{2}\erfc\lp -\frac{ \sqrt{\p_2-\p_3}\u_1^{(2,3)} + \sqrt{\p_3 }\u_1^{(2,4)}   -\kappa}{\sqrt{2}\sqrt{1-\p_2}} \rp 
 \Bigg .\Bigg     ) ^{\c_3^{(s)}}    \Bigg .\Bigg  )  ^{\frac{ \c_4^{(s)}  }{\c_3^{(s)}}}  \Bigg .\Bigg   )  .
    \end{eqnarray}
As $\alpha,\kappa\rightarrow 0$ we have $\p_2,\p_3\rightarrow 1$, $\q_3^{(s)}\rightarrow 0$, $\q_4^{(s)}\rightarrow 0$ which with scaling $1-\p_2,\p_2-\p_3\sim\frac{\alpha}{-\log \lp \alpha \rp}$, $\q_2^{(s)} \sim -\log \lp \alpha \rp$, $\frac{c_4^{(3)}}{c_4^{(3)}}\rightarrow \infty$,  $\q_3^{(s)}\c_4^{(s)}=c_x$, and $\kappa\sim \sqrt{\frac{\alpha}{-\log \lp \alpha \rp}}$ allows to rewrite (\ref{eq:4skap4}) as
\begin{eqnarray}\label{eq:4skap5} 
\hspace{-.0in} \bar{\psi}_{rd}^{(4,s)}(\p,\q,\c,\gamma_{sq})
  & =  &
     \frac{1}{2}
(1-\p_2)\q_2^{(s)}
 +  \frac{1}{2}
\p_2\q_2^{(s)} \c_3^{(s)}
+  \frac{1}{2}
\p_3 c_x
\nonumber \\
&  &  - \frac{1}{\c_4^{(s)}} \log  \lp  \mE_{{\mathcal U}_4}  \lp \omega_q \lp \sqrt{\q_3^{(s)}  }\h_1^{(4)}   \rp  \rp^{\frac{\c_4^{(s)}}{\c_3^{(s)}}}    \rp
 - \frac{\alpha}{\c_4^{(s)}} \log  \lp  \mE_{{\mathcal U}_4}  \lp \omega_p \lp  \sqrt{\p_3 }\u_1^{(2,4)}  \rp  \rp^{\frac{\c_4^{(s)}}{\c_3^{(s)}}}    \rp , \nonumber \\
    \end{eqnarray}
where
\begin{eqnarray}
\label{eq:4skap6} 
 \omega_q(x) &=& \mE_{{\mathcal U}_3}   \lp
  2\cosh \lp \sqrt{\q_2^{(s)} - \q_3^{(s)} }\h_1^{(3)}   +  x       \rp  \rp^{\c_3^{(s)}} 
\rightarrow 
\mE_{{\mathcal U}_3}   \lp
  2\cosh \lp \sqrt{\q_2^{(s)} }\h_1^{(3)}   +  x       \rp  \rp^{\c_3^{(s)}} 
     \nonumber  \\
   \omega_p(x) &=&  \mE_{{\mathcal U}_3}  \Bigg .\Bigg  (  
  \frac{1}{2}\erfc\lp -\frac{ \sqrt{\p_2-\p_3}\u_1^{(2,3)} + x   +\kappa}{\sqrt{2}\sqrt{1-\p_2}} \rp  
  -
    \frac{1}{2}\erfc\lp -\frac{ \sqrt{\p_2-\p_3}\u_1^{(2,3)} + x   -\kappa}{\sqrt{2}\sqrt{1-\p_2}} \rp 
 \Bigg .\Bigg     ) ^{\c_3^{(s)}} .  
\end{eqnarray}
Utilizing the above scaling further we also find $\u_1^{(2,4)}=0$ for the saddle point of the last term outer integration which then gives
\begin{eqnarray}
\label{eq:4skap7} 
\frac{\alpha}{\c_4^{(s)}} \log \lp \mE_{{\mathcal U}_4}  \lp \omega_p \lp  \sqrt{\p_3 }\u_1^{(2,4)}  \rp  \rp^{\frac{\c_4^{(s)}}{\c_3^{(s)}}}   \rp
 &  =  &
\frac{\alpha}{\c_4^{(s)}} \log 
\lp
\frac{1}{\sqrt{2\pi}} \int e^{\frac{\c_4^{(s)}}{\c_3^{(s)}} \log\lp  \omega_p \lp  \sqrt{\p_3 }\u_1^{(2,4)}  \rp   \rp   
- \frac{1}{2}\lp \u_1^{(2,4)}\rp^2} d\u_1^{(2,4)} \rp
\nonumber \\
&  \rightarrow  & \log 
\frac{\alpha}{\c_4^{(s)}} \lp
  e^{\frac{\c_4^{(s)}}{\c_3^{(s)}} \log\lp  \omega_p \lp 0  \rp   \rp   
} \rp
\nonumber \\
&  \rightarrow  & 
\frac{\alpha}{\c_3^{(s)}} \log \lp   \omega_p \lp 0 \rp \rp.
\end{eqnarray}
Writing further
\begin{eqnarray}
\label{eq:4skap8} 
 \log  \lp  \mE_{{\mathcal U}_4}  \lp \omega_q \lp \sqrt{\q_3^{(s)}  }\h_1^{(4)}   \rp  \rp^{\frac{\c_4^{(s)}}{\c_3^{(s)}}}    \rp
&  =  &
  \log  \lp  \frac{1}{\sqrt{2\pi}}  \int e^{\frac{\c_4^{(s)}}{\c_3^{(s)}}\log \lp \omega_q \lp \sqrt{\q_3^{(s)}  }\h_1^{(4)}   \rp  \rp  -\frac{1}{2}\lp \h_1^{(4)} \rp^2 } \d\h_1^{(4)}  \rp  
  \nonumber   \\
&  =  &
  \log  \lp  \frac{1}{\sqrt{2\pi}}  \int e^{\frac{\c_x}{\q_3^{(s)}\c_3^{(s)}}\log \lp \omega_q \lp \sqrt{\q_3^{(s)}  }\h_1^{(4)}   \rp  \rp  -\frac{1}{2\q_3^{(s)}}\lp \sqrt{\q_3^{(s)}}\h_1^{(4)} \rp^2 } \d\h_1^{(4)}  \rp   \nonumber   \\
&  =  &
  \log  \lp  \frac{1}{\sqrt{2\pi \q_3^{(s)}  } }  \int e^{\frac{\c_x}{\q_3^{(s)}\c_3^{(s)}}\log \lp \omega_q \lp h  \rp  \rp  -\frac{1}{2\q_3^{(s)}}  h^2 } \d h  \rp ,
\end{eqnarray}
taking the exponent's derivative with respect to $h$, and equalling to zero gives
\begin{eqnarray}
\label{eq:4skap9} 
\frac{d\lp \frac{\c_c}{ \c_3^{(s)}}\log \lp \omega_q \lp h  \rp  \rp  -\frac{1}{2 }  h^2  \rp }{dh}
 & = &
  \frac{\c_x}{ \c_3^{(s)}}\frac{\frac{d\omega_q \lp h  \rp }{dh}}{ \lp \omega_q \lp h  \rp  \rp }  - h 
\nonumber \\
 & \rightarrow  &
   \c_x   \frac{  \mE_{{\mathcal U}_3}   \lp
  2\cosh \lp \sqrt{\q_2^{(s)} }\h_1^{(3)}   +  h \rp
  \tanh \lp \sqrt{\q_2^{(s)}   }\h_1^{(3)}   +  h       \rp
    \rp^{\c_3^{(s)}    }     }{ \lp \omega_q \lp h  \rp  \rp }  - h = 0 , \nonumber \\
\end{eqnarray}
A combination of (\ref{eq:4skap5}) and  (\ref{eq:4skap7})- (\ref{eq:4skap9}) constitutes the approximation utilized to obtain  the lower portion of $\alpha_c^{(4)}(\kappa) $ (4-sfl-RDT) curve in Figure \ref{fig:fig1}. As was the case on the third level, when $\alpha,\kappa\rightarrow 0$ the above approximation becomes fully accurate. However, as Figure \ref{fig:fig1} indicates, even for $\kappa$ away from zero the approximative continuation of $\alpha_c^{(4)}(\kappa) $ curve seems to smoothly extend the non-approximative part.

\subsubsection{$\alpha,\kappa\rightarrow 0$ -- $r=4$}
\label{sec:numericalskap40}

Given the simplifications that occurred on the third level when $\alpha,\kappa\rightarrow 0$, it is interesting to see whether something similar happens here. To that end, we first note that for $\q_2^{(s)}\sim -\log\lp \alpha\rp$ and $c_x\sim -\log\lp \alpha\rp$   (\ref{eq:4skap9}) gives $h = c_x$ for the saddle point. From (\ref{eq:4skap8}) we then obtain
\begin{eqnarray}
\label{eq:4skap10} 
 \frac{1}{\c_4^{(s)}}  \log  \lp  \mE_{{\mathcal U}_4}  \lp \omega_q \lp \sqrt{\q_3^{(s)}  }\h_1^{(4)}   \rp  \rp^{\frac{\c_4^{(s)}}{\c_3^{(s)}}}    \rp
 &  \rightarrow  &
 \frac{1}{\c_4^{(s)}}   \log  \lp   e^{\frac{\c_x}{\q_3^{(s)}\c_3^{(s)}}\log \lp \omega_q \lp c_x  \rp  \rp  -\frac{1}{2\q_3^{(s)}}  c_x^2 }   \rp 
 \nonumber \\
  &  \rightarrow  &
 \frac{1}{\c_3^{(s)}}    \log \lp \omega_q \lp c_x  \rp  \rp  -\frac{1}{2}  c_x  . 
\end{eqnarray}
Also, we have
\begin{equation}
\label{eq:4skap11}
   \omega_q \lp c_x  \rp   
 = 
\mE_{{\mathcal U}_3}   \lp
  2\cosh \lp \sqrt{\q_2^{(s)} }\h_1^{(3)}   +  c_x       \rp  \rp^{\c_3^{(s)}} 
   = 
\frac{1}{\sqrt{2\pi}} \int  e^{   \c_3^{(s)}  \log\lp
  2\cosh \lp \sqrt{\q_2^{(s)} }\h_1^{(3)}   +  c_x       \rp  \rp  -\frac{1}{2} \lp \h_1^{(3)} \rp ^2   } d\h_1^{(3)}.
   \end{equation}
After taking the derivative with respect to $\h_1^{(3)}$ one finds
\begin{eqnarray}
\label{eq:4skap12} 
\frac{d \lp   \c_3^{(s)}  \log\lp
  2\cosh \lp \sqrt{\q_2^{(s)} }\h_1^{(3)}   +  c_x       \rp  \rp  -\frac{1}{2} \lp \h_1^{(3)} \rp ^2   \rp }{d\h_1^{(3)}}
=
\sqrt{\q_2^{(s)} }     \c_3^{(s)}  
  \tanh \lp \sqrt{\q_2^{(s)} }\h_1^{(3)}   +  c_x       \rp  -  \h_1^{(3)}    .
\end{eqnarray}
Equalling the above derivative to zero gives for the saddle point $ \h_1^{(3)} = \sqrt{\q_2^{(s)} }     \c_3^{(s)}$. One can then rewrite (\ref{eq:4skap11}) as
\begin{eqnarray}
\label{eq:4skap13} 
   \omega_q \lp c_x  \rp   
 = 
\mE_{{\mathcal U}_3}   \lp
  2\cosh \lp \sqrt{\q_2^{(s)} }\h_1^{(3)}   +  c_x       \rp  \rp^{\c_3^{(s)}} 
 &  = &
  e^{   \c_3^{(s)}  \log\lp
  2\cosh \lp \q_2^{(s)}      \c_3^{(s)}   +  c_x       \rp  \rp  -\frac{1}{2}   \q_2^{(s)}    \lp  \c_3^{(s)}\rp ^2   }.
   \end{eqnarray}
A combination of (\ref{eq:4skap10}) and (\ref{eq:4skap13}) then gives
\begin{eqnarray}
\label{eq:4skap14} 
 \frac{1}{\c_4^{(s)}}  \log  \lp  \mE_{{\mathcal U}_4}  \lp \omega_q \lp \sqrt{\q_3^{(s)}  }\h_1^{(4)}   \rp  \rp^{\frac{\c_4^{(s)}}{\c_3^{(s)}}}    \rp
   &  \rightarrow  &
 \frac{1}{\c_3^{(s)}}    \log \lp \omega_q \lp c_x  \rp  \rp  -\frac{1}{2}  c_x  
 \nonumber \\
   &  \rightarrow  &
\frac{1}{\c_3^{(s)}}    \log \lp   e^{   \c_3^{(s)}  \log\lp
  2\cosh \lp \q_2^{(s)}      \c_3^{(s)}   +  c_x       \rp  \rp  -\frac{1}{2}  \q_2^{(s)}   \lp     \c_3^{(s)}\rp ^2   }   \rp  -\frac{1}{2}  c_x  
 \nonumber \\
   &  \rightarrow  &
     \log\lp
  2\cosh \lp \q_2^{(s)}      \c_3^{(s)}   +  c_x       \rp  \rp  -\frac{1}{2}  \q_2^{(s)}   \c_3^{(s)}      -\frac{1}{2}  c_x  
 \nonumber \\
   &  \rightarrow  &
     \log\lp
 1+ e^{-2 \lp \q_2^{(s)}      \c_3^{(s)}   +  c_x       \rp } \rp  +\frac{1}{2}  \q_2^{(s)}   \c_3^{(s)}      +\frac{1}{2}  c_x  
 . 
\end{eqnarray}
Combining (\ref{eq:4skap5}), (\ref{eq:4skap7}), (\ref{eq:4skap8}), and (\ref{eq:4skap14}) we find
\begin{eqnarray}\label{eq:4skap15} 
\hspace{-.0in} \bar{\psi}_{rd}^{(4,s)}(\p,\q,\c,\gamma_{sq})
  & \rightarrow  &
     \frac{1}{2}
(1-\p_2)\q_2^{(s)}
 +  \frac{1}{2}
\p_2\q_2^{(s)} \c_3^{(s)}
+  \frac{1}{2}
\p_3 c_x
\nonumber \\
&  &  
   -  \log\lp
 1+ e^{-2 \lp \q_2^{(s)}      \c_3^{(s)}   +  c_x       \rp } \rp  -\frac{1}{2}  \q_2^{(s)}   \c_3^{(s)}      -\frac{1}{2}  c_x  
 - \frac{\alpha}{\c_3^{(s)}} \log \lp   \omega_p \lp 0 \rp \rp
  \nonumber \\
  & \rightarrow  &
   \frac{1}{2}
(\p_2-1)\q_2^{(s)} \lp \c_3^{(s)}-1\rp
+  \frac{1}{2}
(\p_3-1) c_x
\nonumber \\
&  &  
   -  \log\lp
 1+ e^{-2 \lp \q_2^{(s)}      \c_3^{(s)}   +  c_x       \rp } \rp    
 - \frac{\alpha}{\c_3^{(s)}} \log \lp   \omega_p \lp 0 \rp \rp. 
    \end{eqnarray}
We now look at stationary points equations obtained after taking $ \q_2^{(s)} $ and $c_x$ derivatives. First we find
\begin{eqnarray}\label{eq:4skap16} 
\hspace{-.0in} \frac{d\bar{\psi}_{rd}^{(4,s)}(\p,\q,\c,\gamma_{sq})}{d \q_2^{(s)}}
   & \rightarrow  &
   \frac{1}{2}
(\p_2-1)  \lp \c_3^{(s)}-1\rp
 +
   2   \c_3^{(s)}   \frac{ e^{-2 \lp \q_2^{(s)}      \c_3^{(s)}   +  c_x       \rp } }{\lp
 1+ e^{-2 \lp \q_2^{(s)}      \c_3^{(s)}   +  c_x       \rp } \rp  }    
   \nonumber \\   
&   \rightarrow  &
   \frac{1}{2}
(\p_2-1)  \lp \c_3^{(s)}-1\rp
 +
   2   \c_3^{(s)}   e^{-2 \lp \q_2^{(s)}      \c_3^{(s)}   +  c_x       \rp  }   , 
    \end{eqnarray}
and then
\begin{eqnarray}\label{eq:4skap17} 
\hspace{-.0in} \frac{d\bar{\psi}_{rd}^{(4,s)}(\p,\q,\c,\gamma_{sq})}{d c_x}
   & \rightarrow  &
   \frac{1}{2}
(\p_3-1) 
 +
   2     \frac{ e^{-2 \lp \q_2^{(s)}      \c_3^{(s)}   +  c_x       \rp } }{\lp
 1+ e^{-2 \lp \q_2^{(s)}      \c_3^{(s)}   +  c_x       \rp } \rp  }  
 \nonumber \\
    & \rightarrow  &
   \frac{1}{2}
(\p_3-1) 
 +
   2      e^{-2 \lp \q_2^{(s)}      \c_3^{(s)}   +  c_x       \rp }   .
    \end{eqnarray}
Keeping in mind the earlier mentioned scaling, we set $1-\p_2\rightarrow\frac{p_2\alpha}{-\log\lp \alpha \rp}$, $\p_2-\p_4\rightarrow \frac{p_3\alpha}{-\log\lp \alpha \rp}$, $\q_2 \rightarrow -q_2\log\lp \alpha \rp $, and  $\c_x \rightarrow -\bar{c}_x\log\lp \alpha \rp $. After equaling the derivative in (\ref{eq:4skap16} )  to zero  we have
\begin{eqnarray}\label{eq:4skap18} 
    \q_2^{(s)}      \c_3^{(s)}   +  c_x      
    \rightarrow
-  \frac{1}{2}
 \log \lp   \frac{1}{4}
(1-\p_2)  \frac{\c_3^{(s)}-1} {\c_3^{(s)} } \rp
\rightarrow  - \frac{1}{2}
 \log \lp   \frac{1}{4}
\frac{p_2\alpha}{-\log\lp \alpha \rp} \frac{\c_3^{(s)}-1} {\c_3^{(s)} } \rp
\rightarrow 
-\frac{1}{2}
 \log \lp 
\alpha \rp,
    \end{eqnarray}
    and
\begin{eqnarray}\label{eq:4skap19} 
   -\lp  q_2   \c_3^{(s)}   +  \bar{c}_x \rp   \log \lp 
\alpha \rp  
 \rightarrow 
-\frac{1}{2}
 \log \lp 
\alpha \rp,
    \end{eqnarray}
Also, after equaling the derivative in (\ref{eq:4skap17} )  to zero  we have
\begin{eqnarray}\label{eq:4skap20} 
    \q_2^{(s)}      \c_3^{(s)}   +  c_x      
    \rightarrow
 - \frac{1}{2}
 \log \lp   \frac{1}{4}
(1-\p_3)  \rp
  \rightarrow
 -  \frac{1}{2}
 \log \lp   \frac{1}{4}
\frac{(p_2-p_3)\alpha}{-\log\lp \alpha \rp}   \rp
\rightarrow 
-\frac{1}{2}
 \log \lp 
\alpha \rp,
    \end{eqnarray}
    and
\begin{eqnarray}\label{eq:4skap21} 
   -\lp  q_2     \c_3^{(s)}   +  \bar{c}_x \rp       \log \lp 
\alpha \rp
 \rightarrow 
-\frac{1}{2}
 \log \lp 
\alpha \rp,
    \end{eqnarray}
In other words, on leading terms level, the above derivatives produce redundant stationary points equations. While we have no way of knowing if the utilized scaling is appropriate or if other scalings and/or finer analyses of higher order terms might change things, the above could indicate that not much of a substantial improvement is expected as one moves from the third to the fourth level. It is also worth noting that the above  seems to be in an agreement with \cite{BarbAKZ23} where for $\alpha\rightarrow 0$ the 1RSB local entropy approach (which, as demonstrated in previous section, produces the same results as our third level of lifting) was shown to match rigorously proven contiguity results of the planted model.

\section{Computational and practical aspects}
\label{sec:comp}

\subsection{Practicality of numerical evaluations}
\label{sec:pracnumeval}

We here point out some important aspects related to numerical evaluations that may not be easily visible through the above results. First, the results of both previous sections indeed provide a generic strategy to evaluate potential estimates of algorithmic thresholds. However, even if one is equipped with all the needed analytical characterizations, a gigantic task might still await as numerical solving of the obtained stationary points equations is not guaranteed to be easy. A couple of potential obstacles immediately come to mind: (\textbf{\emph{i}}) increasing $r$ increases the number of nested integrations which all but assures memory problems; and  (\textbf{\emph{ii}}) opting for iterative schemes more often than not might result in procedures being stuck in local optima which coupled with potentially fairly slow convergence could be unsurpassable. With all of this in mind, we found as most convenient to circumvent  systematic solving approaches and instead focus on ad-hoc heuristics. Similarly to what was done in \cite{Stojnicalgbp25}, we used a semi-manual approach. First we  randomly chose a subset of variables to optimize and then depending on what such a procedure gives we manually move some of the other ones and choose another set candidates for the following optimization. When doing the subset optimization we often avoid reaching full precision to potentially prevent being stuck in unfavorable local optima. While no structural formalization of this procedure is possible, it still offers a useful advantage. Extensive repetitions and interactive tryouts allow one to gain a bit of a feeling regarding the behavior of optimizing parameters and their impact on the overall flow of the procedure. This helps avoiding local optima and enables a more accurate understanding of stationary points. For example, we believe that even if multiple stationary points are present their associated  $\alpha_a$'s are either close to the ones given in our tables or sufficiently  far away that can easily be discarded. Also, potential parametric inaccuracies might have negligible effects on $\alpha_c^{(r)}$ (this is particularly useful as convergence is often slow which renders achieving full parametric accuracy infeasible).

It is also important to note that at present conducting numerical evaluations to high precision is not the most essential component of the proposed machinery but rather a helping tool that allows its practical realization. Right now the proposed ideas are of conjectural nature which renders results obtained through numerical procedures more relevant than the procedures themselves. However, if the rigorous confirmations are eventually achieved  then development of provably efficient numerical procedures will become of utmost importance. Whether creating a systematic approach guaranteed to solve the problem for any $r$ is possible remains an open problem. The memory requirements together with convergence might pose a serious challenge in that direction.

\subsection{A concrete SBP algorithm}
\label{sec:algimp}

In an ideal scenario, the ultimate goal of any SBP consideration is to eventually have an efficient (say polynomial) algorithm that can solve the problem for constraints densities as close to satisfiability threshold as possible. As the above analytical machinery and previous results available in the literature strongly suggest that SBP likely exhibits a computational gap, approaching algorithmic threshold seems the most one can expect. At present we are unaware of any algorithmic dynamics that can fully match the analytical predictions discussed in previous sections. On the other hand, these predictions seem fairly connected to the ones obtained for ABP in \cite{Stojnicalgbp25} and for the negative Hopfield model in \cite{Stojniccluphop25}. As   \cite{Stojnicalgbp25,Stojniccluphop25} also designed CLuP like algorithms with performances fairly closely approaching  proposed theoretical results, one may wonder whether analogous SBP designs are possible as well.  With the introduction of a CLuP-SBP algorithmic variant, we below provide a positive answer to this question.

\subsubsection{CLuP-SBP}
\label{sec:clupabp}

CLuP  (\emph{controlled loosening-up})  algorithms were introduced   for solving \emph{planted} models in \cite{Stojnicclupint19,Stojnicclupspreg20}. Considering two classical  regression problems (with binary and sparse unknown planted vectors), \cite{Stojnicclupint19,Stojnicclupspreg20}   showed that CLuP retains high accuracy while avoiding some of the potentially unwanted  features (excessive reliance of planted signal's a priori available knowledge, statistical sensitivity,  and super large dimensions) of AMP and other existing alternatives. In \cite{Stojniccluphop25,Stojnicclupsk25,Stojnicalgbp25} CLuP variants particularly tailored to fit positive/negative Hopfield models and ABPs widened the range of applicability to include the  non-planted scenarios as well. We here show that a successful CLuP variant tailored for SBP can be designed as well. To that end, we first note that, as in \cite{Stojnicalgbp25}, we are now facing the following feasibility problem (this is in a stark contrast with classical optimal objective seeking problems considered  in \cite{Stojnicclupint19,Stojnicclupspreg20,Stojniccluphop25,Stojnicclupsk25})
\begin{eqnarray}
 \mbox{find} & & \x\nonumber \\
\mbox{subject to}
& & |G\x|  \leq \kappa \1 \nonumber \\
& & \x\in\left \{-\frac{1}{\sqrt{n}}, \frac{1}{\sqrt{n}} \right \}^n . \label{eq:clupex1}
\end{eqnarray}
Following into the footsteps of \cite{Stojnicalgbp25}, we propose an iterative procedure that we call
\begin{eqnarray}\label{eq:algimpeq2}
 \hspace{-.55in} \mbox{\bl{\textbf{\emph{CLuP-SBP algorithm:}}}}  \hspace{.65in} \x^{(t+1)} & \rightarrow  &
\mbox{\textbf{gradbar}}\lp\bar{f}_{b,x} \lp \x;\bar{t}_{0x}^{(t)} \rp ;\x^{(t)},\bar{t}_{0x}^{(t)} \rp
 \nonumber \\
\bar{t}_{0x}^{(t+1)}  &  \rightarrow  & \bar{c}^{(t)}\bar{t}_{0x}^{(t)}.
\end{eqnarray}
We take $\hat{\kappa} = \max  \left | G \frac{1}{\sqrt{n}} \mbox{sign}\lp \x^{(t)}\rp  \right |$ and the corresponding $\frac{1}{\sqrt{n}} \mbox{sign}\lp \x^{(t)}\rp$ as the output of the algorithm. For function $\bar{f}_{b,x} \lp \x;\bar{t}_{0x}^{(t)}  \rp $ specified by argument $\bar{t}_{0x}^{(t)}$
\begin{eqnarray}\label{eq:algimpeq3}
\bar{f}_{b,x} \lp\x;\bar{t}_{0x}^{(t)}\rp = - \bar{t}_{0x}^{(t)} \|\x\|_2 - \frac{1}{m}\sum_{j=1}^{m}\log\lp \kappa_0^2 -\lp G_{j,1:n} \x\rp^2 \rp
-\frac{1}{n}\sum_{i=1}^{n}  \log(1-n\x_i^2),
\end{eqnarray}
procedure $\mbox{\textbf{gradbar}}\lp\bar{f}_{b,x} \lp \x;\bar{t}_{0x}^{(t)} \rp ;\x^{(t)},\bar{t}_{0x}^{(t)} \rp$ applies a form of gradient descent starting from $\x^{(t)}$.  The optimization in (\ref{eq:algimpeq3}) is conceived as the ABP analogous one in \cite{Stojnicalgbp25}.  Not much of a restriction is now imposed on $\kappa_0$ and  parameters $t_{0x}^{(0)}$, $c^{(t)}$. An ideal choice would ensure that the objective in (\ref{eq:algimpeq3}) has a favorable  optimizing landscape with minimal number of local optima. However, practically reaching such a choice is delicate and no generic recipe seems to guarantee success (change of ambient dimension $n$ and/or  constraints density $\alpha$ might be among the reasons). Similarly to what was the case with CLuP-ABP, our experience here suggests retuning these parameters on fly as best strategy. Even though overall universality may be unreachable, a couple of quick pointers might be useful: (\textbf{\emph{i}})  favorable $\kappa_0$ seems to increase as $\alpha$ increases; and  (\textbf{\emph{ii}}) taking $c^{(t)} =1.1$ and $t_{0x}^{(0)}=1.3$ could serve as a good starting choice. Also, starting $\x^{(0)}$ is any $\x$ that fits under $\log$s. One can start with a random choice from $\left \{\pm\frac{1}{\sqrt{n}}\right \}^n$ and then scale down by two until feasibility is reached. Restarting for different $\x^{(0)}$ is critically important. A large number of restarts usually prevents applying algorithm to large dimensional problems accros the entire $\kappa\in[0,3]$ range. However, we were able to apply it to moderate dimensions in higher $\kappa$ regime and then to successively increase dimensions as $\kappa$ decreases. The obtained results are shown in Figure  \ref{fig:fig1}.  To ensure faithful polynomiality emulation, the number of restarts never went over $2000$ (typically, it actually remained in the range of a few hundreds). We should also point out that dealing with feasibility problems inevitably brings the difficulty of adequate success emulation. For fixed $\alpha$ we plot $\mE \hat{\kappa}$ which allows that these inadequacies are circumvented. Nonetheless,  simulated results closely mimic the proposed theoretical threshold curve over a wide range of $\kappa$. The algorithm may not be very practical without further upgrades (particularly so in the higher $\kappa$ regime). However, it does strengthen belief that the proposed algorithmic thresholds might indeed be very close (if not identical) to the true ones.

 \begin{figure}[h]
\centering
\centerline{\includegraphics[width=1.00\linewidth]{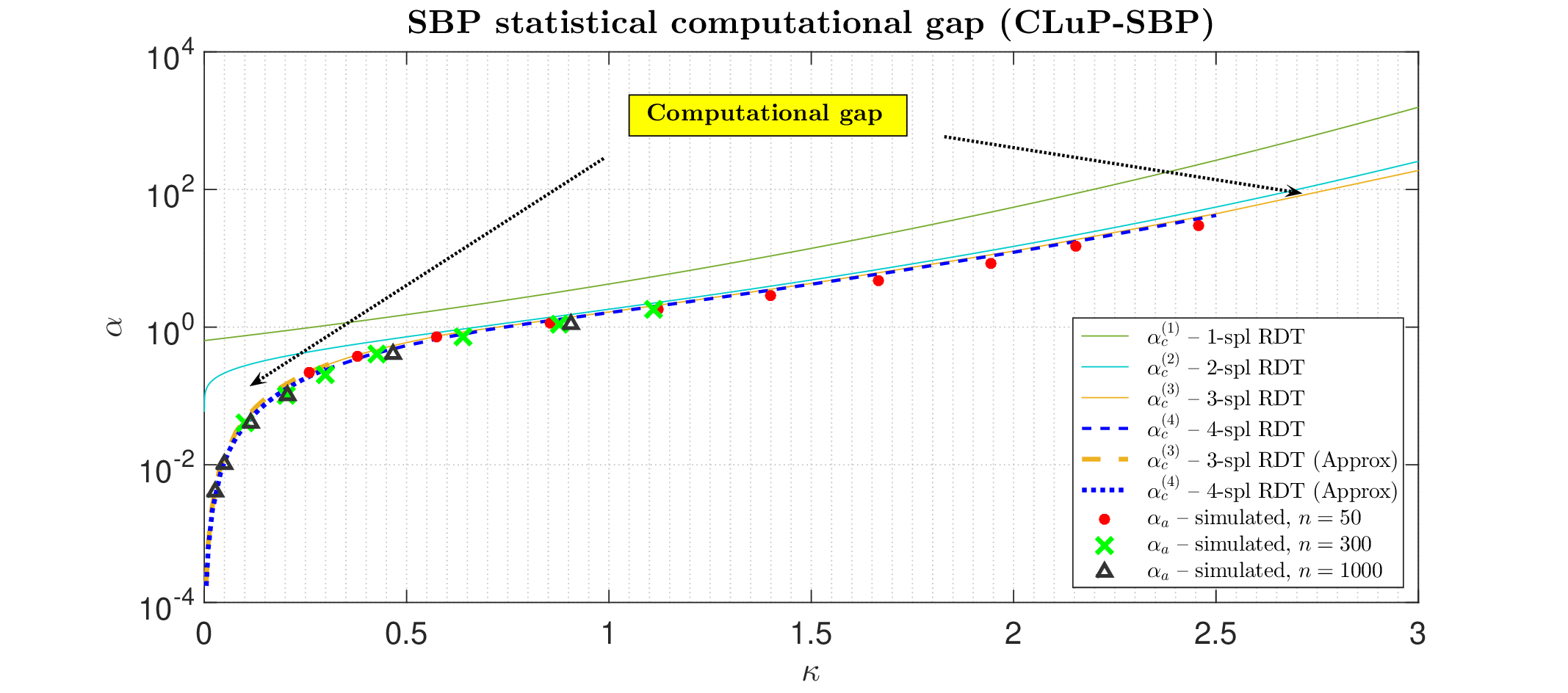}}
\caption{SBP algorithmic threshold: fl-RDT theoretical prediction and CLuP-SBP algorithmic simulation}
\label{fig:fig1}
\end{figure}

\section{Algorithmic implications}
\label{sec:algimp}

Whether local entropy indeed relates to algorithmic thresholds and SCGs remains a mystery at present. Equally mysterious remains the connection between our approach and SCGs. Below we emphasize and summarize two types of striking similarities observed so far that may motivate a very generic applicability of the methods discussed in earlier sections.

\subsection{Striking feasibility problems similarities}
\label{sec:algimpfeas}

Remarkable agreements between our results and local entropy considerations combined with existence of polynomial algorithms in ranges close to predicted algorithmic thresholds seem as a bit more than just a mere coincidence. As stated earlier on multiple occasions, the very same behavior that we are seeing here was observed when studying ABP in \cite{Stojnicalgbp25} as well. To make the parallel
clearer, we briefly summarize key aspects of both ABP and corresponding SBP considerations (for concreteness, we focus on canonical instances $\kappa=0$ for ABP and $\kappa=1$ for SBP). ABP is believed  to exhibit a statistical-computational gap as the best known polynomial algorithms can solve (canonical $\kappa=0$) ABP instances for constraints densities up to $\sim 0.75-0.77$ which falls short of the satisfiability threshold $\approx 0.8331$ \cite{KraMez89,DingSun19,NakSun23,BoltNakSunXu22,Huang24,Stojnicbinperflrdt23}. Moreover, local entropy studies of atypical ABP solutions \cite{Bald15,Bald16,Bald21,Stojnicabple25}   point towards $\alpha$ interval $(0.77,0.78)$ as the range where clustering defragmentation (envisioned as a likely cause for failure of locally improving algorithms) happens. The parametric fl-RDT approach proposed in \cite{Stojnicalgbp25} gives $\approx 0.7764$ as algorithmic threshold estimate on the fifth level of lifting  and predicts that the true value is somewhere in $0.775-0.776$ interval. In light of these ABP considerations, a completely analogous scenario seems to be happening for SBP as well. The local entropy study of SBP's atypical solutions (basically SBP analogue to ABP related  \cite{Bald15,Bald16,Bald21}) is presented in  \cite{Bald20,BarbAKZ23}. Utilizing replica methods (with 1RSB ansatz),  \cite{Bald20}  obtains (for canonical $\kappa=1$ scenario) $\alpha\sim 1.58$ as the critical constraints  density estimate where clustering defragmentation happens (this is well short of SBP's satisfiability threshold $\approx
1.8159$ obtained earlier in  \cite{AubPerZde19,GamKizPerXu22,PerkXu21,AbbLiSly21a,AbbLiSly21b,Bald20,Barb24,BarbAKZ23} and matched here on the second level of lifting with $\alpha_c^{(2)}(1)$). As shown in Table \ref{tab:tab3}, our parametric fl-RDT approach gives on the seventh level of lifting $\alpha_c^{(7)}(1)\approx 1.6021$. Given somewhat slower convergence than in ABP case, we believe that one has for the converging value  $\alpha_c^{(\infty)}(1) \sim 1.59-1.60$ which remarkably closely approaches the local entropy prediction of  \cite{Bald20}.  The results of  \cite{Bald20} are further extended in  \cite{BarbAKZ23} through both  mathematically rigorous and replica approaches. Particular attention is devoted to $\alpha,\kappa\rightarrow 0$ regime where earlier OGP studies \cite{GamKizPerXu22} hinted at the existence of a computational gap with a $\alpha,\kappa$ dependence ($\kappa\sim\sqrt{\frac{\alpha}{-\log\lp \alpha\rp}}$) that qualitatively deviates from the one associated with satisfiability thresholds. Modulo types of clustering, both contiguity approach of a planted model and 1RSB analysis of local entropies gave in  \cite{BarbAKZ23} $\kappa\approx 1.2385\sqrt{\frac{\alpha}{-\log\lp \alpha\rp}}$ as the functional dependence at the so-called energetic algorithmic threshold for $\alpha,\kappa\rightarrow 0$ regime (which indeed substantially deviates from the satisfiability threshold, qualitatively matches \cite{GamKizPerXu22} scaling, and is also away from $\kappa\sim \sqrt{\alpha} $ dependence where best known algorithms succeed \cite{BanSpen20}). Our results on the third lifting level fully match $1.2385\sqrt{\frac{\alpha}{-\log\lp \alpha\rp}}$ (moreover, our fourth level considerations might suggest that not much of a significant improving happens beyond the third level (possibly in agreement with a conjecture of \cite{BarbAKZ23}  that in $\alpha,\kappa\rightarrow 0$ regime their 1RSB analyses might suffice)).

On top of the above analytical similarities, the algorithmic ones seem to be present as well. In \cite{Stojnicalgbp25} the performance of CLuP-ABP  was shown to closely approach the ABP's algorithmic threshold predictions. As Figure \ref{fig:fig1} shows, performance of analogous CLuP-SBP introduced here closely approaches the fourth lifting level  estimate $\alpha_c^{(4)}(\kappa)$ over a wide range of $\kappa$.

\subsection{Striking optimal objective problems similarities}
\label{sec:algimpopt}

Both SBP and ABP discussed above are feasibility problems. However, the trend observed for them seems to extend to standard optimal objective seeking optimizations as well. In  \cite{Stojniccluphop25} positive and negative Hopfield models (Hop+ and Hop-) were studied and CLuP variants are designed to study their GSEs.  In both scenarios excellent algorithmic performance that closely approaches theoretical predictions is observed. For Hop+ the absence of SCG is expected and \cite{Stojniccluphop25}'s algorithmic results indicate that such expectation might indeed be correct. Similar proximity of practical performance and theoretical predictions is observed for Hop- as well. However, differently from Hop+, the corresponding Hop- theoretical predictions assume arbitrary (non-necessarily decreasing) $\c$ sequence ordering which is not physical and in a way resembles what happens here in the SBP context and in \cite{Stojnicalgbp25} in the ABP context (the associated Gibbs measures (with or without $\c$ sequence ordering) of Hop- are discontinuous and the SCG is expected). While  the higher lifting level results of \cite{Stojniccluphop25}  seem to be either precisely the algorithmic thresholds or their close approximates, the 2-sfl-RDT results (the highest RDT lifting level where decreasing property of $\c$ sequence still holds) are likely the theoretical GSE values (reachable with an infinite computational power). As numerical difference between the second and higher level estimates is insignificant, predicated existence of Hop- SCG seems more as a formality than a feature of practical relevance. However, as such difference for both ABP and (particularly) SBP is more significant, the postulated existence of SCG in these problems represents a very relevant algorithmic obstacle.

In addition to the above discussed Hopfield models, similar parallels seem to appear in SK models. A strong progress in algorithmic studying of SK models was recently achieved in \cite{Montanari19} where Montanari showed that the GSE of the pure $2$-spin SK Ising model can be computed  in polynomial time  via IAMP (an incremental modification of AMP) provided that the associated Parisi functional is continuously increasing (several other fast algorithms achieved similar performance and effectively reaffirmed that the classical SK's GSE is computable in polynomial time \cite{Dandietal25,Erba24,Boet05,Das25,Stojnicclupsk25}). In contrast  with the $2$-spin SK model (which is expected to be solvable in polynomial time), higher $p$-spin variants might not be. Employing the same IAMP  \cite{ElAlMont20} demonstrated that an SCG is indeed likely to happen already for $p=3$ (practically speaking, the gap is rather small, but its existence brings an important theoretical value). Furthermore, \cite{ElAlMont20} also demonstrated that the IAMP's performance seems to fairly well match  the virtual GSE estimate obtained with the restrictive  nondecreasing (physical) nature of Parisi functional removed. Despite the fact that here (and in \cite{Stojniccluphop25,Stojnicalgbp25})  the realm of operation is different (instead of PDE and associated functionals, fl-RDT and sequences of parameters are considered and ABP and SBP are feasibility problems rather than optimal objective seeking ones), we believe that the observed parallels are noteworthy.

\subsection{Possible consequences}
\label{sec:csq}

While it remains possible that our propositions coincidentally happen to be in agrement with local entropy SCG predictions, the above similarities seem to point in a different direction. Morever, not only do we believe that the agreements are not coincidental, we actually believe that our propositions might extend far beyond SBP, ABP, and Hopfild examples discussed so far.  We formalize all of the above in the following SBP algorithmic threshold conjecture.

\begin{conjecture}[SBP algorithmic threshold] 
\label{thm:conj1} 
Let $G\in\mR^{m\times n}$ be comprised of independent standard normals and for constraints density $\alpha=\lim_{n\rightarrow \infty } \frac{m}{n} $ and $\kappa\in\mR_+$
  consider a statistical SBP $\mathbf{\mathcal S} \lp G,\kappa,\alpha \rp$ from (\ref{eq:ex1a0}). Define its \underline{algorithmic} threshold as
\begin{eqnarray}\label{eq:alphaa}
  \alpha_a (\kappa) \triangleq   \max   \left \{\alpha |\hspace{.05in}  \lim_{n\rightarrow \infty}\mP\left (   \mbox{$\mathbf{\mathcal S} \lp G,\kappa,\alpha \rp$  is solvable in polynomial time}   \right ) =1 \right \}.
 \end{eqnarray}
Let $ \bar{\psi}_{rd}^{(r)} (\cdot)\triangleq  \bar{\psi}_{rd} (\cdot)$  with $\bar{\psi}_{rd} (\cdot)$ as in  (\ref{eq:negprac13}) and $\hat{\p}$, $\hat{\q}$, $\hat{\c}$, and $\hat{\gamma}_{sq}$ as in Theorem   \ref{thme:negthmprac1}. Then there is a non-increasing converging $r$-sequence
\begin{eqnarray}\label{eq:alphacr}
  \alpha_c^{(r)} (\kappa) =  \left \{ \alpha  \hspace{.0in} |  \hspace{.05in}  \bar{\psi}_{rd}^{(r)} (\hat{\p},\hat{\q},\hat{\c},\hat{\gamma}_{sq}) = 0 \right \},
 \end{eqnarray}
such that
 \begin{eqnarray}\label{eq:conjub}
 \alpha_c^{(r)} (\kappa) \geq \alpha_a (\kappa) \quad \mbox{and} \quad    \lim_{r\rightarrow \infty}\alpha_c^{(r)} (\kappa) \triangleq \hat{\alpha}_a (\kappa) = \alpha_a (\kappa).
 \end{eqnarray}
Since  $\alpha_c(\kappa)=\alpha_c^{(2)}(\kappa)$ one then has for the SBP's statistical computational gap (SCG)
\begin{eqnarray}\label{eq:scg}
 SCG = \alpha_c (\kappa) - \alpha_a (\kappa) = \alpha_c^{(2)} (\kappa) -  \lim_{r\rightarrow \infty}\alpha_c^{(r)} (\kappa) .
 \end{eqnarray}
\end{conjecture}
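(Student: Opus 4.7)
The plan is to split the conjecture into three logically separate claims and attack them in turn: (a) monotonicity and convergence of the sequence $\alpha_c^{(r)}(\kappa)$; (b) the upper bound $\alpha_c^{(r)}(\kappa)\geq \alpha_a(\kappa)$ for every finite $r$; and (c) the matching lower bound $\lim_{r\to\infty}\alpha_c^{(r)}(\kappa)\leq \alpha_a(\kappa)$. The identity for $SCG$ would then follow immediately from combining these with the already-established satisfiability equality $\alpha_c(\kappa)=\alpha_c^{(2)}(\kappa)$.

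For part (a), I would exploit the variational structure of $\bar{\psi}_{rd}^{(r)}$ and the lifting embedding: any admissible parameter choice at level $r$ can be extended to level $r{+}1$ by inserting $\hat{\p}_{r+1}=\hat{\q}_{r+1}$ equal to an adjacent value and setting the new $\hat{\c}_{r+1}^{(s)}$ so that the two adjacent blocks merge. Because we are in the \emph{unrestricted}-$\c$ regime, this extension is admissible and gives the same value of $\bar{\psi}_{rd}$, showing that the level-$(r{+}1)$ stationary value is no larger than the level-$r$ one; hence the zero-level set defining $\alpha_c^{(r)}$ only grows with $r$, and the critical $\alpha$ is non-increasing. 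Convergence then follows from monotonicity and the trivial lower bound $\alpha_c^{(r)}(\kappa)\geq 0$. The numerical tables suggest the sequence is actually Cauchy with geometric-like rate, which one could try to certify via a quantitative Lipschitz bound on the map $r\mapsto \bar{\psi}_{rd}^{(r)}$.

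For part (b) I would design and analyze a CLuP-SBP-type iteration (as in Section 7.2) that mirrors the nested structure of the fl-RDT parametrization. The key step is to show that each level of the iteration can be tracked by a state evolution whose fixed points coincide with the stationary points of $\bar{\psi}_{rd}^{(r)}$ in the \emph{unrestricted}-$\c$ regime, so that the algorithm succeeds for any $\alpha<\alpha_c^{(r)}(\kappa)$. Here I would lean on the decoupling mechanism already embedded in (\ref{eq:prac1})--(\ref{eq:prac10}): the same Gaussian comparison inequalities that validate Theorem~\ref{thm:thmsflrdt1} should provide a concentration-based analysis of the gradient-barrier step in (\ref{eq:algimpeq3}). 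For part (c), which I expect to be by far the hardest, the natural route is a low-degree/OGP-style argument: show that whenever $\alpha>\lim_{r\to\infty}\alpha_c^{(r)}(\kappa)$ the solution geometry exhibits a clustering obstruction incompatible with any polynomial-time procedure from a suitably broad class (stable algorithms in the sense of \cite{GamKizPerXu22}, or Lipschitz-local algorithms as in \cite{Gamar21}), and that this obstruction is equivalent to the $\c$-ordering change predicted by fl-RDT. The scaling match with \cite{BarbAKZ23} and \cite{GamKizPerXu22} at $\alpha,\kappa\to 0$ is the main evidence that such an equivalence should hold.

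The main obstacle is part (c): proving a matching hardness bound against \emph{all} polynomial algorithms is presently out of reach even for SBP in isolation, so realistically one would first aim at a conditional statement — hardness within a restricted algorithmic class (stable, low-degree polynomial, or AMP-like) — and only then conjecture universality. A second, more technical obstacle is making the $r\to\infty$ limit rigorous: the nested integrals in $\bar{\psi}_{rd}^{(r)}$ become computationally and analytically unwieldy as $r$ grows, and one needs a compactness/continuity argument (perhaps via the modulo-$\m$ reformulation of \cite{Stojnicflrdt23}) to exchange $\lim_{r\to\infty}$ with the stationarity conditions defining $\alpha_c^{(r)}$. A cleaner reformulation in terms of a Parisi-type functional PDE, analogous to what was done for SK in \cite{Montanari19,AlaouiMS22}, would likely be the decisive step that turns the present numerical evidence into a rigorous proof.
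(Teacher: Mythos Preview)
The statement you are trying to prove is labeled a \emph{Conjecture} in the paper, and the paper does not provide a proof of it. The surrounding text makes this explicit: the authors write that they ``formalize all of the above in the following SBP algorithmic threshold conjecture'' and describe the content as something they ``believe'' based on numerical evidence (Tables~\ref{tab:tab1}--\ref{tab:tab3}), the agreement with local-entropy predictions of \cite{Bald20,BarbAKZ23}, the scaling match with OGP results \cite{GamKizPerXu22}, and the CLuP-SBP simulations in Figure~\ref{fig:fig1}. There is no proof to compare your proposal against.

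Your proposal is best read as a research outline rather than a proof, and you essentially acknowledge this yourself. Part (a) is plausible and close in spirit to how the paper argues informally (level-$r$ embedding into level-$(r{+}1)$), though note that with \emph{unrestricted} $\c$ the stationarity is of maximization type (as the paper observes via the modulo-$\m$ check), so the direction of the monotonicity argument needs care. Part (b) would require a rigorous state-evolution analysis of CLuP-SBP tied to the fl-RDT stationary points, which the paper does not supply and which is not currently available. Part (c) you correctly flag as out of reach: proving hardness against all polynomial-time algorithms is an open problem even for restricted classes, and the paper makes no claim to have resolved it. In short, there is no gap relative to the paper because the paper never claims a proof; your outline is a reasonable sketch of what a proof program might look like, with the honest admission that the decisive step is open.
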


The conjecture essentially states that within the fl-RDT for any $\kappa\in\mR_+$ there is a non-increasing $r$-sequence, $ \alpha_c^{(r)} (\kappa)$,  that converges towards algorithmic threshold (the discussions of the preceding sections suggest that in certain $\kappa$ regimes the limiting $r$ could even be finite). On a more practical level the conjecture effectively states that $ \alpha_c^{(r)} (\kappa)$ numerical results from Tables \ref{tab:tab1}-\ref{tab:tab3} establish a series of approximative estimates for $ \hat{\alpha}_a (\kappa)$ which itself is recognized as an algorithmic threshold ($\alpha_a (\kappa)$) candidate.

As stated above, we also believe that there are many other random problems where our propositions might produce results similar to those obtained here and in \cite{Stojniccluphop25,Stojnicalgbp25}. If true, this would
suggest that our results could likely be consequences of more universal principles common for a variety of random problems types. In fact, we actually believe that they might be a consequence of a generic parametric fl-RDT property. The following conjecture provides a possible summary of these developments.

\begin{conjecture}[Parametric fl-RDT algorithmic conjecture]
\label{thm:conj2} 
Assume the setup of Theorem \ref{thm:thmsflrdt1}. Define \underline{algorithmically} achievable $\psi_{rp,a}$ as
\begin{eqnarray}\label{eq:cjgen1}
\psi_{rp,a} \triangleq   \max   \left \{\psi_{rp} |\hspace{.05in}  \lim_{n\rightarrow \infty}\mP\left ( \mbox{$\psi_{rp}$ can be achieved in polynomial time}   \right ) =1 \right \}.
 \end{eqnarray}
Let $  \psi_{rd}^{(r)} (\cdot)\triangleq  \psi_{rd} (\cdot)$  with $\psi_{rd} (\cdot)$ as in  (\ref{eq:thmsflrdt2eq1a0}) and $\hat{\p}$, $\hat{\q}$, and $\hat{\c}$ as in Theorem \ref{thm:thmsflrdt1}. 
Then there is an $r$-sequence $  \psi_{rd}^{(r)} (\cdot)$  with \underline{decreasing} $\hat{\c}^>$ sequence such that
\begin{eqnarray}\label{eq:cjgen2}
\psi_{rp} = \lim_{r\rightarrow \infty}\psi_{rd}^{(r)}(\hat{\p},\hat{\q},\hat{\c}^>).
 \end{eqnarray}
\begin{itemize}
  \item If the sequence is such that there are no subintervals of $[0,1]$ with no elements of $\hat{\p}$ and $\hat{\q}$   then there is no computational gap, i.e.,
\begin{eqnarray}\label{eq:cjgen2}
\psi_{rp} = \lim_{r\rightarrow \infty}\psi_{rd}^{(r)}(\hat{\p},\hat{\q},\hat{\c}^>)=\psi_{rp,a} \quad \mbox{and}\quad
SCG =  \psi_{rp} -\psi_{rp,a}  =0 .
 \end{eqnarray}
  \item If the above does not hold then there is an $r$-sequence $  \psi_{rd}^{(r)} (\cdot)$  with \underline{arbitrarily ordered} (not necessarily decreasing) $\hat{\c}^{\sim}$ sequence such that
\begin{eqnarray}\label{eq:cjgen3}
\psi_{rd}^{(r)}(\hat{\p},\hat{\q},\hat{\c}^{\sim})\geq \psi_{rp,a} \quad \mbox{and} \quad  \lim_{r\rightarrow \infty}\psi_{rd}^{(r)}(\hat{\p},\hat{\q},\hat{\c}^{\sim}) = \hat{\psi}_{rd}=\psi_{rp,a} ,
 \end{eqnarray}
 and computational gap
\begin{eqnarray}\label{eq:cjgen3}
SCG = \psi_{rp} -\psi_{rp,a}  =   \lim_{r\rightarrow \infty}\psi_{rd}^{(r)}(\hat{\p},\hat{\q},\hat{\c}^{>}) 
- 
\lim_{r\rightarrow \infty}\psi_{rd}^{(r)}(\hat{\p},\hat{\q},\hat{\c}^{\sim}) .
 \end{eqnarray}

\end{itemize}
\end{conjecture}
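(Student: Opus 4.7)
The plan is to attack Conjecture \ref{thm:conj2} by decomposing each claimed identity into matching \emph{upper} and \emph{lower} algorithmic bounds, exploiting the fact that the ordering of the $\c$-sequence acts as a structural proxy for the topology of the Gibbs measure induced by $\psi_{rp}$. First I would invoke the full sfl-RDT frame of Theorem \ref{thm:thmsflrdt1} together with its RSB-matching counterpart in \cite{Stojnicsgenprovrsb23} to set $\psi_{rp}=\lim_{r\to\infty}\psi_{rd}^{(r)}(\hat{\p},\hat{\q},\hat{\c}^{>})$ as the rigorously established reference value, so that the conjecture reduces to (i) identifying a structural dichotomy on the pair $(\hat{\p},\hat{\q})$ that separates the no-gap from the gap regime, and (ii) producing, on each side of the dichotomy, an efficient algorithm whose performance provably attains the corresponding dual value.

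For the \textbf{no-gap} branch I would mimic Montanari's IAMP reduction for the SK model \cite{Montanari19}: the density hypothesis -- no subinterval of $[0,1]$ empty of $\hat{\p},\hat{\q}$ -- should translate into a Parisi-like functional with a strictly increasing overlap profile, and hence into an incremental message-passing scheme interpolating the overlap from $0$ to $1$ without jumps. The stationarity equations in Theorem \ref{thm:thmsflrdt1} would then be read as the state-evolution fixed-point equations for that scheme, giving $\psi_{rp,a}\geq \lim_{r}\psi_{rd}^{(r)}(\hat{\p},\hat{\q},\hat{\c}^{>})$, while the reverse inequality is automatic. The technical effort here is in converting the combinatorial ``density of support'' hypothesis into a quantitative continuity estimate on the auxiliary Parisi functional sufficient to run an IAMP-style interpolation.

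For the \textbf{gap} branch I would proceed in two halves. The upper bound $\psi_{rd}^{(r)}(\hat{\p},\hat{\q},\hat{\c}^{\sim})\geq \psi_{rp,a}$ asks for an efficient algorithm whose performance is benchmarked by the \emph{unordered} dual. The CLuP constructions of Section \ref{sec:algimp}, and their ABP and Hopfield analogues in \cite{Stojnicalgbp25,Stojniccluphop25}, provide a concrete template whose empirical performance already tracks these predictions; the analytical proof would proceed via a state-evolution recursion for CLuP-type iterates whose terminal value can be shown, by matching the stationarity system of Corollary \ref{thme:negthmprac1}, to coincide with $\psi_{rd}^{(r)}(\hat{\p},\hat{\q},\hat{\c}^{\sim})$ once the monotonicity restriction on $\hat{\c}$ is dropped. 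The opposite inequality $\lim_{r}\psi_{rd}^{(r)}(\hat{\p},\hat{\q},\hat{\c}^{\sim})\leq \psi_{rp,a}$ is the genuine hardness content of the conjecture: it must assert that \emph{no} polynomial-time procedure can beat the unordered dual, and my plan would be to translate the violation of $\c$-monotonicity into an explicit forbidden overlap pattern and then apply OGP-implies-failure theorems in the style of \cite{GamKizPerXu22,Wein22}, combined with low-degree polynomial hardness certificates where available.

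The hard part will be precisely this last lower bound. A fully unconditional proof would demand a class-independent obstruction to polynomial-time improvement, which is at least as inaccessible as the long-standing average-case hardness questions subsuming it. Realistically I would therefore first aim at a \emph{conditional} version of Conjecture \ref{thm:conj2}, restricted to structured algorithm classes -- stable Lipschitz algorithms, AMP with constant memory, or low-degree polynomial estimators -- for which OGP-type and low-degree lower bounds already yield matching hardness thresholds. A secondary obstacle is establishing the monotonicity and convergence of the sequence $\alpha_c^{(r)}(\kappa)$ (or its analogue $\psi_{rd}^{(r)}(\hat{\p},\hat{\q},\hat{\c}^{\sim})$) in $r$; I would attempt this by exhibiting a canonical $r\to r+1$ lift that preserves all lower-level stationarity equations with $\hat{\p}_{r+1}=\hat{p}_r$ and $\hat{\q}_{r+1}=\hat{\q}_r$, forcing the $r+1$-level optimum to dominate the $r$-level one and so producing the desired monotone convergence.
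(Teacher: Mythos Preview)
The statement you are attempting to prove is a \emph{conjecture}, not a theorem, and the paper offers no proof of it. It is presented in Section \ref{sec:csq} as a speculative synthesis of the numerical and empirical observations accumulated throughout the paper --- the $\c$-sequence ordering change on higher lifting levels, the numerical agreement of $\alpha_c^{(r)}$ with local entropy predictions of \cite{Bald20,BarbAKZ23}, the qualitative match with OGP scalings of \cite{GamKizPerXu22}, the parallels with ABP \cite{Stojnicalgbp25} and Hopfield \cite{Stojniccluphop25}, and the CLuP-SBP simulations of Figure \ref{fig:fig1}. The paper explicitly flags its status with language such as ``we believe,'' ``might be,'' and ``propositions,'' and the conclusion lists ``developing mathematically rigorous confirmation'' as an open direction. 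There is therefore nothing in the paper against which your proof proposal can be compared.

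That said, your plan is a sensible research program, and you correctly identify its decisive obstruction. The no-gap branch via an IAMP-style interpolation under a density hypothesis on $(\hat{\p},\hat{\q})$ is plausible and in the spirit of \cite{Montanari19,ElAlMont20}, though turning ``no empty subinterval'' into a usable continuity estimate for a Parisi-type functional in the bilinear fl-RDT setting is nontrivial and not addressed in the paper. For the gap branch, the upper bound via CLuP state evolution is exactly the kind of argument the paper's numerics suggest but nowhere establishes. The lower bound --- that no polynomial-time algorithm beats the unordered dual --- is, as you yourself note, at least as hard as unconditional average-case lower bounds; restricting to stable/low-degree/AMP classes is the only currently viable route, and even there the translation from ``$\c$-monotonicity violated'' to a specific forbidden overlap structure usable in an OGP argument is not something the paper supplies or even sketches. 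Your proposal is thus a reasonable outline of what a proof \emph{would} require, but it should be understood as a program toward resolving an open conjecture rather than a proof to be checked against the paper.
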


\section{Conclusion}
\label{sec:conc}

We study SBP storage capacity via  a powerful mathematical engine called \emph{fully lifted random duality theory} (fl-RDT) \cite{Stojnicflrdt23}. Potential algorithmic implications analogous to those observed for ABP in \cite{Stojnicalgbp25} are uncovered. In particular,  a structural parametric change in  fl-RDT  is demonstrated as one progresses through lifting levels. A key parametric fl-RDT component, so-called $\c$ sequence, is shown to have natural (physical) decreasing ordering on the first two lifting levels. However, the phenomenology changes on higher levels and a perfect $\c$ sequence ordering disappears. The very same behavior is observed for ABP in \cite{Stojnicalgbp25} and connected to the change from satisfiability to algorithmic threshold. Following such a strategy here, we first  find that (for canonical $\kappa=1$ scenario) the second lifting level constraints density precisely matches the satisfiability threshold $\alpha_c\approx 1.8159$. The estimate then decreases as one progresses through higher lifting levels and on the seventh level reaches $\alpha\approx 1.6021$ (with predicted convergence in the interval $1.59-1.60$). This is then observed to closely agree with the local entropy clustering defragmentation estimate ($\approx 1.58$ ) obtained via replica methods in \cite{Bald20} and believed to be responsible for failure of fast locally improving algorithms.

In addition to canonical $\kappa=1$ scenario (which allows to discuss concrete numerical values), the results are also extended to a wide range of other $\kappa$ values. A particular attention is then devoted to the so-called low $\alpha,\kappa$ regime where strong progress has been made in recent years through studies of both OGP and local entropy. Our third lifting level results for $\alpha,\kappa\rightarrow 0$ regime are shown to qualitatively match the $\kappa\sim \sqrt{\frac{\alpha}{-\log\lp \alpha\rp}}$ OGP based estimate of \cite{GamKizPerXu22} and to precisely match the corresponding local entropy based estimate  $\kappa\approx 1.2385 \sqrt{\frac{\alpha}{-\log\lp \alpha\rp}}$ of \cite{BarbAKZ23}. Drawing parallel with algorithmic studies of ABPs and negative Hopfield models \cite{Stojnicclupsk25,Stojnicalgbp25}, a CLuP-SBP algorithm is designed and its performance is shown to closely approach theoretical predictions in a wide $\kappa$ range including both small and large $\kappa$ sub-regimes.

Obtained results suggest that the proposed parametric algorithmic phenomenology might be a generic fl-RDT feature and well worth of further exploration. In addition to identification of other optimization problems where the proposed methodologies might be applicable, we single out the follow directions as avenues where progress is preciously needed and likely to provide key insights regarding overall SCGs magic: (\textbf{\emph{i}}) Searching for potential physics related driving force that might be behind computational gaps phenomena;  (\textbf{\emph{ii}}) Developing  mathematically rigorous confirmation that would fully substantiate proposed concepts; (\textbf{\emph{iii}})  Designing algorithmic dynamics that would match the proposed fl-RDT parametrization and achieve the predicated algorithmic thresholds; and (\textbf{\emph{iv}}) Studying numerical aspects  (existence/uniqueness/ convergence, etc.) associated with stationary points equations and their universal algorithmic solvers.

\begin{singlespace}
\bibliographystyle{plain}
\bibliography{nflgscompyxRefs}
\end{singlespace}

\end{document}